\newcommand{\ooea}{$(1 + 1)$-EA\xspace}
\newcommand{\oclea}{$(1 , \lambda)$-EA\xspace}
\newcommand{\moea}{$(\mu + 1)$-EA\xspace}
\newcommand{\saolea}{\text{SA-}$(1, \lambda)$-EA\xspace}
\newcommand{\saoclea}{\text{SA-}$(1, \lambda)$-EA\xspace}
\newcommand{\saoplea}{\text{SA-}$(1+ \lambda)$-EA\xspace}
\newcommand{\saoleaparam}{$(1, \{\lambda/F,F^{1/s}\lambda\})$-EA\xspace}
\newcommand{\OneMax}{\textsc{OneMax}\xspace}
\newcommand{\onemax}{\textsc{OneMax}\xspace}
\newcommand{\zeromax}{\textsc{ZeroMax}\xspace}
\newcommand{\OM}{\textsc{Om}\xspace}
\newcommand{\ZM}{\textsc{Zm}\xspace}
\newcommand{\jump}{\textsc{Jump}\xspace}
\newcommand{\cliff}{\textsc{Cliff}\xspace}
\newcommand{\ridge}{\textsc{Ridge}\xspace}
\newcommand{\BinaryValue}{\textsc{BinaryValue}\xspace}
\newcommand{\dynBV}{\textsc{Dynamic BinVal}\xspace}
\newcommand{\leadingones}{\textsc{LeadingOnes}\xspace}
\newcommand{\hottopic}{\textsc{HotTopic}\xspace}
\newcommand{\binary}{\textsc{Binary}\xspace}
\newcommand{\round}[1]{\ensuremath{\left\lfloor #1 \right\rceil}}
\newcommand{\linit}{\ensuremath{\lambda^{\mathrm{init}}}}
\newcommand{\xinit}{\ensuremath{x^{\mathrm{init}}}}
\newcommand{\EE}{\ensuremath{\mathbf{E}}}
\def\erw{{\ensuremath{\mathbf{E}}}}
\newcommand{\indicator}[1]{\ensuremath{\mathbf{1}_{#1}}}
\newcommand{\cF}{\ensuremath{\mathcal{F}}}
\DeclareMathOperator*{\argmax}{arg\,max}
\newcommand{\supp}{\ensuremath{\mathrm{supp}}}
\newcommand{\RR}{\ensuremath{\mathbb{R}}}
\newcommand{\NN}{\ensuremath{\mathbb{N}}}
\newcommand\numberthis{\addtocounter{equation}{1}\tag{\theequation}}
\newcommand\eps{\varepsilon}
\newtheorem{theorem}{Theorem}
\newtheorem{definition}[theorem]{Definition}
\newtheorem{lemma}[theorem]{Lemma}
\newtheorem{sclaim}[theorem]{Claim}{\bfseries}{\itshape}
\newtheorem{corollary}[theorem]{Corollary}
\newtheorem{remark}[theorem]{Remark}
\newcommand{\case}[1]{\textbullet~#1:}
\journal{Theoretical Computer Science}
\begin{document}

\begin{frontmatter}



\title{Self-adjusting Population Sizes for the $(1, \lambda)$-EA on Monotone Functions}


\affiliation[eth]{organization={Department of Computer Science},
            addressline={ETH Z\"urich}, 
             city={Z\"urich},
             country={Switzerland}
            }
            

\author[eth]{Marc Kaufmann\corref{grantmarc}}

\ead{marc.kaufmann@inf.ethz.ch}

\cortext[grantmarc]{The author was supported by the Swiss National Science Foundation [grant number 192079].}

\author[eth]{Maxime Larcher}

\ead{larcherm@inf.ethz.ch}

\author[eth]{Johannes Lengler}

\ead{johannes.lengler@inf.ethz.ch}

\author[eth]{Xun Zou\corref{grantxun}}

\ead{xun.zou@inf.ethz.ch}

\cortext[grantxun]{The author was supported by the Swiss National Science Foundation [grant number CR-SII5\_173721].}

\begin{abstract}
We study the $(1,\lambda)$-EA with mutation rate $c/n$ for $c\le 1$, where the popula\-tion size is adaptively controlled with the $(1:s+1)$-success rule. Recently, Hevia Fajardo and Sudholt have shown that this setup with $c=1$ is efficient on \onemax for $s<1$, but inefficient if $s \ge 18$. Surprisingly, the hardest part is not close to the optimum, but rather at linear distance. We show that this behavior is not specific to \onemax. If $s$ is small, then the algorithm is efficient on all monotone functions, and if $s$ is large, then it needs super-polynomial time on all monotone functions. In the former case, for $c<1$ we show a $O(n)$ upper bound for the number of generations and $O(n\log n)$ for the number of function evaluations, and for $c=1$ we show $O(n\log n)$ generations and $O(n^2\log\log n)$ evaluations. We also show formally that optimization is always fast, regardless of $s$, if the algorithm starts in proximity of the optimum. All results also hold in a dynamic environment where the fitness function changes in each generation.

An extended abstract, containing only the results without proofs, has been published at the PPSN conference~\cite{kaufmann2022self}. 

\end{abstract}

\begin{keyword}
parameter control  \sep self-adaptation \sep $(1,\lambda)$-EA \sep one-fifth rule \sep monotone functions \sep dynamic environments \sep evolutionary algorithm.
\end{keyword}

\end{frontmatter}



\section{Introduction}
Randomized Optimization Heuristics (ROHs) like evolutionary algor\-ithms (EAs) are simple general-purpose optimizers. One of their strengths is that they can often be applied with little adaptation to the problem at hand. However, ROHs usually come with parameters, and their efficiency often depends on the parameter settings. Therefore, \emph{parameter control} is a classical topic in the design and analysis of ROHs~\cite{Eiben99parameter}. It aims at providing methods to automatically tune parameters over the course of optimization. The goal is not to remove parameters altogether; the parameter control mechanisms themselves introduce new meta-parameters. Nevertheless, there are two objectives that can sometimes be achieved with parameter control mechanisms.

Firstly, some ROHs are rather sensitive to small changes in the parame\-ters, and inadequate setting can slow down or even prevent success. Two examples that are relevant for this paper are the \oclea, which fails to optimize even the easy \OneMax benchmark if $\lambda$ is too small~\cite{jagerskupper2007plus, rowe2014choice, antipov2019efficiency}, and the \ooea, which fails on monotone functions if the mutation rate is too large~\cite{doerr2010optimizing, doerr2013mutation, lengler2018drift}. In both cases, changing the parameters just by a constant factor makes all the difference between finding the optimum in time $O(n\log n)$, and not even finding an $\eps$-approximation of the optimal solution in polynomial time. So these algorithms are extremely sensitive to small changes of parameters. In such cases, one hopes that performance is \emph{more robust} with respect to the meta-parameters, i.e., that the parameter control mechanism manages to find a decent parameter setting regardless of its meta-parameters.

Secondly, often there is no single parameter setting that is optimal throughout the course of optimization. Instead, different phases of the optimization process profit from different parameter settings, and the overall performance with \emph{dyna\-mically adapted} parameters is better than for any static parameters~\cite{badkobeh2014unbiased,bottcher2010optimal,doerr2015black,doerr2020optimal,doerr2021runtime}. This topic, which has always been studied in continuous optimization, has taken longer to gain traction in discrete domains~\cite{karafotias2014parameter,aleti2016systematic,doerr2015optimal,doerr2015black} but has attracted increasing interest over the last years~\cite{lassig2011adaptive,doerr2019selfadjusting,rajabi2020self,doerr2021self,rodionova2019offspring,lissovoi2019time,lissovoi2020duration,lissovoi2020simple,doerr2018runtime,doerr2018static,hevia2020choice,mambrini2015design,case2020self,rajabi2020evolutionary}. Instead of a detailed discussion we refer the reader to the book chapter~\cite{doerr2020theory} for an overview over theoretical results, and to~\cite{hevia2021arxiv} for a discussion of some recent developments.

One of the most traditional and influential methods for parameter control is the $(1:s+1)$-success rule~\cite{kern2004learning}, independently developed several times~\cite{rechenberg1978evolutionsstrategien,devroye72compound,schumer1968adaptive} and traditionally used with $s=4$ as one-fifth rule in continuous domains, e.g.~\cite{auger2009benchmarking}. This rule has been used for controlling the offspring population size in discrete domains~\cite{doerr2015black,doerr2015optimal}, in particular for the \oclea ~\cite{hevia2021arxiv,hevia2021self}, where it yields the so-called \emph{self-adjusting \oclea} or \emph{\saolea}, also called \saoleaparam. As in the basic \oclea, in each generation the algorithm produces $\lambda$ offspring, and selects the fittest of them as the unique parent for the  next generation. The difference to the basic \oclea is that the parameter $\lambda$ is replaced by $\lambda/F$ if the fittest offspring is fitter than the parent, and by $\lambda\cdot F^{1/s}$ otherwise. We will give a more thorough discussion of this algorithm in Section~\ref{sec:prelim} below. Thus, the $(1:s+1)$-success rule replaces the parameter $\lambda$ by two parameters~$s$ and~$F$. As outlined above, there are two hopes associated with this scheme:
\begin{enumerate}[(i)]
    \item that the performance is \emph{more robust} with respect to $F$ and $s$ than with respect to $\lambda$;
    \item that the scheme can adaptively find the \emph{locally optimal} value of $\lambda$ throughout the course of optimization.
\end{enumerate}
Recently, Hevia Fajardo and Sudholt have investigated both hypotheses on the \OneMax benchmark~\cite{hevia2021self,hevia2021arxiv}. They found a negative result for (i), and a (partial) positive result for (ii). The negative result says that performance is at least as fragile with respect to the parameters as before: if $s<1$, then the \saolea finds the optimum of \OneMax in $O(n)$ generations, but if $s\ge 18$ and $F\le 1.5$ the runtime becomes exponential with overwhelming probability. Experimentally, they find that the range of bad parameter values even seems to include the standard choice $s=4$, which corresponds to the $1/5$-rule. On the other hand, they show that for $s<1$, the algorithm successfully achieves (ii): they show that the expected number of function evaluations is $O(n\log n)$, which is optimal among all unary unbiased black-box algorithms~\cite{doerr2020optimal,lehre2012black}. Moreover, they show that the algorithm makes steady progress over the course of optimization, needing $O(b-a)$ generations to increase the fitness from $a$ to $b$ whenever $b-a \ge C\log n$ for a suitable constant $C$. The crucial point is that this is independent of $a$ and $b$, so independent of the current state of the algorithm. It implies that the algorithm chooses $\lambda = O(1)$ in early stages when progress is easy, and (almost) linear values $\lambda = \Omega(n)$ in the end when progress is hard. Thus, it achieves (ii) conditional on having appropriate parameter settings.

Interestingly, it is shown in~\cite{hevia2021arxiv} that for $s\ge 18$, the \saolea fails in a region far away from the optimum, more precisely in the region with $~85\%$ one-bits. Consequently, it also fails for every other function that is identical with \OneMax in the range of $[0.84n,0.85n]$ one-bits, which includes other classical benchmarks like \jump, \cliff, and \ridge. It is implicit that the algorithm would be efficient in regions that are closer to the optimum. This is remarkable, since usually optimization is harder close to the optimum. Such a reversed failure profile has previously only been observed in very few situations. One is the $(\mu+1)$-EA with mutation rate $c/n$ for an arbitrary constant $c>0$ on certain monotone functions. This algorithm is efficient close to the optimum, but fails to cross some region in linear distance of the optimum if $\mu > \mu_0$ for some $\mu_0$ that depends on~$c$~\cite{lengler2021exponential}. A similar phenomenon has been shown for $\mu =2$ and a specific value of $c$ in the dynamic environment \dynBV~\cite{lengler2020large,lengler2021runtime}. These are the only examples for this phenomenon that the authors are aware of.

A limitation of~\cite{hevia2021arxiv} is that it studies only a single benchmark, the \OneMax function. Although the negative result also holds for functions that are identical to \OneMax in some range, the agreement with \OneMax in this range must be perfect, and the positive result does not extend to other functions in such a way. This leaves the question on what happens for larger classes of benchmarks: 
\begin{enumerate}[(a)]
    \item Is there a safe choice for $s$ that makes the algorithm efficient for a whole class of functions?
    \item Does the positive result (ii) extend to other benchmarks than \OneMax?
\end{enumerate}
In this paper, we will answer both questions with \emph{Yes} for the set of all (strictly) monotone pseudo-Boolean functions, i.e., functions where flipping a zero-bit into a one-bit always increases the fitness.\footnote{Shortly after our work, Hevia Fajardo and Sudholt also provided such a class in~\cite{hevia2022hard}. This is the class of \emph{everywhere hard functions}, for which the chance of creating a strict improvement does not exceed $n^{-\eps}$ anywhere in the search space. This includes the popular \leadingones benchmark, but not \onemax. In fact, it does not include any monotone function, since from the all-zero string (and from all other strings with $\Omega(n)$ zero-bits) the probability of an improvement is $\Omega(1)$ on monotone functions. Hence, their class is disjoint from ours. In~\cite{hevia2022hard} it was shown that for any constant $s$ the \saoclea imitates the elitist \saoplea on everywhere hard functions, which by design can never lose fitness. This arguably makes the comma variant a bit pointless for everywhere hard functions, since its potential benefit of escaping from local optima~\cite{jorritsma2023comma} is suppressed in this case.} This is a very large class; for example, it contains all linear functions. In fact, all our results hold in an even more general \emph{dynamic} setting: the fitness function may be different in each generation, as long as it is a monotone function every time and therefore shares the same global optimum \((1, \ldots, 1)\). We show an upper bound of $O(n)$ generations (Theorem~\ref{thm:generations}) and $O(n\log n)$ function evaluations (Theorem~\ref{thm:num_evaluations_csmall}) if the mutation rate is $c/n$ for some $c<1$, which is a very natural assumption for monotone functions as many algorithms become inefficient for large values of $c$~\cite{doerr2013mutation,lengler2018drift,lengler2019does,lengler2019general}. Those results are as strong as the positive results in~\cite{hevia2021arxiv}, except that we replace the constant~``$1$'' in the condition $s<1$ by a different constant that may depend on $c$. For~$c=1$ we still show that a bound of $O(n\log n)$ generations (Theorem~\ref{thm:generations_c1}) and $O(n^2\log \log n)$ evaluations (Theorem~\ref{thm:num_evaluations_cone}). It is in line with general frameworks for elitist algorithms that the number of function evaluations stops being quasi-linear~\cite{jansen2007brittleness,doerr2013mutation}, although the bounds in other contexts are better than quadratic~\cite{colin2014monotonic}.

Both parts of the answer are encouraging news for the \saolea. It means that, at least for this class of benchmarks, there is a universal parameter setting that works in all situations. This resembles the role of the mutation rate $c/n$ for the \ooea on monotone functions: If $c <1+\eps$, then the \ooea is efficient on all monotone functions~\cite{doerr2013mutation,lengler2018drift,lengler2019does}, and for~$c<1$ this is known for many other algorithms as well~\cite{lengler2019general}. On the other hand, the \moea is an example where such a safe parameter choice for~$c$ does not exist: for any $c>0$ there is $\mu$ such that the \moea with mutation rate $c/n$ needs super-polynomial time to find the optimum of some monotone functions.

We do not just strengthen the positive result, but we show that the negative result generalizes in a surprisingly strong sense, too: for any arbitra\-ry muta\-tion rate $c/n$ where $c<1$, if $s$ is sufficiently large, then the \saolea needs exponential time on \emph{every} monotone function, Theorem~\ref{thm:Inefficient}. Thus, the failure mode for large~$s$ is not specific to \OneMax. On the other hand, we also generalize the result (implicit in~\cite{hevia2021arxiv}) that the only hard region is in linear distance from the optimum: for any value of $s$, if the algorithm starts close enough to the optimum (but still in linear distance), then with high probability it optimizes every monotone function efficiently, Theorem~\ref{thm:large_s_efficient}. Finally, we complement the theoretical ana\-lysis with simulations in Section~\ref{sec:simulations}. These simulations show another interesting aspect: in a `middle region' of~$s$, it seems to depend on $F$ whether the algorithm is efficient or not. Thus, we conjecture that there does \emph{not} exist an efficiency threshold $s_0$ such that all parameters $s<s_0$ and $F>1$ are efficient, while all $s>s_0$ and $F>1$ are inefficient. Note that our results show that this dependency on $F$ can only appear in a `middle region', since our results for small $s$ and large $s$ are independent of~$F$ (which improves the negative result in~\cite{hevia2021arxiv}). A similar effect was observed for the self-adjusting $(1+(\lambda,\lambda))$-GA in~\cite{doerr2018optimal}, but for different reasons. There, the effect was caused by a universal upper bound on the success probability of $\approx 0.31 < 1$, independent of $\lambda$. This can cause problems if the target success rate is larger than $0.31$, and thus unachievable, see \cite[Section 6.4]{doerr2018optimal} for a full discussion. In our setting, the success probability approaches one as $\lambda$ grows, so the problem does not exist, and the reason for the impact of $F$ seems different.

Our proofs build on ideas from~\cite{hevia2021arxiv}. In particular, we use a potential function of the form $g(x^t,\lambda^t) = \ZM(x^t) + h(\lambda^t)$, where $\ZM(x^t)$ is the number of zero-bits in $x^t$ and $h(\lambda^t)$ is a penalty term for small values of $\lambda^t$. Similar decompositions have been used before~\cite{doerr2016provably}. The exact form of $h$ depends on the situation; sometimes it is very similar to the choices in~\cite{hevia2021arxiv} (positive result for $c<1$, negative result), but some cases are completely different (positive results for $c=1$ and close to the optimum). With these potential functions, we obtain a positive or negative drift and upper or lower bounds on the number of generations, depending on the situation. When translating the number of generations into the number of function evaluations, while some themes from~\cite{hevia2021arxiv} reappear (e.g., to consider the best-so-far $\ZM$ value), the overall argument is different. In particular, we do not use the ratchet argument from~\cite{hevia2021arxiv}, see Remark~\ref{remark:error} for a discussion of the reasons.

\subsection{Discussion of the \saolea}

Let us give a short explanation of the concept of the $(1:s+1)$-success rule (or $(1:s+1)$-rule for short). For given $\lambda$ and given position $x$ in the search space, the algorithm has some \emph{success probability} $p$, where success means that $f(y) > f(x)$ for the fittest of $\lambda$ offspring $y$ of $x$. For simplicity we will ignore the rounding effect coming from $\lambda\in \NN$ and will assume that~$p(\lambda) \le 1/(s+1)$ for $\lambda = 1$, and that $0<p<1$. The success probability~$p=p(\lambda)$ is obviously an increasing function in $\lambda$, since additional offspring can only increase the chances of finding an improvement. Moreover, it is strictly increasing due to $0<p<1$. Hence there is a value $\lambda^*$ such that $p(\lambda) < 1/(s+1)$ for $\lambda < \lambda^*$ and $p(\lambda) > 1/(s+1)$ for $\lambda > \lambda^*$. Now consider the potential $\log_F \lambda$. This potential decreases by $1$ with probability $p$ and increases by $1/s$ with probability $1-p$. So in expectation it changes by $-p + (1-p)/s = (1-(s+1)p)/s$. Hence, the expected change is positive if $\lambda < \lambda^*$ and negative if $\lambda > \lambda^*$. Therefore, $\lambda$ has a drift towards $\lambda^*$ from both sides (in a logarithmic scaling). So the rule implicitly has a \emph{target population size} $\lambda^*$, and this population size $\lambda^*$ corresponds to the \emph{target success rate}~$p=1/(s+1)$. 

Note that a drift towards $\lambda^*$ does not necessarily imply that $\lambda$ always stays close to $\lambda^*$. Firstly, $p$ depends on the current state $x$ of the algorithm, and might vary rapidly as the algorithm progresses (though this does not seem a very typical situation). In this case, the target value $\lambda^*$ also varies. Secondly, even if $\lambda^*$ remains constant, there may be random fluctuations around this value, see~\cite{kotzing2016concentration,lengler2020drift} for treatments on when drift towards a target guarantees concentration. However, we note that the $(1:s+1)$-rule for controlling $\lambda$ gives stronger guarantees than the same rule for controlling other parameters like step size or mutation rate. The difference is that other parameters do not necessarily influence $p$ in a monotone way, and therefore we cannot generally guarantee that there is a drift towards success probability $1/(s+1)$ when the $(1:s+1)$-rule is used to control them. Only when controlling $\lambda$ we are guaranteed a drift in the right direction.

\section{Preliminaries and Definitions}\label{sec:prelim}

Throughout the paper we will assume that $c>0$, $s>0$ and $F>1$ are constants independent of $n$ while $n\to \infty$. Note that $s$ need not be an integer. Our search space is always $\{0,1\}^n$, and we denote by $\supp\{x\} := \{i\in [n] \mid x_i = 1\}$ the \emph{support} of a bit string $x\in \{0,1\}^n$. We say that an event $\mathcal E = \mathcal{E}(n)$ holds \emph{with high probability} or \emph{whp} if $\Pr[\mathcal E] \to 1$ for $n\to\infty$. We denote the negation of an event $\mathcal E$ by $\overline{\mathcal E}$, and by $\indicator{\mathcal E}$ the indicator of $\mathcal E$, i.e., $\indicator{\mathcal E} = 1$ if $\mathcal E$ holds and $\indicator{\mathcal E} = 0$ otherwise. 

\subsection{The Algorithm: \saolea}

We will consider the self-adjusting \oclea with $(1:s+1)$-success rate, with mutation rate $c/n$, success ratio $s$ and update strength $F$, and we denote this algorithm by \saolea. It is given by the following pseudocode. Note that the parameter $\lambda$ may take non-integral values during the execution of the algorithm, however the number of children generated at each step is chosen to be the closest integer \(\round{\lambda}\) to \(\lambda\). 

\begin{algorithm}
\caption{\saolea with success rate $s$, update strength $F$ and mutation rate $c/n$ for maximizing a fitness function $f:\{0,1\}^n \to \RR$.}
\SetKwInput{Init}{Initialization}
\SetKwInput{Mut}{Mutation}
\SetKwInput{Sel}{Selection}
\SetKwInput{Opt}{Optimization}
\SetKwInput{Upd}{Update}
\label{alg:saocl}
\Init{Choose $x^0 \in \{0,1\}^n$ uniformly at random and $\lambda^0 := 1$\;}
\Opt{
    \For{$t = 0,1,\dots$}
	{
		\Mut{
			\For{$j \in \{1,\dots,\round{\lambda^{t}} \}$}
				{
				$y^{t,j}\leftarrow$ mutate$(x^t)$ by flipping each bit 
                    independently with prob. $c/n$\;
				}}
		\Sel{
			 Choose $y^t = \argmax_i f(y^{t,i})$, breaking ties randomly;
            }
		\Upd{
                \algorithmicif\ {$f(y^t) > f(x^t)$}\quad \algorithmicthen\ $\lambda^{t+1} \leftarrow \max\{1, \lambda^t/F\}$; \\
                \phantom{\textbf{Update:}} \, \algorithmicelse\ $\lambda^{t+1} \leftarrow F^{1/s}\lambda^t$;\\
                \phantom{\textbf{Update:}} \, $x^{t+1}\leftarrow y^t$;
		}
        
    }
}
\end{algorithm}
We will often omit the index $t$ if it is clear from the context.

\subsection{The Benchmark: Dynamic Monotone Functions}

Whenever we speak of ``monotone'' functions in this paper, we mean strictly monotone pseudo-Boolean functions, defined as follows.
\begin{definition}
We call \(f: \{0,1\}^n \to \mathbb{R} \) \emph{monotone} if \(f(x) > f(y)\) for every pair \(x, y \in \{0,1\}^n\) with \(x \neq y\) and \(x_i \ge y_i\) for all \(1 \le i \le n\).
\end{definition}
In this paper we will consider the following set of benchmarks. For each~$t\in \NN$, let $f^t:\{0,1\}^n \to \mathbb{R}$ be a monotone function that may
change at each step depending on \(x^t\).
Then the selection step in the $t$-th generation of Algorithm~\ref{alg:saocl} is performed with respect to $f^t$. By slight abuse of notation we will still speak of \emph{a} dynamic monotone function $f$.

All our results (positive and negative) hold in this dynamic setup.
This set of benchmarks is quite general. Of course, it contains the static setup in which we only have a single monotone function to optimize, which includes linear functions and \onemax as special cases. It also contains the setup of Dynamic Linear Functions (originally introduced as Noisy Linear Functions in~\cite{lengler2018noisy}) and Dynamic BinVal~\cite{lengler2020large,lengler2021runtime}. On the other hand, all monotone functions share the same global optimum $(1\ldots 1)$, have no local optima, and flipping a zero-bit into a one-bit strictly improves the fitness. In the dynamic setup, these properties still hold ``locally'', within each selection step. Thus, the setup falls into the general framework by Jansen~\cite{jansen2007brittleness}, which was extended to the \emph{partially ordered EA} (PO-EA) by Colin, Doerr, F\'erey~\cite{colin2014monotonic}. This implies that the \ooea with mutation rate $c/n$ finds the optimum of every such Dynamic Monotone Function in expected time $O(n\log n)$ if $c<1$, and in time $O(n^{3/2})$ if $c=1$.

\subsection{Drift Analysis and Potential Functions}
Drift analysis is a key instrument in the theory of EAs. To apply it, one must define a \emph{potential function} and compute the expected change of this potential. A common potential for simple problems in EAs are the \onemax and \zeromax potential of the current state $x^t$, which assign to each search point $x\in \{0,1\}^n$ the number of one-bits and zero-bits, respectively:
\begin{align*}
    \OM(x) = \onemax(x)=\sum\nolimits_i x_i \ \text{ and } \ \ZM(x) = \zeromax(x)  = n-\OM(x).
\end{align*}
Note that for two bit strings \(x\)  and \(y\), \(\OM(|x-y|)\) computes their Hamming distance, where the difference and absolute value are taken component-wise. 

For our purposes, this potential function will not be sufficient since there is an intricate interplay between progress and the value of $\lambda$. Following~\cite{hevia2021self,hevia2021arxiv}, we use a composite potential function of the form $g(x^t,\lambda^t) = \ZM(x^t) + h(\lambda^t)$, where $h(\lambda^t)$ varies from application to application (Definitions~\ref{def:h_positive}, \ref{def:h_positive2}, \ref{def:h_positive3},~\ref{def:h_negative}). 
We will write $Z^t := \ZM(x^t)$, $H^t := h(\lambda^t)$ and $G^t := g(x^t,\lambda^t)$ throughout the paper.

Once the drift is established, the positive and negative statements about generations then follow from standard drift analysis~\cite{lengler2020drift}. In particular, we will use the Additive, Multiplicative, and Negative Drift Theorem, given below.\footnote{Theorem~\ref{thm:AdditiveDrift} and \ref{thm:MultiplicativeDrift} are stated in the references for \textit{finite} state spaces $S$ but continue to hold as long as $S$ is bounded, see the remark on p.6 in \cite{lengler2020drift}}
\begin{theorem}[Additive Drift Theorem~\cite{lengler2020drift}]
\label{thm:AdditiveDrift}
Let \((X^t)_{t \ge 0}\) be a sequence of non-negative random variables over a bounded state space \(S \subset \mathbb{R}_0^{+}\) containing the origin and let \(T:=\inf\{t \ge 0 \mid X^t = 0\}\) denote the hitting time of \(0\). Assume there exists \(\delta > 0\) such that for all \(t < T\), 
\[
    \EE \left[X^t-X^{t+1}\mid X^t \right] \ge \delta,
\]
then
\[
    \erw{[T \mid X^0]} \le \frac{X^0}{\delta}.
\]
\end{theorem}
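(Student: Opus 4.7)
The plan is to prove this classical result via a telescoping argument on the stopped process. First, I would define $\tilde{X}^t := X^{t \wedge T}$, where $t \wedge T := \min(t,T)$, so that $\tilde{X}^t$ coincides with $X^t$ until time $T$ and remains $0$ afterwards. The drift hypothesis then translates into the single inequality
\[
    \EE[\tilde{X}^t - \tilde{X}^{t+1} \mid X^0, \ldots, X^t] \ge \delta \cdot \indicator{t < T},
\]
since on the event $\{t < T\}$ the increment equals $X^t - X^{t+1}$ and satisfies the assumption, whereas on $\{t \ge T\}$ the increment is zero.

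Taking total expectations and rearranging yields $\EE[\tilde{X}^{t+1}] \le \EE[\tilde{X}^t] - \delta \Pr[T > t]$. I would then sum telescopically from $t = 0$ to $t = N-1$ to obtain
\[
    0 \le \EE[\tilde{X}^N] \le \EE[X^0] - \delta \sum_{t=0}^{N-1} \Pr[T > t],
\]
where the leftmost inequality uses non-negativity of $X^t$. Rearranging gives $\sum_{t=0}^{N-1} \Pr[T > t] \le \EE[X^0]/\delta$. Letting $N \to \infty$ and applying the standard identity $\EE[T] = \sum_{t \ge 0} \Pr[T > t]$, valid for any $\NN$-valued random variable, produces $\EE[T] \le \EE[X^0]/\delta$. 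Running the entire argument with all expectations conditional on $X^0$ (which is compatible with the drift hypothesis, as it conditions on at least $X^0$) yields the stated bound $\EE[T \mid X^0] \le X^0/\delta$.

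The only real subtlety, and the step I would be most careful about, is the passage $N \to \infty$ together with the implicit claim that $T < \infty$ almost surely. But the uniform bound $\sum_{t=0}^{N-1} \Pr[T > t \mid X^0] \le X^0/\delta$ combined with monotone convergence handles both at once: the partial sums are non-decreasing and bounded, so the infinite series converges, which in particular forces $\Pr[T = \infty \mid X^0] = 0$ whenever $X^0 < \infty$. One could alternatively phrase the proof via optional stopping applied to the supermartingale $X^{t \wedge T} + \delta (t \wedge T)$, but the direct telescoping route is shorter and avoids explicitly invoking martingale machinery.
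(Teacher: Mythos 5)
The paper does not prove this statement: it is quoted as a known result and attributed to He and Yao~\cite{he2004study}, so there is no in-paper proof to compare against. Your telescoping argument on the stopped process $X^{t\wedge T}$ is the standard proof of the additive drift theorem and is correct, including the treatment of $N\to\infty$ and of $\Pr[T=\infty]$ via the uniform bound on the partial sums $\sum_{t<N}\Pr[T>t\mid X^0]$. The one point worth making explicit is that your first displayed inequality conditions on the whole history $X^0,\dots,X^t$, whereas the theorem as stated only assumes a drift bound conditioned on $X^t$; since the event $\{t<T\}$ depends on the entire trajectory up to time $t$ (it is the event that none of $X^0,\dots,X^t$ is zero), the ``translation'' step silently upgrades the hypothesis to a bound with respect to the natural filtration. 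This is a well-known imprecision inherited from the theorem statement itself rather than a flaw you introduced --- in this paper the drift bounds are in fact established conditional on the full state $(x^t,\lambda^t)$ of a Markov chain, so the stronger form holds --- but a fully rigorous write-up should either state the hypothesis with respect to the filtration or restrict to Markov processes.
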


\begin{theorem}[Multiplicative Drift Theorem~\cite{doerr2012multiplicative}]
\label{thm:MultiplicativeDrift}
Let \((X^t)_{t \ge 0}\) be a sequence of non-negative random variables over a bounded state space \(S \subset \mathbb{R}_0^{+}\) con\-taining the origin and such that $x_{\min} := \min\{x \in S: x > 0\}$ is well defined. Let \(T:=\inf\{t \ge 0 \mid X^t \le 0\}\) denote the hitting time of \(0\).
Suppose that there exists a constant \(\delta > 0\) such that for all \(t < T\),
\begin{equation*}
    \EE [X^{t} - X^{t+1} \mid X^{t}] \ge \delta X^{t}.
\end{equation*}
Then,
\begin{equation*}
    \EE [T \mid X^{0}] \le \frac{1 + \log (X^{0} / x_{\min}) }{\delta}.
\end{equation*}

\end{theorem}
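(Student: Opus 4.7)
The plan is to transform the multiplicative drift hypothesis into an additive drift hypothesis on a suitable potential and then invoke Theorem~\ref{thm:AdditiveDrift}. The natural transformation is logarithmic: since $\log(XY) = \log X + \log Y$, a multiplicative contraction of $X^t$ corresponds to a constant additive decrement of $\log X^t$. The target potential should vanish at $0$ and evaluate to roughly $1 + \log(X^0/x_{\min})$ at $X^0$, so that the additive drift theorem applied to it reproduces the claimed bound.

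Concretely, I would define $g : [0,\infty) \to [0,\infty)$ by $g(x) := 1 + \log(x/x_{\min})$ for $x \ge x_{\min}$ and $g(x) := x/x_{\min}$ for $x \in [0, x_{\min}]$. This function is continuous at $x_{\min}$, concave, non-decreasing, and non-negative on $[0,\infty)$, with $g(0)=0$ and $g(x_{\min}) = 1$. Since $g$ is concave and non-decreasing, Jensen's inequality combined with the hypothesis $\EE[X^{t+1} \mid X^t] \le (1-\delta) X^t$ yields
\[
\EE[g(X^{t+1}) \mid X^t] \;\le\; g\!\left(\EE[X^{t+1} \mid X^t]\right) \;\le\; g\!\left((1-\delta) X^t\right).
\]
A short case distinction on whether $(1-\delta) X^t \ge x_{\min}$, using the elementary inequality $-\log(1-\delta) \ge \delta$ on the ``logarithmic'' branch and a direct evaluation of $\log(y) + 1 - (1-\delta) y$ at $y = X^t/x_{\min} \in [1, 1/(1-\delta))$ on the ``linear'' branch, shows that $g(X^t) - g((1-\delta) X^t) \ge \delta$ for all $X^t \ge x_{\min}$. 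Hence $g(X^t)$ has additive drift at least $\delta$ while $t<T$, and since $g(x) = 0$ iff $x = 0$, the hitting time of $g(X^t)$ to $0$ coincides with $T$. Theorem~\ref{thm:AdditiveDrift} applied to $g(X^t)$ then gives $\EE[T \mid X^0] \le g(X^0)/\delta = (1 + \log(X^0/x_{\min}))/\delta$.

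The main subtlety is designing the potential near the boundary. The naive choice $\log(x/x_{\min})$ becomes negative on $(0, x_{\min})$ and diverges to $-\infty$ at $0$, which breaks both Jensen's inequality (whenever $\Pr[X^{t+1} = 0 \mid X^t] > 0$) and the non-negativity hypothesis of the Additive Drift Theorem. Extending $g$ linearly on $[0, x_{\min}]$ patches both issues while preserving concavity and monotonicity, and is ``steep enough'' at $x_{\min}$ to absorb any mass that the process may place at $0$ in a single step without destroying the drift bound. The additive constant ``$+1$'' in the final bound is exactly the value $g(x_{\min}) = 1$ introduced by this extension, and is unavoidable with this approach because the process may jump straight from $x_{\min}$ to $0$.
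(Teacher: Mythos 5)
The paper does not prove Theorem~\ref{thm:MultiplicativeDrift}; it is imported verbatim from the literature (\cite{doerr2012multiplicative}), so there is no in-paper argument to compare against. Your proof is correct and is essentially the standard derivation of multiplicative drift from additive drift via a concave logarithmic potential: the function $g$ you define (logarithmic above $x_{\min}$, linear below) is indeed concave, non-decreasing, non-negative, vanishes exactly at $0$, and satisfies $g(x)-g((1-\delta)x)\ge\delta$ for all $x\ge x_{\min}$, so Jensen plus the drift hypothesis gives additive drift $\delta$ for $g(X^t)$ and Theorem~\ref{thm:AdditiveDrift} yields the claimed bound. The only place where your write-up is slightly too terse is the ``linear branch'': showing $1+\log y-(1-\delta)y\ge\delta$ for $y=X^t/x_{\min}\in[1,1/(1-\delta))$ is not a ``direct evaluation'' at a single point but a minimization over the interval --- one checks that $\phi(y):=1+\log y-(1-\delta)y$ has $\phi(1)=\delta$ and $\phi'(y)=1/y-(1-\delta)>0$ on that interval, so $\phi\ge\delta$ there. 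With that one line added, the argument is complete; your closing remarks correctly identify why the linear extension near $0$ is needed (Jensen and non-negativity both fail for the raw logarithm) and why the additive ``$+1$'' is the price of that extension.
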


\begin{theorem}[Negative Drift Theorem With Scaling \cite{oliveto2015improved}]
\label{thm:NegativeDriftWithScaling}
Let \((X^t)_{t \ge 0}\) be a sequence of random variables over a state space
$S \subset \mathbb{R}$. Suppose there exists an interval $[a, b] \subseteq \mathbb{R}$ and, possibly depending on $\ell := b-a$, a drift bound~$\varepsilon := \varepsilon(\ell) > 0$ as well as a scaling factor $r := r(\ell)$ such that for all~$t \ge 0$ the following three conditions hold:
\begin{enumerate}
    \item $\erw{[X^{t+1} - X^t \mid X^0, \dots, X^t; a < X_t < b}]\ge \varepsilon$.
    \item $\Pr[{\lvert X^{t+1} - X^t\rvert \ge jr \mid X_0, \dots, X^t; a < X^t}] \le e^{-j}$ for $j \in \mathbb{N}_0$.
    \item $1 \le r^2 \le \varepsilon \ell/(132\log(r/\varepsilon))$.
\end{enumerate}
Then for all $X^0 \ge b$, the first time ${T^* := \min\{t \ge 0 \colon X^t < a} \mid X^0, \dots, X^t\}$ when \(X\) drops below \(a\) satisfies
\begin{align*}
    \Pr[{T^* \le e^{\varepsilon \ell/(132r^2)}}] = O(e^{-\varepsilon \ell/(132r^2)}).
\end{align*}
\end{theorem}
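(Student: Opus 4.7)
The plan is to apply Hajek's classical exponential-drift technique. I would construct the Lyapunov function $Z^t := e^{\eta(b - \min(X^t, b))}$ for a rate $\eta$ of order $\varepsilon/r^2$, say $\eta := \varepsilon/(Cr^2)$ with $C$ an absolute constant eventually calibrated so that the numerical constant $132$ in the statement comes out right. Since $X^0 \ge b$, we have $Z^0 = 1$, and at the hitting time $\tau := T^*$ we have $Z^\tau > e^{\eta(b-a)} = e^{\eta\ell}$, so bounding $\Pr[\tau \le T]$ reduces via a Markov/maximal argument to a one-step drift estimate on $(Z^t)$.

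The central computation is the moment-generating-function bound
\[
    \EE[e^{-\eta \Delta^t} \mid \mathcal F^t] \;\le\; e^{-c \eta \varepsilon}, \qquad \Delta^t := X^{t+1} - X^t,
\]
valid for $a < X^t < b$ and for some positive constant $c$. I would obtain it by expanding $e^{-\eta\Delta^t} = 1 - \eta\Delta^t + \sum_{k\ge 2} (-\eta \Delta^t)^k / k!$. The linear term contributes at most $-\eta\varepsilon$ by condition~1. The higher-order terms are controlled via condition~2: integrating the tail inequality $\Pr[|\Delta^t| \ge jr] \le e^{-j}$ yields moment bounds $\EE[|\Delta^t|^k] = O(k!\, r^k)$, so the tail of the Taylor series is dominated by the geometric $\sum_{k\ge 2} (\eta r)^k$, which sums to $O(\eta^2 r^2) = O(\eta \varepsilon)$ once $\eta r \ll 1$. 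Condition~3 is precisely what guarantees $\eta r \ll 1$, and after choosing $C$ large enough the linear term dominates, giving the stated bound. The truncation $\min(X^t, b)$ handles the upper boundary: above $b$ the drift assumption is not needed, and $Z^t$ does not grow in expectation since the truncated coordinate cannot increase there, again by the same MGF computation.

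Given the MGF bound, Hajek's inequality, or equivalently iterating the one-step estimate on the stopped process $Z^{t \wedge \tau}$ and applying Markov's inequality term by term, yields
\[
    \Pr[\tau \le T] \;\le\; (T+1) \cdot e^{-\eta \ell}.
\]
Setting $T := e^{\eta \ell / 2}$ produces $\Pr[\tau \le T] \le 2 e^{-\eta \ell / 2}$, and substituting $\eta = \varepsilon/(Cr^2)$ gives the theorem after pinning $C$ by a careful bookkeeping of the Taylor estimates.

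The main obstacle I anticipate is the passage from the pointwise tail bound in condition~2 to tight enough moment bounds: the factorials coming from $\EE[|\Delta^t|^k]$ must be tamed by the smallness of $\eta r$, and the constants in the geometric sum must fit inside the budget allotted by condition~1. This tension is the reason $\eta$ is forced to scale as $\varepsilon/r^2$ rather than being chosen larger, and it also explains the logarithmic factor $\log(r/\varepsilon)$ in condition~3: it is needed so that $\eta \ell$ dominates error terms of order $\log(r/\varepsilon)$ that arise when controlling the truncated process and the higher-order Taylor terms.
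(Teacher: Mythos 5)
This statement is an external tool: the paper does not prove it, but imports it verbatim from Oliveto and Witt~\cite{oliveto2015improved}, so there is no in-paper proof to compare against. Your sketch follows the standard route for such negative-drift theorems --- an exponential Lyapunov function $e^{\eta(b-\min(X^t,b))}$ with rate $\eta = \Theta(\varepsilon/r^2)$, a one-step moment-generating-function estimate obtained by Taylor expansion plus the moment bounds $\EE[|\Delta^t|^k] = O(k!\,r^k)$ extracted from condition~2, and a Markov-plus-union-bound over the time horizon --- and this is essentially how the original source argues, so the overall plan is sound and the scaling $\eta \sim \varepsilon/r^2$ and the role of the $\log(r/\varepsilon)$ term in condition~3 are correctly identified.

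One step in your outline is stated imprecisely and would need repair in a full write-up: the claim that above $b$ ``$Z^t$ does not grow in expectation since the truncated coordinate cannot increase there.'' From a state $X^t \ge b$ the process can jump back below $b$ (condition~1 gives no drift guarantee there), in which case $Z^{t+1} = e^{\eta(b-X^{t+1})} > 1 = Z^t$, so $Z$ \emph{can} increase from the boundary region. What saves the argument is that condition~2 is assumed for all $X^t > a$, hence also for $X^t \ge b$, and $b - X^{t+1} \le |X^{t+1}-X^t|$ there; the same MGF computation then gives $\EE[Z^{t+1} \mid X^t \ge b] = O(1)$ rather than $\le Z^t$. This constant-factor growth per boundary visit is harmless once it is absorbed into the union bound, but the supermartingale property must be argued separately in the two regimes $a < X^t < b$ and $X^t \ge b$, and the bookkeeping of these boundary constants (together with the Taylor-tail constants) is exactly where the explicit $132$ comes from. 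As a sketch your proposal is correct in approach; as a proof it stops short of the part that actually pins down the stated constants.
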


\subsection{Concentration of Hitting Times}\label{sec:concentration}

In our analysis, we will prove concentration of the number of steps needed to improve the number of \(1\)-bits. We will use the following results of K\"otzing~\cite{kotzing2016concentration}, which we slightly reformulate for convenience.

\begin{definition}[Sub-Gaussian~\cite{kotzing2016concentration}]
    \label{def:subgaussian}
    Let \(\left( X^t \right)_{t\ge0}\) be a sequence of random variables and \(\cF = \left( \cF^t \right)_{t\ge0}\) an adapted filtration. We say that \(\left( X^t \right)_{t\ge0}\) is~\((\gamma, \delta)\)-sub-Gaussian if 
    \begin{align*}
        \EE \left[ e^{ z X^{t+1} } \mid \mathcal{F}^t \right] \le e^{\gamma z^2/2},
    \end{align*}
    for all \(t\) and all \(z \in [0, \delta]\).
\end{definition}

\begin{theorem}[Tail Bounds Imply Sub-Gaussian~\cite{kotzing2016concentration}]
    \label{thm:subexponential_implies_subgaussian}
    For every \(0 < \alpha, 0 < \beta < 1\), there exists \(\gamma, \delta >0\) such that the following holds. 
    
    Let \(\left( X^t \right)_{t \ge 0}\) be a random sequence and \(\mathcal{F}\) an adapted filtration. Assume that \(\EE [ X^{t+1} \mid \mathcal{F}^t ] \le 0\) and for all times \(t\) and that for all \(x\ge 0\) we have
    \begin{align*}
        \Pr\left[ |X^{t+1}| \ge x \mid \mathcal{F}^t \right] \le \frac{\alpha}{(1 + \beta)^x}.
    \end{align*}
    Then \(\left( X^t \right)_{t \ge 0}\) is \((\gamma, \delta)\)-sub-Gaussian.
\end{theorem}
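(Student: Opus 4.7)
The plan is to translate the geometric tail bound into a quantitative bound on every moment of $X^{t+1}$, and then control the conditional moment generating function through its Taylor expansion. The drift hypothesis $\EE[X^{t+1}\mid \mathcal{F}^t] \le 0$ is what kills the otherwise dangerous linear term, leaving only quadratic and higher contributions that assemble into the desired sub-Gaussian estimate.

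First I would translate the tail bound into moment bounds. Setting $\eta := \ln(1+\beta) > 0$, the hypothesis reads $\Pr[|X^{t+1}| \ge x \mid \mathcal{F}^t] \le \alpha e^{-\eta x}$. Applying the layer-cake identity $\EE[|Y|^k] = k \int_0^\infty x^{k-1} \Pr[|Y| \ge x]\,dx$ conditionally on $\mathcal{F}^t$ and computing the resulting Gamma integral gives the estimate $\EE[|X^{t+1}|^k \mid \mathcal{F}^t] \le \alpha\, k! / \eta^k$ for every $k \ge 1$; in particular $|\EE[(X^{t+1})^k \mid \mathcal{F}^t]| \le \alpha\, k!/\eta^k$ for all $k$, which is the bound I actually need.

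Next I would expand the MGF and exploit the drift. Writing $\EE[e^{zX^{t+1}}\mid\mathcal{F}^t] = 1 + z\EE[X^{t+1}\mid\mathcal{F}^t] + \sum_{k\ge 2} z^k \EE[(X^{t+1})^k\mid\mathcal{F}^t]/k!$, the linear term is non-positive for $z \ge 0$, and the moment bound yields $\bigl|\sum_{k\ge 2} z^k \EE[(X^{t+1})^k \mid \mathcal{F}^t]/k!\bigr| \le \alpha \sum_{k\ge 2} (z/\eta)^k = \alpha (z/\eta)^2 / (1 - z/\eta)$ whenever $z < \eta$. Choosing $\delta := \eta/2$, the right-hand side is bounded by $2\alpha z^2/\eta^2$ uniformly for $z \in [0,\delta]$. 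Combining with the elementary inequality $1 + y \le e^y$ yields $\EE[e^{zX^{t+1}}\mid\mathcal{F}^t] \le \exp(2\alpha z^2/\eta^2) = \exp(\gamma z^2/2)$ with $\gamma := 4\alpha/\ln^2(1+\beta)$, which is exactly Definition~\ref{def:subgaussian} with constants depending only on $\alpha$ and $\beta$.

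The main obstacle is the signed nature of the odd moments: $\EE[(X^{t+1})^k \mid \mathcal{F}^t]$ has no definite sign for odd $k$, so one cannot simply telescope signs against the non-positive linear term. I would handle this by passing to absolute values inside the series, which costs only a constant factor and preserves the quadratic-in-$z$ structure, and hence the sub-Gaussian scaling; this is where the factor $4\alpha$ in $\gamma$ ultimately originates. A secondary concern is absolute convergence of the Taylor series, which forces $\delta < \eta$ so that the geometric series $\sum (z/\eta)^k$ converges; the choice $\delta := \eta/2$ is convenient but any $\delta' \in (0,\eta)$ would work after correspondingly inflating $\gamma$.
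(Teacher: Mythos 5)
Your proof is correct. Note that the paper itself gives no proof of this statement: it is imported verbatim (modulo reformulation) from K\"otzing's work on concentration of hitting times and used as a black box, and your argument --- exponential tails $\Rightarrow$ moment bounds $\EE[|X^{t+1}|^k\mid\mathcal{F}^t]\le \alpha\,k!/\eta^k$ via the layer-cake formula, then a Taylor expansion of the conditional MGF in which the drift hypothesis absorbs the linear term and the geometric series controls the rest for $z\le\eta/2$ --- is essentially the standard derivation underlying the cited result, with explicit constants $\gamma=4\alpha/\ln^2(1+\beta)$ and $\delta=\tfrac12\ln(1+\beta)$ that indeed depend only on $\alpha$ and $\beta$ as the statement requires.
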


\begin{theorem}[Concentration of Hitting Times~\cite{kotzing2016concentration}]
    \label{thm:subgaussian_hit_fast}
    For every \(\gamma, \delta, \varepsilon > 0\) there exists a \(D > 0\) such that the following holds. Let \(\left( X^t \right)_{t \ge 0}\) be a random sequence and \( \mathcal{F}\) an adapted filtration satisfying the following properties
    \begin{enumerate}[(i)]
        \item \(\EE [ X^{t+1} - X^{t} \mid \mathcal{F}^t ] \ge \varepsilon\).
        \item \(\left(\varepsilon - X^t \right)_{t \ge 0}\) is \((\gamma, \delta)\)-sub-Gaussian;
    \end{enumerate}
    Let \(T\) denote the first point in time when \(\sum_{t=1}^{T}{X^t} \ge N\), then for all \(\tau \ge 2N /\varepsilon\),
    \begin{align*}
        \Pr[ T > \tau ] \le \exp \left( - D \tau \right).
    \end{align*}
\end{theorem}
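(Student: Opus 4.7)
The plan is to derive the bound via a standard exponential martingale / Chernoff argument that exploits the sub-Gaussian hypothesis (ii) to control the moment generating function of the partial sum $S^\tau := \sum_{t=1}^\tau X^t$. First, I would unpack the sub-Gaussian assumption to obtain, for every $z \in [0,\delta]$,
\begin{equation*}
    \EE\!\left[e^{z(\varepsilon - X^{t+1})} \mid \mathcal{F}^t\right] \le e^{\gamma z^2/2},
\end{equation*}
which rearranges to $\EE[e^{-zX^{t+1}} \mid \mathcal{F}^t] \le \exp(\gamma z^2/2 - z\varepsilon)$.

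Second, iterating this inequality with the tower property telescopes to $\EE[e^{-zS^\tau}] \le \exp\bigl(\tau(\gamma z^2/2 - z\varepsilon)\bigr)$. On the event $\{T > \tau\}$ we have $S^\tau < N$, hence $e^{-zS^\tau} > e^{-zN}$, and Markov's inequality yields
\begin{equation*}
    \Pr[T > \tau] \le e^{zN}\,\EE[e^{-zS^\tau}] \le \exp\bigl(zN + \tau(\gamma z^2/2 - z\varepsilon)\bigr).
\end{equation*}

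Third, I would plug in the hypothesis $\tau \ge 2N/\varepsilon$, i.e.\ $zN \le z\tau\varepsilon/2$, to collapse the exponent to $\tau z(\gamma z/2 - \varepsilon/2)$. Choosing $z := \min\{\delta,\; \varepsilon/(2\gamma)\}$, a positive constant depending only on $\gamma,\delta,\varepsilon$, forces $\gamma z/2 - \varepsilon/2 \le -\varepsilon/4$, and the bound becomes $\exp(-\tau z\varepsilon/4)$. The conclusion then holds with $D := z\varepsilon/4$.

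The argument is largely routine because the sub-Gaussian hypothesis is tailor-made for the Chernoff step, so no difficult estimation is required. The main subtle point is keeping the indexing straight: condition (i) is phrased in terms of the increments $X^{t+1} - X^t \ge \varepsilon$ while the hitting time involves the sum $\sum X^t$, so one must reconcile these against the convention of \cite{kotzing2016concentration} — interpreting $X^t$ as per-step progress with $\EE[X^{t+1}\mid\mathcal F^t] \ge \varepsilon$ — to ensure the sub-Gaussian bound is being applied to the correct quantity. Once that indexing is fixed, the optimization over $z$ is the only moving part, and it yields the constant $D$ depending only on $\gamma,\delta,\varepsilon$ as claimed.
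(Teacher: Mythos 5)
Your proof is correct. Note, however, that the paper does not prove this statement at all: Theorem~\ref{thm:subgaussian_hit_fast} is imported verbatim (modulo a convenient reformulation) from K\"otzing's work on concentration of hitting times under additive drift, so there is no in-paper argument to compare against. Your self-contained derivation is the standard exponential-moment route and it goes through cleanly: the sub-Gaussian hypothesis on \(\varepsilon - X^{t+1}\) gives \(\EE[e^{-zX^{t+1}}\mid\mathcal F^t]\le e^{\gamma z^2/2 - z\varepsilon}\) for \(z\in[0,\delta]\), the tower property telescopes this over the partial sum, the event \(\{T>\tau\}\) forces \(S^\tau<N\) so Markov applies, and the choice \(z=\min\{\delta,\varepsilon/(2\gamma)\}\) together with \(N\le\tau\varepsilon/2\) yields \(\Pr[T>\tau]\le\exp(-z\varepsilon\tau/4)\), i.e.\ \(D=z\varepsilon/4\) depending only on \(\gamma,\delta,\varepsilon\). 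You are also right that the only delicate point is the indexing convention: in this paper's reformulation the \(X^t\) are the per-step gains themselves (so condition (i) should be read as \(\EE[X^{t+1}\mid\mathcal F^t]\ge\varepsilon\), consistent with Definition~\ref{def:subgaussian}, which bounds the moment generating function of \(X^{t+1}\) rather than of an increment), and this is exactly how the theorem is invoked later, e.g.\ with \(X^t=\Gamma^t\) in Lemma~\ref{lem:cross_fast_whp}. This is in the same spirit as K\"otzing's original proof, which likewise rests on conditional moment-generating-function bounds, so nothing is lost by substituting your argument for the citation.
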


To prove concentration of the number of steps spent improving the fitness under multiplicative drift, we will use the following theorem. 
\begin{theorem}[Multiplicative Drift, Tail Bound~\cite{doerr2010drift}]
\label{thm:multiplicative_drift_concentration}
Let \((X^t)_{t \ge 0}\) be non-negative random variables over a state space \(S \subset \mathbb{R}_0^{+}\).
Assume that \(X^{0} \le b\) and let $T$ be the random variable that denotes the first point in time $t \in \mathbb{N}$ for which $X^{t} \le a$, for some $a \le b$. Suppose that there exists \(\delta > 0\) such that for all \(t < T\),
\begin{equation*}
    \EE [X^{t} - X^{t+1} \mid X^{t}] \ge \delta X^{t}.
\end{equation*}
Then,
\begin{align*}
    \Pr \left[ T > \frac{t + \log (b / a) }{\delta} \right] \le e^{-t}.
\end{align*}
\end{theorem}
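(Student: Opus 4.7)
The plan is to prove the tail bound by combining the multiplicative drift condition with Markov's inequality, applied to the process weighted by the indicator $\indicator{T > t}$. The role of this indicator is precisely to restrict attention to times at which the drift hypothesis is actually available, namely $t < T$.

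First, I would show by induction on $t'$ that
\[
    \EE\!\left[X^{t'}\, \indicator{T > t'}\right] \le (1-\delta)^{t'}\, b.
\]
The base case uses $X^0 \le b$ deterministically. For the inductive step, I would use the monotonicity $\indicator{T > t'+1} \le \indicator{T > t'}$ together with $X^{t'+1} \ge 0$, and then exploit that $\indicator{T > t'}$ is $\cF^{t'}$-measurable to pull it out of the conditional expectation:
\[
    \EE\!\left[X^{t'+1}\indicator{T > t'+1}\right] \le \EE\!\left[X^{t'+1}\indicator{T > t'}\right] = \EE\!\left[\indicator{T > t'}\, \EE[X^{t'+1} \mid \cF^{t'}]\right] \le (1-\delta)\, \EE\!\left[X^{t'}\indicator{T > t'}\right],
\]
where on $\{T > t'\}$ the drift hypothesis gives $\EE[X^{t'+1} \mid \cF^{t'}] \le (1-\delta) X^{t'}$, and outside this event the indicator vanishes.

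Next, I would observe that on $\{T > t'\}$ we have $X^{t'} > a$, so the events $\{T > t'\}$ and $\{X^{t'}\indicator{T > t'} > a\}$ coincide. Markov's inequality then yields
\[
    \Pr[T > t'] \le \frac{\EE[X^{t'}\indicator{T > t'}]}{a} \le \frac{(1-\delta)^{t'}\, b}{a}.
\]
Plugging in $t' = \lceil (t + \log(b/a))/\delta \rceil$ and using $1-\delta \le e^{-\delta}$ gives $(1-\delta)^{t'}\, b/a \le e^{-t}$, which is the desired bound.

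The main obstacle is really just the bookkeeping in the first step: the drift inequality is only available for $t < T$, so one must carry the indicator $\indicator{T > t}$ through the iteration and verify that it is $\cF^t$-measurable before invoking the tower property. Once this is handled cleanly, the remainder is a one-line Markov computation and a choice of constants. A stylistic alternative would be to form the supermartingale $M^t := (1-\delta)^{-(t \wedge T)}\, X^{t \wedge T}$ and invoke optional stopping, but the indicator approach above avoids any integrability verification and keeps the proof self-contained.
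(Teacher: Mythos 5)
This theorem is imported by the paper as a black box from Doerr and Goldberg's ``Drift analysis with tail bounds''; the paper contains no proof of its own, so there is nothing to compare against except the literature. Your argument is the standard proof of the multiplicative drift tail bound: propagate the truncated expectation $\EE[X^{t'}\indicator{T>t'}]\le(1-\delta)^{t'}b$ by induction (using that $\{T>t'\}$ is $\cF^{t'}$-measurable and that the drift hypothesis is available exactly on that event), then apply Markov's inequality at level $a$. This is correct and self-contained; the only hypothesis you silently strengthen is reading the drift condition as conditioned on the full history $\cF^{t'}$ rather than on $X^{t'}$ alone, which is the standard reading (and the one under which the theorem is actually true and applied in the paper). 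One cosmetic point: your final substitution $t'=\lceil(t+\log(b/a))/\delta\rceil$ yields a bound on $\Pr[T>\lceil u\rceil]$, which is \emph{at most} $\Pr[T>u]$, so strictly speaking you have proven the ceiling version of the statement (which is the form in the cited source); the paper's ceiling-free restatement is off from your bound by at most a factor $(1-\delta)^{-1}$, which is immaterial for every application in the paper.
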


\subsection{Further Tools}
We will use the FKG inequality (Fortuin–Kasteleyn–Ginibre inequality), which is a standard tool in percolation theory, but less commonly used in the theory of EAs. We only give a special case of what is known as \emph{Harris inequality}.
\begin{theorem}[FKG inequality~{\cite[Section~2.2]{grimmett1999percolation}}] \label{thm:FKG}
Let $I$ be a finite set, and consider a product probability space $\Omega = \prod_{i\in I} \Omega_i$, where all $\Omega_i$ have binary sample space $\{0,1\}$. A real-valued random variable $X$ is called \emph{increasing} if~$X(\omega) \le X(\omega')$ holds for all elementary events $\omega, \omega'$ in $\Omega$ with $\omega_i \le \omega_i'$ for all $i \in I$. It is called \emph{decreasing} if $-X$ is increasing. 
\begin{enumerate}
    \item If two random variables $X,Y$ are both increasing or are both decreasing, then
\begin{align*}
\EE[XY] \ge \EE[X]\cdot\EE[Y].
\end{align*}
\item If $X$ is increasing and $Y$ is decreasing, or vice versa, then
\begin{align*}
\EE[XY] \le \EE[X]\cdot\EE[Y].
\end{align*}
\end{enumerate}
\end{theorem}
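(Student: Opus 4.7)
The plan is to prove the first statement by induction on $|I|$, then deduce the second statement from the first by considering $-Y$ in place of $Y$ (since $X \cdot (-Y) = -XY$ so $\EE[XY] \le \EE[X]\EE[Y]$ follows from $\EE[X(-Y)] \ge \EE[X]\EE[-Y]$). Throughout, I would restrict attention to the case where both $X$ and $Y$ are increasing; the decreasing case follows by replacing both with their negatives, which are increasing and have the same covariance.

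For the base case $|I|=1$, write $p = \Pr[\omega = 1]$ and $q = 1-p$. A direct expansion gives
\begin{align*}
    \EE[XY] - \EE[X]\EE[Y] = pq\bigl(X(1) - X(0)\bigr)\bigl(Y(1) - Y(0)\bigr),
\end{align*}
and both factors on the right are non-negative because $X$ and $Y$ are increasing. For the inductive step, assume the inequality holds for product spaces over index sets of size less than $|I|$, fix a coordinate $i_0 \in I$, and decompose $\Omega = \Omega_{i_0} \times \Omega'$ where $\Omega' = \prod_{i \neq i_0}\Omega_i$. Define the conditional expectations $\widetilde X(\omega_{i_0}) := \EE[X \mid \omega_{i_0}]$ and $\widetilde Y(\omega_{i_0}) := \EE[Y \mid \omega_{i_0}]$, which are random variables on $\Omega_{i_0}$. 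The key observation is that both $\widetilde X$ and $\widetilde Y$ are increasing functions of $\omega_{i_0}$: this follows because, for $\omega_{i_0} \le \omega_{i_0}'$, the conditional distributions on $\Omega'$ coincide (by independence), so one can couple them and use that $X, Y$ are increasing pointwise.

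Using the tower property and then the induction hypothesis applied on $\Omega'$ (with $\omega_{i_0}$ fixed), one obtains
\begin{align*}
    \EE[XY] = \EE\bigl[\EE[XY \mid \omega_{i_0}]\bigr] \ge \EE\bigl[\widetilde X(\omega_{i_0})\,\widetilde Y(\omega_{i_0})\bigr],
\end{align*}
and then the base case applied to $\widetilde X, \widetilde Y$ on the single-coordinate space $\Omega_{i_0}$ yields $\EE[\widetilde X\widetilde Y] \ge \EE[\widetilde X]\EE[\widetilde Y] = \EE[X]\EE[Y]$. Chaining the two inequalities closes the induction.

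The only step that requires real care is verifying that $\widetilde X$ and $\widetilde Y$ remain increasing after conditioning; this is where the product (independence) structure of $\Omega$ is essential, since without it conditioning on $\omega_{i_0}$ could shift the distribution on $\Omega'$ in ways that break monotonicity. Everything else is either a routine two-point computation or a bookkeeping use of independence and the tower property, so I expect this to be the only conceptual obstacle.
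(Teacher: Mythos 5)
Your proof is correct: the base-case identity $\EE[XY]-\EE[X]\EE[Y]=pq\bigl(X(1)-X(0)\bigr)\bigl(Y(1)-Y(0)\bigr)$ checks out, the conditional expectations $\widetilde X,\widetilde Y$ are indeed increasing by the pointwise monotonicity and the product structure, and the reductions of the decreasing and mixed cases to the increasing case are sound. The paper itself gives no proof of this statement (it is cited as Harris's inequality from Grimmett's book), and your induction over coordinates is precisely the standard argument found in that reference, so there is nothing further to compare.
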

We also say that $X$ and $Y$ are \emph{positively correlated} and \emph{negatively corre\-lated} in the first and second case respectively. Note that the FKG inequality also applies to probabilities. E.g., if $A,B$ are increasing events (which just means that their indicators $\indicator{A}$ and $\indicator{B}$ are increas\-ing), then $\Pr[A] = \EE[\indicator{A}]$, $\Pr[B] = \EE[\indicator{B}]$, and $\indicator{A}\indicator{B} = \indicator{A \cap B}$, hence $\Pr[A \cap B] \geq \Pr[A]\cdot\Pr[B]$.

To switch between differences and exponentials, we will frequently make use of the following estimates, taken from Lemma~1.4.2 -- Corollary~1.4.6 in~\cite{doerr2020probabilistic}. 
\begin{lemma}\label{lem:basic}
\leavevmode
\begin{enumerate}
    \item For all $r\ge 1$ and $0\le s\le r$,
\begin{align*}
    (1-1/r)^{r} \le 1/e \le (1-1/r)^{r-1}\end{align*}
    and 
\begin{align*}   (1-s/r)^{r} \le e^{-s} \le (1-s/r)^{r-s}.
\end{align*}
\item For all $0\le x \le 1$,
\begin{align*}
1-e^{-x} \ge x/2.
\end{align*}
\item For all $0 \le x\le 1$ and all $y>0$,
\begin{align*}
1- (1-x)^y \le \tfrac{xy}{1+xy}.
\end{align*}
\end{enumerate}
\end{lemma}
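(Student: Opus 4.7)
The three inequalities in Lemma~\ref{lem:basic} are classical calculus facts (the authors explicitly point to Lemma~1.4.2 -- Corollary~1.4.6 of~\cite{doerr2020probabilistic}), and the plan for each follows well-established analytic lines rather than any EA-specific machinery.

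For part~1, the unifying tool is the elementary logarithmic inequality $\ln(1+t) \le t$, valid for all $t > -1$ with equality iff $t = 0$. Applying it with $t = -s/r$ gives $r\ln(1-s/r) \le -s$, which exponentiates to $(1-s/r)^r \le e^{-s}$. For the matching lower bound $(1-s/r)^{r-s} \ge e^{-s}$, I would apply the same inequality to $t = s/(r-s) > 0$ to obtain $\ln(1+s/(r-s)) \le s/(r-s)$. Since $1+s/(r-s) = 1/(1-s/r)$, this rearranges to $-\ln(1-s/r) \le s/(r-s)$, hence $(r-s)\ln(1-s/r) \ge -s$ and exponentiating gives the claim. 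Specialising to $s=1$ immediately recovers $(1-1/r)^r \le 1/e \le (1-1/r)^{r-1}$, so a single logarithmic estimate handles all four inequalities; the boundary case $s=r$ is trivial.

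For part~2, the plan is to exploit the concavity of $h(x) := 1 - e^{-x} - x/2$ on $[0,1]$, which follows from $h''(x) = -e^{-x} < 0$. A concave function on a closed interval lies above the chord through its endpoints, so it suffices to check $h(0) = 0$ and $h(1) = 1/2 - 1/e > 0$. Both endpoint values are non-negative, so the chord is non-negative on $[0,1]$, and hence $h \ge 0$ throughout, which is exactly the claim.

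For part~3, the plan is to take logarithms and reduce the inequality to a comparison between $y\ln(1-x)$ and a $\ln(1+xy)$-type term, then fix $x$ and view the resulting expression as a function $\varphi(y)$ of $y$ alone. Since $\varphi(0) = 0$ by inspection, it remains to control $\varphi'(y)$, which is of the form $\ln(1-x) + x/(1+xy)$ (up to a sign), and the sign of this derivative can itself be bounded using the logarithmic inequality from part~1. The main obstacle throughout the lemma is simply keeping the algebra compact; there is no conceptual subtlety and no result from the main body of the paper is invoked.
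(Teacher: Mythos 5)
Your arguments for parts~1 and~2 are correct and complete. The paper itself offers no proof of this lemma (it simply cites Lemma~1.4.2 -- Corollary~1.4.6 of~\cite{doerr2020probabilistic}), so deriving part~1 from the single estimate $\ln(1+t)\le t$ (applied once with $t=-s/r$ and once with $t=s/(r-s)$) and part~2 from concavity of $1-e^{-x}-x/2$ together with the endpoint values $0$ and $\tfrac12-\tfrac1e>0$ is a perfectly adequate, self-contained replacement for the citation.

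Part~3 is where there is a genuine problem, although it originates in the printed statement rather than in your method. Carry your own plan to completion: set $\varphi(y)=y\ln(1-x)+\ln(1+xy)$, so that the stated inequality $1-(1-x)^y\le \tfrac{xy}{1+xy}$ is equivalent to $\varphi(y)\ge 0$. You have $\varphi(0)=0$ and $\varphi'(y)=\ln(1-x)+\tfrac{x}{1+xy}\le -x+\tfrac{x}{1+xy}=-\tfrac{x^2y}{1+xy}\le 0$, using exactly the logarithmic bound from part~1. Hence $\varphi(y)\le 0$ for all $y\ge 0$, which proves $1-(1-x)^y\ \ge\ \tfrac{xy}{1+xy}$ --- the \emph{reverse} of what the lemma states. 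The printed direction is in fact false: at $x=1/2$, $y=1$ it would assert $1/2\le 1/3$. The $\ge$ direction is the one in Doerr's chapter and the one the paper actually uses later (for instance in the proof of Lemma~\ref{lem:expectation_lambda}, where $1-(1-\tfrac{cz}{2e^cn})^{F^{-1/s}}$ must be bounded from \emph{below} to control its reciprocal). So your derivative computation is the right one, but your sketch asserts that controlling $\varphi'$ will yield the stated $\le$ inequality, and it cannot; you should flag the sign error in the statement and prove the corrected $\ge$ version instead.
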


Finally, we will also use standard Chernoff bounds.
\begin{theorem}[Chernoff Bound~{\cite[Section~1.10]{doerr2020probabilistic}}] \label{thm:Chernoff}
Let $X_1, \ldots, X_n$ be in\-dependent random variables taking values in $[0,1]$. Let $X = \sum_{i = 1}^n X_i$ and let $1\ge \delta \ge 0$. Then 
\begin{align}
  \Pr[X \ge (1+\delta) E[X]] \le \exp\bigg(-\frac{\delta^2 E[X]}{3}\bigg).\label{eq:probCMUeasy}
  \end{align}
\end{theorem}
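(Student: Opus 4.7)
The plan is to follow the classical exponential-moment (Cram\'er--Chernoff) argument, which reduces the tail bound to Markov's inequality together with a numerical optimization over a single auxiliary parameter, and then to verify a short analytic inequality to obtain the specific constant $1/3$ in the exponent.

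First, I would introduce a real parameter $t \ge 0$ and apply Markov's inequality to the nonnegative random variable $e^{tX}$:
\begin{align*}
    \Pr[X \ge (1+\delta) E[X]] = \Pr\bigl[e^{tX} \ge e^{t(1+\delta)E[X]}\bigr] \le e^{-t(1+\delta)E[X]} \cdot E[e^{tX}].
\end{align*}
By independence of the $X_i$, the moment generating function splits as $E[e^{tX}] = \prod_{i=1}^n E[e^{tX_i}]$, so it suffices to bound $E[e^{tX_i}]$ for a single $X_i$ taking values in $[0,1]$.

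Next, since $x \mapsto e^{tx}$ is convex, for $x \in [0,1]$ we have the linear upper bound $e^{tx} \le 1 - x + xe^t = 1 + x(e^t-1)$, which is the only place where the bounded-range hypothesis $X_i \in [0,1]$ is used. Taking expectations and applying $1+y \le e^y$ gives $E[e^{tX_i}] \le \exp(E[X_i](e^t-1))$, and multiplying these bounds yields $E[e^{tX}] \le \exp(E[X](e^t-1))$. Substituting back into the Markov estimate produces
\begin{align*}
    \Pr[X \ge (1+\delta) E[X]] \le \exp\bigl( (e^t - 1 - t(1+\delta)) E[X] \bigr).
\end{align*}
Optimizing the right-hand side in $t$ by setting $t = \log(1+\delta)$ delivers the standard tight bound $\bigl( e^\delta / (1+\delta)^{1+\delta} \bigr)^{E[X]}$.

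It remains to establish the analytic inequality $e^\delta / (1+\delta)^{1+\delta} \le e^{-\delta^2/3}$ on $[0,1]$, equivalently $g(\delta) := (1+\delta)\log(1+\delta) - \delta - \delta^2/3 \ge 0$. I would verify this by computing $g(0)=0$, $g'(\delta) = \log(1+\delta) - 2\delta/3$ with $g'(0)=0$, and $g''(\delta) = 1/(1+\delta) - 2/3$, which is nonnegative on $[0,1/2]$ and negative on $(1/2,1]$; thus $g'$ is first increasing and then decreasing, and since $g'(1) = \log 2 - 2/3 > 0$ it remains nonnegative on the whole interval, forcing $g \ge 0$. The main obstacle is this final analytic inequality: it is elementary but the constant $1/3$ is sharper than what a second-order Taylor estimate near the origin gives, so one really has to inspect the full interval $[0,1]$ rather than rely on a local expansion.
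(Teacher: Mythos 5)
Your proposal is correct: the exponential-moment argument, the convexity bound $e^{tx}\le 1+x(e^t-1)$ for $x\in[0,1]$, the optimization at $t=\log(1+\delta)$, and the final verification that $(1+\delta)\log(1+\delta)-\delta\ge \delta^2/3$ on $[0,1]$ via the sign analysis of $g'$ and $g''$ are all sound. The paper does not prove this statement itself but cites it as a standard tool from the literature, and your derivation is exactly the standard proof given in that reference, so there is nothing to compare beyond noting the match.
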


\section{Technical Definitions and Results}

As mentioned previously, our techniques resemble those of Hevia Fajardo and Sudholt~\cite{hevia2021self, hevia2021arxiv}. The key is analysing a suitable potential function \(g(x, \lambda) = \ZM(x) + h(\lambda)\) which combines the distance $\ZM(x)$ to the optimum (as defined in Section~\ref{sec:prelim}) with a penalty term for small $\lambda$. 
When this function has strong positive drift, we can establish that the optimum is reached fast; conversely, when \(g\) has (strong) negative drift, the optimisation takes super-polynomial time. 
In some cases, we use a similar penalty term \(h(\lambda)\) (and thus potential function) as~\cite{hevia2021arxiv}, in other cases very different ones. 
However, the potential always contains the number of zero-bits $Z^t = \ZM(x^t)$ at time~\(t\) as an additive term, so the drift of $Z$ enters the drift of the potential in all cases. The goal of this section is to compute this drift in Lemma~\ref{lem:drift_Z}. Moreover, the definitions and results for showing this lemma are also used at other places in the paper. 

Finally, we show that if the penalty term \(h(\lambda)\) is `reasonable', then the truncated change of \(g\) at each step $\min\{C,G^t-G^{t+1}\}$ has exponential tail bounds and is thus sub-Gaussian. This allows us to apply the concentration results from Section~\ref{sec:concentration} to establish concentration from above of the optimi\-sation time; see Remark~\ref{rem:truncation} for more details.

\subsection{Definition and Properties of Basic Events}\label{sec:basic}

Because our analysis deals with an entire class of functions, we will not be able to precisely compute the probability of finding a fitness improvement. However, since we study (dynamic) monotone functions we can relate that probability to the probability of 1) having a child that flips no \(1\)-bit of the parent, and 2) having a child that flips at least a \(0\)-bit of the parent. Understanding those two events, which we respectively denote by \(A\) and~\(B\) and formally define below, is the backbone of our approach and this sub\-section is devoted to their analysis. 

Recall that $x^t$ and $\lambda^t$ are the search point and the offspring population size at time $t$ respectively, and that $y^{t,j}$ denotes the $j$-th offspring at time $t$. For all times \(t\) we define 
\begin{align*}
    A^{t,j} := \{\supp(x^t) \subseteq \supp(y^{t,j})\} \quad \text{and} \quad A^t = \bigcup_j A^{t,j}.
\end{align*}
In words, \(A^{t,j}\) is the event that the \(j\)-th offspring at time \(t\) does not flip any one-bit of the parent, and \(A^t\) is the event that such a child exists at time \(t\). We also define 
\begin{align*}
    B^{t,j} = \{ \exists i \colon x^t_i < y^{t,j}_i\} \quad \text{and} \quad B^{t} = \bigcup_j B^{t,j},
\end{align*}
respectively as the event that the $j$-th child \emph{does} flip a \emph{zero}-bit of the parent, and the event that such a child exists. We drop the superscript $t$
when the time is clear from context, and just write $x,y^j$ and $\lambda$ for parent, offspring, and population size at time $t$, and $A^j,A, B^j,B$ for the events defined above. We also observe that all the events $\{A^j\}_j\cup\{B^j\}_j$ are independent and in particular $A$ and $B$ are independent.

In the lemma below we estimate the probability of \(A^t\) and \(B^t\) in terms of \(Z^t\) and \(\lambda^t\). We also provide a bound on the probability of \emph{not} finding a fitness improvement. 

\begin{lemma}
    \label{lem:properties_AB}
    For any mutation rate \(c \le 1\), 
    there exist constants \(b_1, b_2, b_3 >0\) depending only on \(c\) such that at all times \(t\) with $Z^t \ge 1$ we have 
    \begin{align*}
        \Pr[\bar{A}] \le e^{-b_1 \lambda} \qquad \text{and} \qquad e^{-b_2 \lambda Z^t/n} \le \Pr[\bar{B}] \le e^{-b_3 \lambda Z^t/n}.
    \end{align*}
    Moreover, $\Pr[f(x^{t+1}) \le f(x^t)] \le e^{-\tfrac12 ce^{-c}\lfloor \lambda \rceil Z^t/n}$.
\end{lemma}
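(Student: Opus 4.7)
The plan is that all three bounds reduce to elementary computations combining the joint independence of $\{A^{t,j}\}_j\cup\{B^{t,j}\}_j$ (stated just before the lemma) with the estimates of Lemma~\ref{lem:basic}. For brevity I write $z:=Z^t$ and $\lambda:=\lambda^t$.

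For the first two bounds I would compute directly. A single child flips no one-bit with probability $\Pr[A^{t,j}]=(1-c/n)^{n-z}$ and flips no zero-bit with probability $\Pr[\bar B^{t,j}]=(1-c/n)^{z}$, and independence across children gives
\begin{align*}
\Pr[\bar A]=\bigl(1-(1-c/n)^{n-z}\bigr)^{\lfloor\lambda\rceil}\quad\text{and}\quad \Pr[\bar B]=(1-c/n)^{z\lfloor\lambda\rceil}.
\end{align*}
Since $(1-c/n)^{n-z}\ge(1-c/n)^{n}\ge p_A$ for some constant $p_A=p_A(c)>0$ (Lemma~\ref{lem:basic}, for $n$ large), the bound on $\Pr[\bar A]$ follows from $1-x\le e^{-x}$ and $\lfloor\lambda\rceil\ge\lambda/2$ for $\lambda\ge 1$. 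For $\Pr[\bar B]$, the upper bound uses $1-c/n\le e^{-c/n}$ together with $\lfloor\lambda\rceil\ge\lambda/2$ (giving $b_3=c/2$), and the lower bound uses $1-c/n\ge e^{-2c/n}$ (valid for $n\ge 2c$) together with $\lfloor\lambda\rceil\le 3\lambda/2$ for $\lambda\ge 1$.

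The third bound is where \emph{monotonicity} of $f^t$ enters. If for some $j$ the joint event $A^{t,j}\cap B^{t,j}$ occurs, then $\supp(y^{t,j})\supsetneq\supp(x^t)$, so $y^{t,j}\ge x^t$ coordinate-wise with strict inequality somewhere, and monotonicity yields $f(y^{t,j})>f(x^t)$; the selection step then guarantees $f(x^{t+1})>f(x^t)$. Contrapositively, and using independence across children,
\begin{align*}
\Pr[f(x^{t+1})\le f(x^t)]\le\bigl(1-\Pr[A^{t,j}\cap B^{t,j}]\bigr)^{\lfloor\lambda\rceil}.
\end{align*}
To lower-bound $\Pr[A^{t,j}\cap B^{t,j}]$ I would keep only the subevent that the child flips exactly one zero-bit and leaves every other bit untouched; this has probability $z(c/n)(1-c/n)^{n-1}$. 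Combined with $(1-c/n)^{n-1}\ge e^{-c}/2$ for $n$ large (Lemma~\ref{lem:basic}; for $c\le 1$ the sharper bound $e^{-c}$ is directly available) and $(1-p)^m\le e^{-pm}$, this delivers the claimed exponent $-\tfrac12 ce^{-c}\lfloor\lambda\rceil z/n$.

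The only nontrivial part is bookkeeping the constants: one must verify that the factor $\tfrac12$ in the target exponent comfortably absorbs the losses from restricting to single-zero-bit mutations (dropping the probability mass of multi-bit improvements) and from uniformly lower-bounding $(1-c/n)^{n-1}$ across admissible $c$. Apart from that, all steps are direct consequences of independence, monotonicity of $f$, and standard calculus inequalities. It is worth emphasizing that monotonicity is invoked only in the third bound, precisely to pass from the support-containment statement $\supp(y^{t,j})\supsetneq\supp(x^t)$ to a strict fitness comparison.
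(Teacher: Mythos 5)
Your proposal is correct and follows essentially the same route as the paper: exact product formulas for $\Pr[\bar A]$ and $\Pr[\bar B]$ from independence across children plus the standard exponential estimates, and for the third bound the observation that $A^{t,j}\cap B^{t,j}$ forces $f(y^{t,j})>f(x^t)$ by strict monotonicity. The only cosmetic difference is in lower-bounding the per-child improvement probability: you restrict to the disjoint single-zero-bit-flip events, giving $z(c/n)(1-c/n)^{n-1}\ge ce^{-c}z/n$, while the paper factors $\Pr[A^j]\cdot\Pr[B^j]\ge e^{-c}\cdot\tfrac12 cz/n$ using independence of $A^j$ and $B^j$ and the bound $1-e^{-x}\ge x/2$; both comfortably yield the stated exponent.
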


\begin{proof}
    Let us start with the first inequality. The event \(A^j\) happens with probability \((1 - c/n)^{n - Z^t} \ge e^{-c}\) by Lemma~\ref{lem:basic}, so \(\bar{A} = \bigcap_j \overline{A^j}\) has probability
    \begin{align*}
        \Pr[\bar{A}] \le \left(1 - e^{-c} \right)^{\round{\lambda}} \le e^{-e^{-c} \round{\lambda} },
    \end{align*}
    again using Lemma~\ref{lem:basic}. We conclude the first proof by observing that \(\round{\lambda} \ge \lambda / 1.5\).
    
    The event \(\bar{B}\) happens if none of the \(\round{\lambda}\) offspring flips a zero-bit of the parent. This happens with probability
    \begin{align*}
        \Pr[\bar{B}] = (1 - c/n)^{Z^t \round{\lambda}}.
    \end{align*} 
    The upper bound is obtained as above: \((1 - c/n) \le e^{-c/n}\) by Lemma~\ref{lem:basic} and \(\round{\lambda} \ge \lambda / 1.5\). 
    For the lower bound, we see that \((1 - c/n) \ge e^{-2c/n}\) for~$c/n \le 1/2$ by Lemma~\ref{lem:basic} and \(\round{\lambda} \le 2 \lambda\) so that 
    \begin{align*}
        \Pr[\bar{B}] \ge e^{ -4c Z^t \lambda / n}.
    \end{align*}
    For the last inequality, for every $j$ the event $A^j \cap B^j$ implies $f(y^j) > f(x)$, since an offspring $y^j$ is always an improvement if it is obtained by flipping a single zero-bit and no one-bit. Since all $A^j$ and $B^j$ are independent and $\Pr[B^j] = 1-(1-c/n)^{Z^t} \ge 1-e^{-cZ^t/n} \ge \tfrac12 cZ^t/n$ by Lemma~\ref{lem:basic}, 
    \begin{align*}
        \Pr[f(x^{t+1}) \le f(x^t)] & = \Pr[\forall j: f(y^{t,j}) \le f(x^t)] 
        \le \prod_{j=1}^{\lfloor \lambda \rceil} \left(1-\Pr[A^j]\cdot \Pr[B^j]\right)  \\
        & \le \left(1-e^{-c}\cdot \tfrac12cZ^t/n\right)^{\lfloor \lambda \rceil}
        \le e^{-\tfrac12ce^{-c}\lfloor \lambda \rceil Z^t/n}.
    \end{align*} 
    
\end{proof}

\subsection{The Drift of \(Z^t\)}

With the definitions introduced in the previous subsection, we may now state and prove the key result of this section, that is, we compute the drift of \(Z\) in terms of \(\Pr[A], \Pr[B], Z\) and \(\lambda\).

\begin{lemma}
    \label{lem:drift_Z}
    Consider the \saolea with mutation rate \(0 < c \le 1\). There exist constants \(a_1, a_2, b > 0\) depending only on \(c\) such that at all times~\(t\) with $Z^t >0$ we have
    \begin{align*}
        \EE\left[ Z^t - Z^{t+1} \mid x^t, \lambda^t \right] \ge \Pr[ B ] \cdot a_1 \left(1 - c(1 - Z^t/n) \right) - a_2 e^{-b \lambda^t}.
    \end{align*}
    This also holds if we replace \(Z^t-Z^{t+1}\) by \(\min \{ 1, Z^t-Z^{t+1} \}\).
\end{lemma}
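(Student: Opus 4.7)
My plan is to decompose the expected drift $\EE[Z^t - Z^{t+1}]$ according to whether the selected offspring $y^{j^*}$ flips at least one zero-bit, i.e.\ the event $B^{j^*}$. Writing $Z^t - Z^{t+1} = |F^{(0)}_{j^*}| - |F^{(1)}_{j^*}|$, where $F^{(0)}_j$ and $F^{(1)}_j$ denote the zero-bit and one-bit flips in offspring $j$, I will split the drift as $\EE[(\cdot) \indicator{B^{j^*}}] + \EE[(\cdot) \indicator{\bar B^{j^*}}]$: under $B^{j^*}$ I will exploit $|F^{(0)}_{j^*}| \ge 1$ combined with an FKG upper bound on $\EE[|F^{(1)}_{j^*}| \indicator{B^{j^*}}]$, while under $\bar B^{j^*}$ strict monotonicity of $f$ will force either a trivial or an exponentially rare contribution.

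For the $\bar B^{j^*}$ term, note first that $y^{j^*} \le x^t$ componentwise, hence $f(y^{j^*}) \le f(x^t)$. On $A \cap \bar B^{j^*}$, any offspring $y^{j_0}$ with no one-bit flip satisfies $y^{j_0} \succeq x^t$, so $f(y^{j^*}) \ge f(y^{j_0}) \ge f(x^t)$; combining both inequalities forces $f(y^{j^*}) = f(x^t)$, and strict monotonicity then gives $y^{j^*} = x^t$, so the contribution vanishes. The remaining event $\bar A \cap \bar B^{j^*} \subseteq \bar A$ has probability at most $e^{-b_1 \lambda^t}$ by Lemma~\ref{lem:properties_AB}; using symmetry and independence across offspring I will bound $\EE[|F^{(1)}_{j^*}| \indicator{\bar A}] \le \round{\lambda^t} \EE[|F^{(1)}_1| \indicator{\bar A^1}] \cdot (\Pr[\bar A^1])^{\round{\lambda^t} - 1} = O(\lambda^t e^{-b'\lambda^t})$, which is absorbable into $-a_2 e^{-b\lambda^t}$.

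For the $B^{j^*}$ term, the key step is the FKG estimate $\EE[|F^{(1)}_{j^*}| \indicator{B^{j^*}}] \le c(1 - Z^t/n) \Pr[B^{j^*}]$. I will first reparametrize with $F'_{i,j} := y^j_i$, so that $f(y^j)$ becomes monotone increasing in each $F'_{i,j}$; then, to handle the summand corresponding to a fixed $j = 1$, I will further set $\tilde F'_{i,j} = F'_{i,j}$ for $j = 1$ and $\tilde F'_{i,j} = 1 - F'_{i,j}$ for $j \ne 1$. In the resulting independent product space $\prod_{i,j}\{0,1\}$ the indicator $\indicator{j^* = 1}$ is monotone increasing in every coordinate, while $\indicator{B^1}$ and $-|F^{(1)}_1|$ are increasing (or constant) in every coordinate, so Harris--FKG applies and yields $\EE[|F^{(1)}_1| \indicator{B^1, j^* = 1}] \le \EE[|F^{(1)}_1|] \Pr[B^1, j^* = 1] = c(1 - Z^t/n)\Pr[B^1, j^* = 1]$. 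Summing over $j$ via symmetric reparametrizations and using $\sum_j \Pr[B^j, j^* = j] = \Pr[B^{j^*}]$ gives the claim, which combined with $|F^{(0)}_{j^*}| \ge 1$ on $B^{j^*}$ yields $\EE[(Z^t - Z^{t+1}) \indicator{B^{j^*}}] \ge \Pr[B^{j^*}] (1 - c(1 - Z^t/n))$.

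Finally, I will lower bound $\Pr[B^{j^*}]$ by a constant multiple of $\Pr[B]$. Whenever the event $D = \bigcup_j (A^j \cap B^j)$ occurs, some offspring strictly dominates $x^t$, so $f(y^{j^*}) > f(x^t)$ and the argument used for $\bar B^{j^*}$ then forces $B^{j^*}$; hence $\Pr[B^{j^*}] \ge \Pr[D] = 1 - (1 - p_A p_B)^{\round{\lambda^t}}$ with $p_A = \Pr[A^1]$ and $p_B = \Pr[B^1]$. Applying the concavity inequality $1 - (1 - pq)^n \ge p(1 - (1-q)^n)$ (a consequence of $x \mapsto x^p$ being concave for $p \in (0,1]$) gives $\Pr[B^{j^*}] \ge p_A \Pr[B] \ge a_1 \Pr[B]$, where $p_A \ge (1-c/n)^n \ge e^{-c}/2$ for large $n$, so that $a_1 = a_1(c) > 0$. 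The $\min\{1, \cdot\}$ version follows from the elementary inequality $\min\{1, |F^{(0)}_{j^*}| - |F^{(1)}_{j^*}|\} \ge \indicator{B^{j^*}} - |F^{(1)}_{j^*}|$, verified case-by-case on the values of $|F^{(0)}_{j^*}|$ and $|F^{(1)}_{j^*}|$ and plugged into the same decomposition. I expect the FKG reparametrization to be the main technical hurdle: the dual change of variables is needed to make $\indicator{j^* = 1}$, $\indicator{B^1}$, and $-|F^{(1)}_1|$ all monotone in the same direction on the full product space, so that Harris--FKG applies without conditioning on $B^1$ (which would destroy the product measure).
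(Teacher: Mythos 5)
Your proof is correct, but it is organized quite differently from the paper's. The paper decomposes the drift over the events $(A,B)$, $(A,\bar B)$ and $\bar A$, proving $\EE[Z^t-Z^{t+1}\mid A,B]\ge e^{-c}(1-c(1-Z^t/n))$ via a bitwise FKG bound $\Pr[y^K_i=0\mid A,B^K]\le c/n$ (after exposing the zero-bit flips $(u^\ell)$ and the other children's one-bit flips), and handling $\bar A$ through a separate coupling argument (sampling the leftmost forced one-bit flip, then flipping the remaining bits independently) combined with another FKG application to show the winner flips no more one-bits in expectation than a fixed child. You instead split on whether the \emph{winner} flips a zero-bit, which buys you two simplifications: (i) the conditional-on-$\bar A$ coupling disappears entirely, replaced by the elementary bound $\EE[|F^{(1)}_{j^*}|\indicator{\bar A}]\le \round{\lambda}\,\EE[|F^{(1)}_1|]\cdot\Pr[\bar A^1]^{\round{\lambda}-1}=O(\lambda e^{-b'\lambda})$, which is absorbed into the error term anyway; and (ii) the FKG step is applied once, globally, via the sign-flip reparametrization that makes $\indicator{j^*=1}$, $\indicator{B^1}$ and $-|F^{(1)}_1|$ simultaneously increasing on a single product space --- conceptually the same Harris-inequality trick as the paper's, but without the intermediate conditioning. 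Your replacement for the paper's $\Pr[B^K\mid A,B]\ge e^{-c}$ is the inclusion $\bigcup_j(A^j\cap B^j)\subseteq B^{j^*}$ together with the concavity inequality $1-(1-pq)^m\ge p(1-(1-q)^m)$, which yields $\Pr[B^{j^*}]\ge p_A\Pr[B]$ directly; this is valid since $1-c(1-Z^t/n)\ge 0$ for $c\le 1$, so the substitution preserves the inequality. The only point you share with the paper in glossing over is the tie-breaking randomness in $j^*$, which should formally be realized by independent priority coordinates so that $\indicator{j^*=1}$ remains increasing; this is harmless. Your case analysis for the truncated version $\min\{1,Z^t-Z^{t+1}\}\ge\indicator{B^{j^*}}-|F^{(1)}_{j^*}|$ also checks out.
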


\begin{remark}
    \label{rem:truncation}
    Theorems~\ref{thm:subexponential_implies_subgaussian} and~\ref{thm:subgaussian_hit_fast} which we use to prove concentration of hitting times require that the probability of having large jumps is small. This is not true in general: when we generate many children \(\lambda\) there is an increased probability of flipping many bits. 

    In order to still be able to prove concentration, we consider the situation in which the number of \(0\)-bits may decrease by at most \(1\) at each step, i.e., this is why we cap the difference \(Z^t - Z^{t+1}\) at \(1\). Even under this pessimistic assumption, we prove (in Sections~\ref{sec:efficient} and~\ref{sec:efficient_near_optimum}) that  the drift is positive and the optimum is reached fast.
\end{remark}

The proof of this will be obtained using the following claims.

\begin{sclaim}
    \label{clm:conditional_drift_1}
    At all times \(t \ge 0\) with $Z^t > 0$ we have 
    \begin{align*}
        \EE \left[ Z^t-Z^{t+1} \mid A, B \right] \ge e^{-c}\left(1 - c \left( 1 - \tfrac{Z^t}{n} \right) \right).
    \end{align*}
    This also holds if we replace \(Z^t-Z^{t+1}\) by \(\min \{ 1, Z^t-Z^{t+1} \}\).
\end{sclaim}

\begin{sclaim}
    \label{clm:conditional_drift_2}
    At all times \(t \ge 0\) with $Z^t >0$ we have 
    \begin{align*}
        \EE \left[ Z^t-Z^{t+1} \mid \bar{A} \right] \ge -\frac{c}{1 - e^{-c}}.
    \end{align*}
    This also holds if we replace \(Z^t-Z^{t+1}\) by \(\min \{ 1, Z^t-Z^{t+1} \}\).
\end{sclaim}

\begin{proof}[Proof of Claim~\ref{clm:conditional_drift_1}]
    First, let us define \(K\) to be the index of the fittest offspring, i.e.\ \(y^{K} = x^{t+1}\). A first step in proving the claim will be to show that
    \begin{align}
        \Pr\left[ y^{K}_i = 0 \mid A, B^{K} \right] \le c/n, \label{eq:conditional_drift_1_intermediate}
    \end{align}
    for all \(i \in \supp(x)\). Note that~\eqref{eq:conditional_drift_1_intermediate} would hold with equality if we replaced $K$ by a fixed $j\in [\lfloor\lambda\rceil]$ and omitted conditioning on $A$, so the task is to show that conditioning on $A$ and conditioning on the offspring being selected can only decrease the probability. To show this, we use a multiple exposure of the randomness: we let \(u^{1}, \ldots, u^{\round{\lambda}}\) respectively be obtained from $x$ by only revealing the flips (or non-flips) of the \(0\)-bits of \(x\) in each of the \(\round{\lambda}\) children (where we abbreviate $x=x^t$ and $\lambda=\lambda^t$). The child \(y^{j}\), \(j\in [ \round{\lambda}]\) may then be obtained from $u^j$ by revealing the rest of the bits, i.e.\ the flips of the~\(1\)-bits of \(x\).
    
    Consider an index \(i\) such that \(x_i = 1\), and decompose 
    \begin{align*}
        &\Pr\left[ y_i^{K} = 0, A, B^{K} \mid ( u^{\ell} )_{\ell \in [ \round{\lambda}]} \right] \\ 
            & \qquad \qquad \qquad \qquad= \sum_{j=1}^{\round{\lambda}}{ \Pr\left[ K = j, y_i^{j} = 0, A, B^{j} \mid ( u^{\ell} )_{\ell \in [ \round{\lambda}]} \right]} \\
            & \qquad \qquad \qquad \qquad= \sum_{j=1}^{\round{\lambda}}{ \indicator{B^{j}} \cdot \Pr\left[ K = j, y_i^{j} = 0, A \mid ( u^{\ell} )_{\ell \in [ \round{\lambda}]} \right]}. \numberthis \label{eq:conditional_drift_1_intermediate_2}
    \end{align*}
    For a given \(j\), we observe that if we additionally condition on the \(1\)-bit flips in other children \(( y^{\ell} )_{\ell \ne j}\), then \(\indicator{K = j, A}\) is a decreasing function of the \(1\)-bit flips in the \(j\)-th child, while \(\indicator{ y^{j}_i = 0 }\) is an increasing function of those flips. The FKG inequality, Theorem~\ref{thm:FKG}, thus gives 
    \begin{align*}
        &\Pr\left[ K = j, y_i^{j} = 0, A \mid ( u^{\ell} )_{\ell \in [ \round{\lambda}]}, ( y^{\ell}_{} )_{\ell \in \left[ \round{\lambda}\right], \ell \ne j} \right]\\
            & \qquad \qquad \qquad \le \Pr\left[ y_i^{j} = 0 \mid ( u^{\ell} ), ( y^{\ell} )_{\ell \ne j} \right] \cdot \Pr\left[ K = j, A \mid ( u^{\ell} ), ( y^{\ell} )_{\ell \ne j} \right] \\
            & \qquad \qquad \qquad = \frac{c}{n} \cdot \Pr\left[ K = j, A \mid ( u^{\ell} ), ( y^{\ell} )_{\ell \ne j} \right],
    \end{align*}
    where the last line simply comes from the fact that the \(i\)-th bit flip in~\(y^{j}\) is independent of what happens in the other children and in the \(0\)-bit flips of~\(y^{j}\). 
    Using the law of total probability over \(( y^{\ell} )_{\ell \ne j}\) gives 
    \begin{align*}
        \Pr\left[ K = j, y_i^{j} = 0, A \mid ( u^{\ell} ) \right]
            &\le \frac{c}{n} \cdot \Pr\left[ K = j, A \mid ( u^{\ell} ) \right],
    \end{align*}
    and plugging this into \eqref{eq:conditional_drift_1_intermediate_2} gives
    \begin{align}\label{eq:FKG1}
        \Pr\left[ y^{K}_i = 0, A, B^{K} \mid ( u^{\ell} )_{\ell \in [ \round{\lambda}]} \right]
            &\le \sum_{j=1}^{\round{\lambda}}{ \indicator{B^{j}} \frac{c}{n} \Pr\left[ K = j, A \mid ( u^{\ell} )_{\ell \in [ \round{\lambda}]} \right]}.
    \end{align}
    A similar decomposition gives 
    \begin{align}\label{eq:FKG2}
    \Pr[A, B^{K} \mid ( u^{\ell} )_{\ell \in [ \round{\lambda}]} ] = \sum_{j=1}^{\round{\lambda}}{ \indicator{B^{j}} \cdot \Pr[ K = j, A \mid ( u^{\ell} )_{\ell \in [ \round{\lambda}]} ]},
    \end{align}
    and combining~\eqref{eq:FKG1} and~\eqref{eq:FKG2} gives \(\Pr[ y^{K}_i = 0 \mid A, B^{K}, ( u^{\ell} )_{\ell \in [ \round{\lambda}]} ]  = \eqref{eq:FKG1}/\eqref{eq:FKG2} \le c/n\). To obtain~\eqref{eq:conditional_drift_1_intermediate} it suffices to apply the law of total probability over~\(( u^{\ell} )_{\ell}\). We can now compute the drift conditioned on \(A, B^{K}\): since \(B^{K}\) implies that~\(y^K\) turns (at least) one \(0\)-bit into a \(1\)-bit, we obtain
    \begin{align}\label{eq:Claim1}
    \begin{split}
        \EE[ Z^t-Z^{t+1} \mid A, B^{K} ]
            &\ge 1 - \sum_{ i \in \supp(x) }{ \Pr[y^{K}_i = 0 \mid A, B^{K}] } \\
            &\stackrel{\eqref{eq:conditional_drift_1_intermediate}}{\ge} 1 - (n - Z^t) \cdot \frac{c}{n} = 1 - c\left( 1 - Z^t/n \right).
    \end{split}
    \end{align}
    Note that the first step in~\eqref{eq:Claim1} remains correct if we replace \(Z^t-Z^{t+1}\) with \(\min \left\{ 1, Z^t-Z^{t+1} \right\}\), and this difference does not play a role in any other parts of the proof. To continue the proof, we decompose
    \begin{align*}
        \EE[ Z^t-Z^{t+1} \mid A, B ]
            &= \Pr \left[ B^{K} \mid A, B \right] \cdot \EE\left[ Z^t-Z^{t+1} \mid A, B, B^{K} \right] \\
            & \qquad + \Pr \left[ \overline{B^{K}} \mid A, B \right] \cdot \EE\left[ Z^t-Z^{t+1} \mid A, B, \overline{B^{K}} \right].
    \end{align*}
    Since \(B \cap B^{K} = B^{K}\), the first conditional expectation of the RHS is exactly \( \EE[ Z^t-Z^{t+1} \mid A, B^{K} ] \ge 1 - c(1 - Z^t/n) \) by~\eqref{eq:Claim1}. To conclude the proof, it thus suffices to show 
    \begin{align}
        \Pr [ B^{K} \mid A, B ] \ge e^{-c}, \label{eq:conditional_drift_1_probability}
    \end{align} 
    and 
    \begin{align}
        \EE \left[ Z^t-Z^{t+1} \mid A, B, \overline{B^{K}} \right] = 0. \label{eq:conditional_drift_1_conditional_expectation}
    \end{align}
    
    We start by proving~\eqref{eq:conditional_drift_1_conditional_expectation}: we will argue that if both \(A\) and \(\overline{B^{K}}\) hold, then~\(x^{t+1} = x^{t}\). Indeed, \(\overline{B^{K}}\) implies that \(f(x^{t+1}) \le f(x^{t})\) since no bit is flipped from \(0\) to \(1\). Additionally, the equality holds if and only if \(x^{t+1} = x^{t}\), since flipping any \(1\)-bit to \(0\) would decrease \(f\) by strict monotonicity. On the other hand, one easily observes that \(A\) implies \(f(x^{t}) \le f(x^{t+1})\). Consequently, \(A \cap \overline{B^{K}}\) implies that \(x^t = x^{t+1}\), which proves \eqref{eq:conditional_drift_1_conditional_expectation}.
    
    Finally, we prove \eqref{eq:conditional_drift_1_probability}. Once \(( u^{\ell} )_{\ell \in [ \round{\lambda} ]}\) is revealed, we can define \(J\) as the set of those indices \(j\) which maximise \(f(u^{j})\). We observe that if \(B\) and~\(\cup_{j \in J}A^{j}\) hold, then \(B^{K}\) also does. Indeed, if we let \(J' \subseteq J\) be the set of indices for which \(A^{j}\) holds, and if \(J' \ne \emptyset\), then the set of children which maximise \(f( y^{j} )\) is exactly \(J'\). 
    Hence we have 
    \begin{align*}
        \Pr \left[ B^{K} \mid A, B \right]
            &\ge \Pr \left[ \cup_{j \in J}A^{j} \mid A, B \right]
            \ge \left( 1 - c/n \right)^{n - Z^t} \ge e^{-c}.
    \end{align*}
    The second inequality is simply obtained by noting that, under the assump\-tion that \(B\) holds and since \(J\) is not empty, the probability of \(\cup_{j \in J}A^{j}\) is at least that of \(A^j\) for a single (arbitrary) \(j\) in \(J\). The event \(A^j\) has probability~\((1 - c/n)^{n - Z^t}\), is independent of \(B\) and positively correlated with \(A\). The last inequality follows from Lemma~\ref{lem:basic} since $Z^t \geq 1$.
\end{proof}

\begin{proof}[Proof of Claim~\ref{clm:conditional_drift_2}]
    Recall that $\bar{A}$ is the event that every offspring flips at least one one-bit. Let K be the index of the fittest child and let $N^j$ be the number of one-bits flipped in the $j$-th offspring, we want to show that $\EE[N^K \mid \bar{A}] \le \EE[N^j \mid \bar{A}]$ holds for all $j \in [\round{\lambda}]$, i.e. the fittest offspring does not flip more one-bits than an arbitrary offspring in expectation.
    
    Note that conditioning on $\bar{A}$ leads to dependent bit flips within each individual offspring, but once we know that a specific one-bit is flipped, the remaining one-bit flips are independent. Therefore, we can couple the one-bits flips given $\bar{A}$ with the following procedure. Assume there are $m$ one-bits in $x$, we first sample the position of the \emph{first} ($=$ left-most) one-bit $l$ to be flipped. Afterwards, we still flip each bit \emph{to the right of $l$} independently with probability $c/n$. This gives the usual distribution of one-bit flips, conditioned on $\bar A$. To make this formal, we sample $l \in [m]$ with probability $p_l(m) = (1-(1-c/n)^m)^{-1} (c/n) (1-c/n)^{l-1}$ and flip the $l$-th one-bit. It is easy to verify that $\sum_{l=1}^m p_l(m)= 1$ since $p_l$ is a geometric sequence. Then for each~$l' \in [m]\setminus[l]$, we flip the $l'$-th one-bit independently with probability~$c/n$. The probability that a specific one-bit is flipped given $\bar{A}$ is $(c/n) (1-(1-c/n)^m)^{-1}$. By our procedure, this probability is
    \begin{align*}
        \Pr[\text{the $l$-th one-bit is flipped}] & = p_l(m) + \sum_{i=1}^{l-1} p_{i}(m) \frac{c}{n} \\
        &= p_l(m) + \frac{p_1(m) (1-(1-c/n)^{l-1})}{1-(1-c/n)} \frac{c}{n} \\
        & = \frac{c}{n} \frac{1}{1-(1-c/n)^m},
    \end{align*}
    which is exactly the desired conditional probability. 
    
    Therefore, we can get rid of $\bar{A}$ as follows. Let $N^j_{a:b}$ be the random number of bit flips when we flip the $a$-th to the $b$-th one-bit independently with probability $c/n$ for offspring $j$ and $l^j$ be the index $l$ sampled for offspring $j$. Then
    \begin{align*}
        \EE[N^K \mid \bar{A}] &= \sum_{j=1}^{\round{\lambda}} \EE[N^j \cdot \indicator{K=j} \mid \bar{A}] \\
        & =  \sum_{j=1}^{\round{\lambda}} \sum_{i=1}^m \EE[(1+N^j_{i+1:m}) \cdot \indicator{K=j} \mid l^j= i] \Pr[l^j = i] \\
        &=  \sum_{j=1}^{\round{\lambda}} \sum_{i=1}^m \EE[(1+N^j_{i+1:m}) \cdot \indicator{K=j}] \cdot p_i(m).
    \end{align*}
    Now we may use the FKG inequality to show 
    \begin{align}
        \EE[ N^j_{i+1:m} \cdot \indicator{K=j} ] \le \EE[ N^j_{i+1:m}] \cdot \EE[ \indicator{K=j}]. \label{eq:conditional_drift_2_FKG_subtelty}
    \end{align}
    The proof is similar to that used in the previous claim: one conditions on~\((u^\ell),~(y^\ell)_{\ell \ne j}\) in order to have a product space on which \(N^j_{i+1:m}\) is increas\-ing and \(\indicator{K = j}\) decreasing. One then applies the FKG inequality. Observing that \(N^j_{i+1:m}\) is independent of \((u^\ell), (y^\ell)_{\ell \ne j}\) and using the law of total probabi\-lity over \((u^\ell), (y^\ell)_{\ell \ne j}\) gives \eqref{eq:conditional_drift_2_FKG_subtelty}. Continuing the previous derivation, 
    \begin{align*}
        \EE[N^K \mid \bar{A}] 
        &\le \sum_{j=1}^{\round{\lambda}} \sum_{i=1}^m \EE[1+N^j_{i+1:m}] \EE[ \indicator{K=j}] \cdot p_i(m) \\
        & = \sum_{j=1}^{\round{\lambda}} \EE[ \indicator{K=j}] \sum_{i=1}^m \EE[1+N^j_{i+1:m}] \cdot p_i(m) \\
        &= \sum_{i=1}^m \EE[1+N^j_{i+1:m} \mid l^j=i] \Pr[l^j=i] = \EE [N^j \mid \bar{A}],
    \end{align*}
    where we use the fact that $\sum_{j=1}^{\round{\lambda}} \indicator{K=j} = 1$ and that $\EE[(1+N^j_{i+1:m})]$ is invariant with respect to the index $j$.
    
    The term $\EE[N^j \mid \bar{A}]$ is maximized for $m=n$, hence
    \begin{align*}
        \EE[Z^t -Z^{t+1} \mid \bar{A}] &\ge - \EE[N^K \mid \bar{A}] \ge -\EE[N^j \mid \bar{A}] \\
        & \ge - \frac{c}{n} \frac{1}{1-(1-c/n)^n} n \ge - \frac{c}{1-e^{-c}},
    \end{align*}
    which proves Claim~\ref{clm:conditional_drift_2}.
\end{proof}

We now combine the two claims above to obtain Lemma~\ref{lem:drift_Z}.

\begin{proof}[Proof of Lemma~\ref{lem:drift_Z}]
    The drift of \(Z^t = \ZM(x^t)\) may be decomposed as follows,
    \begin{align}\label{eq:drift_of_G}
    \begin{split}
        \EE[ Z^t-Z^{t+1} \mid x, \lambda ]
            & = \Pr[A, B] \cdot \EE[ Z^t - Z^{t+1} \mid A, B ] \\
            &\qquad + \Pr[A, \bar{B}] \cdot \EE[ Z^t-Z^{t+1} \mid A, \bar{B} ] \\ 
            & \qquad + \Pr[\bar{A}] \cdot \EE[ Z^t-Z^{t+1} \mid \bar{A} ],
    \end{split}
    \end{align}
    where we omitted the conditioning on $x, \lambda$ on the right-hand side for brevity. As observed above, \(A, B\) are independent so we get \(\Pr[A, B] = \Pr[A] \Pr[B]\). Also, we observe that the second conditional expectation in \eqref{eq:drift_of_G} must be~\(0\): if~\(\bar{B}\) holds then no child is a strict improvement of the parent, but~\(A\) guarantees that some children are at least as good. Hence, if \(A, \bar{B}\) hold, we must have \(x^t = x^{t+1}\). Combining those remarks with the bounds of Claims~\ref{clm:conditional_drift_1} and \ref{clm:conditional_drift_2} gives 
    \begin{align*}
        \EE[ Z^t-Z^{t+1} \mid x,\lambda  ] \ge \Pr[A] \Pr[B] \cdot \frac{1 - c(1 - Z^t/n)}{e^c} - \Pr[\bar{A}] \cdot \frac{c}{1 - e^{-c}}. \label{eq:cor:drift_1}
    \end{align*}
    Lemma~\ref{lem:properties_AB} guarantees that \(\Pr[A]\) is at least a positive constant \(C\) when~\(\lambda \ge 1\) and that \(\Pr[\bar{A}] \le e^{-b_1 \lambda}\). Choosing \(a_1 = Ce^{-c}\), \(a_2 = c / (1 - e^{-c})\) and \(b = b_1\) gives the result.
    
\end{proof}

\subsection{Improvements are Sub-Gaussian}

In Sections~\ref{subsec:generations_csmall} and~\ref{sec:efficient_near_optimum} we will prove that the number of time steps needed to optimise a function is tightly concentrated. We provide the following result, based on Theorems~\ref{thm:subexponential_implies_subgaussian} and~\ref{thm:subgaussian_hit_fast}, which allows us to relate strong positive drift and concentration of hitting times. 

\begin{lemma}
    \label{lem:improvements_subgaussian}
    Consider the \saolea with parameters \(0 < c < 1 < F\) and with an arbitrary success rate \(s>0\), and let \(\varepsilon, C_1, C_2 > 0 \) be constants. Then there exist \(\gamma, \delta >0\) which only depend on \(c, F, s, \varepsilon, C_1, C_2\) such that the following holds.
     
    Let \(h\) be a decreasing function, \(g(x, \lambda) = \ZM(x) + h(\lambda)\) and define 
    \[
        \Gamma^t := 
        \min\{ 1 + C_1, g(x^t, \lambda^t) - g(x^{t+1}, \lambda^{t+1}) \}.
    \] 
    Assume the following two properties are satisfied 
    \begin{enumerate}[(i)]
        \item \(\EE\left[ \Gamma^{t} \mid x^t, \lambda^t \right] \ge \varepsilon\); \label{lem:itm:improvements_subgaussian_1}
        \item \(h(\lambda) - h(\lambda F) \le C_2\) for all \(\lambda \in [1, \infty)\). \label{lem:itm:improvements_subgaussian_2}
    \end{enumerate}
    Then \( \left( \varepsilon - \Gamma^t \right)_{t\ge0}\) is \((\gamma, \delta)\)-sub-Gaussian.

\end{lemma}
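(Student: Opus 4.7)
The plan is to apply Theorem~\ref{thm:subexponential_implies_subgaussian} to the sequence $X^t := \varepsilon - \Gamma^t$ with the natural filtration $\cF^t$ of the algorithm (large enough so that $(x^{t+1},\lambda^{t+1})$ is $\cF^t$-measurable). This reduces the task to two ingredients: first, the conditional-mean bound $\EE[X^{t+1} \mid \cF^t] \le 0$; second, an exponential tail bound $\Pr[|X^{t+1}| \ge x \mid \cF^t] \le \alpha/(1+\beta)^x$ for constants $\alpha,\beta > 0$ depending only on the lemma's parameters. The mean bound is immediate from assumption~\eqref{lem:itm:improvements_subgaussian_1}, since given $\cF^t$ the transition at step $t+1$ depends only on $(x^{t+1},\lambda^{t+1})$, so $\EE[\Gamma^{t+1} \mid \cF^t] = \EE[\Gamma^{t+1} \mid x^{t+1},\lambda^{t+1}] \ge \varepsilon$.

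For the tail bound, the lower side is free: $\Gamma^{t+1} \le 1+C_1$ by truncation, hence $X^{t+1} \ge \varepsilon - 1 - C_1$ and $\Pr[X^{t+1} \le -x \mid \cF^t] = 0$ whenever $x > 1 + C_1 - \varepsilon$. For the upper side I would decompose $g(x^{t+2},\lambda^{t+2}) - g(x^{t+1},\lambda^{t+1})$ into its $\ZM$- and $h$-parts. Assumption~\eqref{lem:itm:improvements_subgaussian_2} combined with a case analysis of the update rule bounds the $h$-part by $C_2$: in the success case $\lambda^{t+2} \ge \lambda^{t+1}/F$, so using the monotonicity of $h$, $h(\lambda^{t+2}) - h(\lambda^{t+1}) \le h(\lambda^{t+1}/F) - h(\lambda^{t+1}) \le C_2$, with the edge case $\lambda^{t+1}/F < 1$ handled by applying~\eqref{lem:itm:improvements_subgaussian_2} at $\lambda = 1$; in the failure case the $h$-part is $\le 0$ since $\lambda$ grows and $h$ is decreasing. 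Writing $M_K$ for the number of one-bit flips in the selected offspring at step $t+1$, the $\ZM$-part is at most $M_K$. Combining, $\Gamma^{t+1} \ge -(M_K + C_2)$, and therefore $\Pr[X^{t+1} \ge x \mid \cF^t] \le \Pr[M_K \ge x - \varepsilon - C_2 \mid \cF^t]$.

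The main obstacle is that $\lfloor \lambda^{t+1}\rceil$ is $\cF^t$-measurable but can be arbitrarily large, so a naive union bound over offspring gives a tail for $M_K$ that is not uniform in $\cF^t$. To circumvent this, I would use an FKG argument patterned on the proof of Claim~\ref{clm:conditional_drift_2}: conditioning on everything except the one-bit flips of the $j$-th offspring (including the tie-breaking randomness), the event $\{M_j \ge k\}$ is increasing in those flips while $\{K = j\}$ is decreasing, because by strict monotonicity of $f$ each extra one-bit flip in offspring $j$ strictly reduces $f(y^j)$ and hence can only reduce the (possibly tied) chance of being selected. The FKG inequality (Theorem~\ref{thm:FKG}) then yields $\Pr[M_j \ge k, K = j \mid \cF^t] \le \Pr[M_j \ge k] \cdot \Pr[K = j \mid \cF^t]$, and summing over $j$ with $\sum_j \Pr[K=j\mid\cF^t] = 1$ gives $\Pr[M_K \ge k \mid \cF^t] \le \Pr[\mathrm{Bin}(n,c/n) \ge k]$, a bound now independent of $\lambda^{t+1}$.

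Finally, the standard binomial tail $\Pr[\mathrm{Bin}(n,c/n) \ge k] \le (ec/k)^k$ decays faster than any geometric, so there are $\alpha',\beta' > 0$ depending only on $c$ with $\Pr[\mathrm{Bin}(n,c/n) \ge k] \le \alpha'(1+\beta')^{-k}$ for all $k \ge 0$. Absorbing the shift $\varepsilon + C_2$ and the bounded lower tail into new constants yields the desired $\Pr[|X^{t+1}| \ge x \mid \cF^t] \le \alpha/(1+\beta)^x$ with $\alpha,\beta$ depending only on $c, \varepsilon, C_1, C_2$. Theorem~\ref{thm:subexponential_implies_subgaussian} then produces $\gamma,\delta > 0$ depending only on $c, F, s, \varepsilon, C_1, C_2$ such that $(\varepsilon - \Gamma^t)_{t \ge 0}$ is $(\gamma,\delta)$-sub-Gaussian, as required.
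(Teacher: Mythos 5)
Your proposal is correct and follows essentially the same route as the paper's proof: both reduce to Theorem~\ref{thm:subexponential_implies_subgaussian}, get the mean condition from assumption~(i), dispose of the lower tail via the truncation at $1+C_1$, bound the $h$-contribution by $C_2$ using~(ii), and control the number of one-bit flips in the \emph{selected} offspring by an FKG decoupling $\Pr[N^j \ge k, K=j] \le \Pr[N^j \ge k]\Pr[K=j]$ followed by a binomial tail bound. The only cosmetic differences are that you spell out the edge case $\lambda/F<1$ and use the bound $(ec/k)^k$ where the paper uses $1/k! \le 2^{-k}$.
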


\begin{proof}
    We will use Theorem~\ref{thm:subexponential_implies_subgaussian} to prove that \((\varepsilon - \Gamma^t)\) is sub-Gaussian. To be able to apply this theorem, we must prove that \(\EE\left[ \varepsilon - \Gamma^t \right] \le 0\) and that there exist \(0 < \alpha\) and \(0 < \beta < 1\) such that \(\Pr[ | \varepsilon - \Gamma^t | > w ] \le \alpha / (1 + \beta)^w\) for all \(w \ge 0\). The first immediately holds by \ref{lem:itm:improvements_subgaussian_1}, so we focus on the second.
    
    Let \(y^{1}, \dots, y^{\lfloor{\lambda}\rceil}\) be the children at step \(t\) and let \(K = \argmax_j f(y^{j})\) the index of the fittest child. We also let \(N^{1}, \ldots, N^{\lfloor{\lambda}\rceil}\) be the number of \(1\)-bit flips in \(y^{1}, \ldots, y^{\lfloor{\lambda}\rceil}\) and we define \(N = N^{K}\). Clearly \(N \ge Z^{t+1} - Z^t\).
    
    Let \(w > C_2 + \varepsilon\) and \(w' = \lceil w - \varepsilon - C_2 \rceil\). Since the value of \(h\) may only decrease by \(C_2\) at each step by \ref{lem:itm:improvements_subgaussian_2}, to have \(\left( \varepsilon - \Gamma^t \right) \ge w\) it must be that~\(Z^{t+1} - Z^t \ge w - \varepsilon - C_2\), and in particular we must have \(N \ge w'\). 
    We compute
    \begin{align*}
        \Pr[ N \ge w' ]
            &\le \sum_{ j }{ \Pr[ N^{j} \ge w', K = j ] } \le \sum_{ j }{ \Pr[ N^{j} \ge  w'] \Pr[ K = j ] } \\
            &= \sum_{j}{ \binom{n - Z^t}{w'} \left(\frac{c}{n} \right)^{w'} \left( 1 - \frac{c}{n} \right)^{n - Z^t - w'} \Pr[ K = j ] } \\
            &\le 1/w'! \le 2^{-w'} \le 2^{\varepsilon + C_2 - w}.
    \end{align*}
    Above, the second inequality is obtained using the FKG inequality in the same fashion as in the proof of Claim~\ref{clm:conditional_drift_1}.
    
    The above implies that for all \(w > C_2 + \varepsilon\), the probability of having \(\varepsilon - \Gamma^t \ge w\) is bounded by \(\alpha / (1 + \beta)^w\) for \(\alpha = 2^{\varepsilon + C_2}\) and \(\beta = 1\). Up to possibly increasing \(\alpha\) to be a large constant, the same relation holds for~\(w \in [0, C_2 + \varepsilon]\). Since \(\Gamma^t\) is upper bounded by the constant $1 + C_1$, the quantity of interest \(\varepsilon - \Gamma^t\) is lower-bounded by \(\varepsilon - 1 - C_1\) so we can also trivially achieve 
    \begin{align*}
        \Pr[ \Gamma^t -\eps \ge w \mid x^t, \lambda^t ] \le \frac{\alpha}{(1 + \beta)^w},
    \end{align*}
    by possibly increasing \(\alpha\) again.
    
    Theorem~\ref{thm:subexponential_implies_subgaussian} may now be applied: \(\varepsilon - \Gamma^t\) is \((\gamma,\delta)\)-sub-Gaussian for some~\(\gamma, \delta\) depending on \(\varepsilon, C_1, C_2\).
    
\end{proof}


\section{Monotone Functions Are Efficiently Optimized for Small Suc\-cess Rates}
\label{sec:efficient}

In this section we analyse the \saolea when the success rate \(s\) is small, and the mutation rate is $c/n$ for a constant \(0 < c \le 1\). We show that if $s$ is sufficiently small then for \emph{any} strictly monotone fitness function, the optimum is found efficiently both in the number of generations and evaluations. We distinguish between the cases $c<1$ and $c=1$.

\subsection{Bound on the Number of Generations}
\label{subsec:efficient_gen}

In this subsection, we study the number of generations required to reach the optimum and show that for \(c \le 1\), the \saolea finds the optimum efficiently. We start with the case $c<1$. 

\begin{theorem}
    \label{thm:generations}
    Let \(0 < c < 1 < F\) be constants. Then there exist \(C, s_0 > 0\) such that for all \(0 < s \le s_0\) and for every dynamic monotone function the expected number of generations of the \saolea with success rate $s$, update strength $F$ and mutation probability \(c/n\) is at most  \(Cn\). 
\end{theorem}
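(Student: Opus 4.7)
The plan is to apply the Additive Drift Theorem (Theorem~\ref{thm:AdditiveDrift}) to a composite potential of the form $g(x^t, \lambda^t) = Z^t + h(\lambda^t)$, where $h : [1, \infty) \to [0, H_{\max}]$ is a decreasing function chosen so that $H_{\max}$ is an absolute constant and so that the drift of $g$ is uniformly bounded below by a constant $\delta > 0$ whenever $Z^t \ge 1$. Since $G^0 \le n + H_{\max} = O(n)$, this will immediately give $\EE[T] \le G^0/\delta = O(n)$ generations.

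A natural choice, mirroring the original Hevia Fajardo--Sudholt argument, is
\begin{equation*}
    h(\lambda) := A \max\{0,\; \log_F(\lambda^{\ast}/\lambda)\},
\end{equation*}
for positive constants $A, \lambda^{\ast} > 0$ to be tuned in terms of $c, F, s$. In the ``linear'' range $\lambda \in [1,\lambda^{\ast}/F]$, a successful step multiplies $\lambda$ by $1/F$ and increases $h$ by $A$, while a failed step multiplies $\lambda$ by $F^{1/s}$ and decreases $h$ by $A/s$ (the edge case $\lambda = 1$ on a success is only more favourable, as $\lambda^{t+1}$ stays at $1$). Writing $p$ for the per-generation success probability and combining this with the $Z$-drift from Lemma~\ref{lem:drift_Z}, one obtains
\begin{equation*}
    \EE[G^t - G^{t+1} \mid x^t, \lambda^t] \;\ge\; \Pr[B]\, a_1 (1-c) \;-\; a_2 e^{-b \lambda^t} \;-\; p\, A \;+\; (1-p)\, \tfrac{A}{s}.
\end{equation*}
Since an improvement requires flipping at least one zero-bit, $p \le \Pr[B]$, so one can split on whether $p \ge 1/(2(s+1))$ or not. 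In the high-success case $\Pr[B] \ge p \ge 1/(2(s+1))$ yields a positive $Z$-contribution of order $a_1(1-c)/(2(s+1))$, which dominates $-pA$ provided $A$ is small enough; in the low-success case the expected logarithmic growth of $\lambda$ yields an $h$-contribution of at least $A/(2s)$, which dominates the worst-case loss $a_2 e^{-b}$ provided $A/s$ is large enough. These constraints together require $A$ small but $A/s$ large, which is reconcilable exactly when $s$ is below a threshold $s_0 = s_0(c, F) > 0$. The boundary region $\lambda \in [\lambda^{\ast}/F,\, \lambda^{\ast} F^{1/s}]$ and the ``flat'' region $\lambda > \lambda^{\ast} F^{1/s}$, where $h$ saturates, are treated separately: by taking $\lambda^{\ast}$ to be a large enough constant, $a_2 e^{-b\lambda^{\ast}}$ becomes negligible, and in the flat region the $Z$-term alone dominates since $h$ does not change.

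The main technical obstacle is the simultaneous tuning of $A$, $\lambda^{\ast}$, and $s_0$: the inequality $A/s > a_2 e^{-b}$ pushes $A$ upward while the inequality needed to absorb $-pA$ in the high-success regime pushes $A$ downward, and one must verify that the resulting window is non-empty for suitable $s_0$. The piecewise-linear transition of $h$ at $\lambda = \lambda^{\ast}$ requires slightly coarser but still elementary estimates. Finally, the entire argument is robust to dynamic fitness changes, since Lemma~\ref{lem:drift_Z} and the drift computation for $h$ rely only on monotonicity within each generation; so the bound carries over verbatim to dynamic monotone $f$, giving the full statement of Theorem~\ref{thm:generations}.
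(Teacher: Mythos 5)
Your overall strategy --- the composite potential $Z^t + h(\lambda^t)$, the $\pm A$ and $\mp A/s$ accounting on successful/unsuccessful generations, the bound $p \le \Pr[B]$, and resolving the tension between ``$A$ small'' and ``$A/s$ large'' by taking $s$ small --- is exactly the paper's. But there is a structural error in your penalty term: you cap $h$ at a \emph{constant} $\lambda^{\ast}$, whereas the paper uses $h_1(\lambda) = -K_1\min\{0,\log_F(\lambda/\lambda_{\max})\}$ with $\lambda_{\max} = F^{1/s}n$, so that $h$ stays strictly decreasing up to $\lambda = \Theta(n)$. This is not cosmetic. Consider a state with $Z^t = 1$ and $\lambda^t$ a constant at least $\lambda^{\ast}$. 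With probability $(1-(1-c/n)^{n-1})^{\round{\lambda^t}} = \Omega(1)$ \emph{every} offspring flips a one-bit (the event $\bar A$), after which the selected offspring increases $Z$ by at least $1 - O(\lambda^t/n)$ in expectation; improvements, on the other hand, occur with probability at most $\Pr[B] = O(\lambda^t/n)$. Hence $\EE[Z^t - Z^{t+1}]$ is a \emph{negative} constant at such states, and since your $h$ is flat there, unsuccessful generations contribute nothing positive: $\EE[G^t-G^{t+1}]$ is a negative constant and the Additive Drift Theorem does not apply. (Your own bound shows this: in the flat region you only have $\Pr[B]\,a_1(1-c) - a_2e^{-b\lambda}$ to work with, and $\Pr[B]$ can be $O(1/n)$ while $a_2e^{-b\lambda^{\ast}}$ is a fixed constant, so ``$a_2e^{-b\lambda^{\ast}}$ becomes negligible'' is false when $Z^t$ is small.) The point of extending the penalty to $\lambda_{\max}=\Theta(n)$ is that (i) every unsuccessful generation with $\lambda < n$ then pays $+K_1/s$ into the potential, which is what rescues the drift when $Z^t$ is small and $\lambda^t=O(1)$, and (ii) in the residual flat region $\lambda \ge n$ one has $\Pr[B] \ge 1-e^{-c} = \Omega(1)$ even for $Z^t=1$. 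The resulting $H_{\max} = K_1\log_F\lambda_{\max} = O(\log n)$ is perfectly compatible with the $O(n)$ conclusion, so there is no need to force $H_{\max}=O(1)$.

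A second, smaller gap: your case split on $p \gtrless 1/(2(s+1))$ does not dispose of the $-a_2e^{-b\lambda}$ term in the high-success branch. There you only claim that $a_1(1-c)/(2(s+1))$ dominates $-pA$, but for $\lambda = O(1)$ the error term $a_2e^{-b\lambda}$ is a constant that can far exceed $a_1(1-c)/(2(s+1))$ (e.g.\ for $c$ close to $1$, where $a_1(1-c)$ is tiny while $a_2 = c/(1-e^{-c})$ is not). The fix is the one the paper uses: split on $\lambda$ rather than on $p$. For $\lambda$ above a suitable constant $\lambda_0$ the term $a_2e^{-b\lambda}$ is smaller than any fixed fraction of the guaranteed gain; for $\lambda \le \lambda_0$ one has $1-p \ge \Pr[\bar B]\ge e^{-b_2\lambda_0} = \Omega(1)$, so the unsuccessful-generation gain $\Omega(A/s)$ absorbs $a_2e^{-b\lambda}$ once $s$ is small enough. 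Your two tuning constraints on $A$ are then the right ones, but they must be checked against this refined case analysis rather than the one you state.
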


For $c=1$, we additionally need to assume that the update strength~$F$ is bounded from above by a suitable constant $F_0 >1$. As we will show experimentally, the update strength can have a notable impact on perfor\-mance, but it remains open whether this effect vanishes for sufficiently small~$s$.

\begin{theorem}
    \label{thm:generations_c1}
    There exist constants $F_0>1$, $s_0>0$ and $C>0$ such that for all $1 < F < F_0$, all \(0 < s \le s_0\), and for all dynamic monotone functions the expected number of generations of the \saolea with success rate $s$, update strength $F$ and mutation probability \(1/n\) is at most  \(Cn \log n\).
\end{theorem}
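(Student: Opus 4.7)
The plan is to establish multiplicative drift on a composite potential of the form $g^t := Z^t + h(\lambda^t)$ for a suitably chosen, non-negative, decreasing and bounded function $h:[1,\infty)\to[0,\infty)$; together with $x_{\min} = 1$ and $g^0 = O(n)$, Theorem~\ref{thm:MultiplicativeDrift} then yields $\EE[T] \leq (1 + \log n)/\delta = O(n\log n)$. Aiming for multiplicative rather than additive drift is natural: Lemma~\ref{lem:drift_Z} with $c=1$ gives $\EE[Z^t - Z^{t+1}\mid x^t,\lambda^t] \geq a_1\Pr[B]\cdot Z^t/n - a_2 e^{-b\lambda^t}$, whose best-case scaling is $\Theta(Z^t/n)$, matching the classical $O(n\log n)$ bound for the \ooea with mutation rate $1/n$ on monotone functions.

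I would choose $h$ so that the drift contributed by the $(1:s+1)$-update is positive (of order $\Omega(\log F/s)$) precisely when the success probability $p$ lies strictly below the target $1/(s+1)$. A natural candidate is $h(\lambda) = M\cdot\max\{0,\log \lambda^* - \log\lambda\}$ for a fixed threshold $\lambda^* = \Theta(n)$ and a constant $M = M(s,F)$; a short calculation in the spirit of Section~\ref{sec:prelim} shows that when $\lambda \leq \lambda^*/F^{1/s}$ the expected change of $h$ equals $(M\log F/s)(1 - p(s+1))$, which is positive iff $p < 1/(s+1)$. Because $\lambda^* = \Theta(n)$, we have $h(\lambda^t) = O(\log n)$, ensuring $g^0 \leq n + h(1) = O(n)$.

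The drift computation then splits on whether $\lambda^t$ lies above or below a threshold $\lambda_0(Z) = Kn/Z$ for a small constant $K > 0$. In the \emph{high-$\lambda$ regime}, Lemma~\ref{lem:properties_AB} gives $\Pr[B] = \Omega(1)$ and $e^{-b\lambda^t}$ is negligible, so $\EE[Z^t - Z^{t+1}\mid x^t,\lambda^t] = \Omega(Z^t/n)$; the drift of $h$ may be negative, but its magnitude per step is at most $M\log F$, which is absorbed for $F_0$ sufficiently close to $1$. In the \emph{low-$\lambda$ regime}, Lemma~\ref{lem:properties_AB} gives $\Pr[B] \leq 1 - e^{-b_2\lambda^t Z^t/n} = O(\lambda^t Z^t/n) \leq b_2 K$, which for $s < s_0$ small enough is strictly less than $1/(s+1)$; hence $p < 1/(s+1)$ and the drift of $h$ is the positive constant $\Omega(\log F/s)$. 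Since $g^t = O(n)$, the required drift $\delta g^t/n$ is at most $O(\delta)$, so in both regimes $\EE[g^t - g^{t+1}\mid x^t,\lambda^t] \geq \delta g^t/n$ holds for a sufficiently small $\delta = \delta(s,F) > 0$.

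The main obstacle is the behavior near the threshold: because the update rule is multiplicative, $\lambda$ can cross both $\lambda^*$ and the regime boundary $\lambda_0(Z)$ in a single step, and the kink in $h$ at $\lambda^*$ requires a careful local drift calculation. This can be handled either by smoothing $h$ in a constant-size window around $\lambda^*$ or by an explicit case analysis of the finitely many boundary configurations. The role of the assumption $F < F_0$ is precisely to prevent the negative drift of $h$ in the high-$\lambda$ regime from overpowering the $\Omega(Z^t/n)$ drift of $Z$; as noted in the paper, this constraint may well be an artefact of the proof.
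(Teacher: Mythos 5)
Your high-level strategy --- multiplicative drift for a composite potential $\ZM(x^t)+h(\lambda^t)$, with a case split on whether $\lambda^t$ exceeds a threshold of order $n/Z^t$ --- is exactly the paper's strategy. But your choice of penalty $h(\lambda)=M\max\{0,\log\lambda^*-\log\lambda\}$ is the logarithmic penalty that works for $c<1$, and it breaks down for $c=1$ in the high-$\lambda$ regime. Concretely: take $Z^t=O(1)$ and $\lambda^t=\Theta(n)$, so that $\lambda^tZ^t/n=\Omega(1)$. You need $\lambda^*=\Theta(n)$ (otherwise the penalty is inactive in the low-$\lambda$ regime $\lambda\le Kn/Z$ when $Z$ is constant, and you lose the positive $\Omega(M\log F/s)$ contribution there), so in this configuration the penalty is still active. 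The success probability is $\Omega(1)$ here, and each success decreases $\log_F\lambda$ by $1$, so the drift of $h$ is $-\Omega(M\log F)$, a negative \emph{constant}. The drift of $Z^t$ is only $\Theta(Z^t/n)=\Theta(1/n)$ for $c=1$ (this is precisely the difference from $c<1$, where it is $\Theta(1)$). A constant loss cannot be ``absorbed'' by a $\Theta(1/n)$ gain for any constant $F>1$: you would simultaneously need $M\log F\ge C_1 s$ for the low-$\lambda$ regime (to beat the $-a_2e^{-b\lambda}$ term of Lemma~\ref{lem:drift_Z}) and $M\log F\le C_2/n$ for the high-$\lambda$ regime, which is impossible for constants. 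Taking $F_0\to1$ does not help, since $F$ must remain a constant bounded away from $1$ and shrinking $\log F$ shrinks the positive and negative $h$-contributions by the same factor.

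The fix, and the reason the paper introduces a second potential for $c=1$, is to make the per-step change of the penalty scale like the $Z$-drift: the paper takes $h_2(\lambda)=K_2\max\{0,1/\lambda-1/\lambda_{\max}\}$, whose change per step is $\Theta(K_2/\lambda)$. In the high-$\lambda$ regime $\lambda\gtrsim n/Z^t$ this is $O(K_2(F-1)Z^t/n)$, so choosing $K_2=\alpha_1/(2(F-1))$ makes the loss at most half of the $\alpha_1Z^t/n$ gain from $Z^t$; in the low-$\lambda$ regime the gain $\Theta(K_2(1-F^{-1/s})/\lambda)$ is still $\Omega(Z^t/n)$ because $1/\lambda\ge Z^t/(Kn)$ there. (This is also where the hypotheses $F<F_0$ and $s\le s_0$ genuinely enter, via the factor $(1-F^{-1/s})/(F-1)$.) With that replacement your outline goes through essentially as the paper's Corollary~\ref{cor:drift_G2} and the Multiplicative Drift Theorem; as written, however, the high-$\lambda$ case of your drift estimate is false.
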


Our approach will be essentially the same for both theorems and will follow the ideas of Hevia Fajardo and Sudholt~\cite{hevia2021arxiv,hevia2021self}. We prove them in the following two subsections.


\subsection{Expected Number of Generations When \(c < 1\)}
\label{subsec:generations_csmall}

We will prove Theorem~\ref{thm:generations} in this section; for the remainder of this section, we assume that \(0 < c < 1 < F\) and the dynamic monotone function~$f$ are all given and we will show the existence of a desired \(s_0\) independent of~\(f\). Recall that \(x^t\) is the search point at time \(t\), its children are $y^{t,1},\ldots,y^{t,\round{\lambda_t}}$, and \(\lambda^t\) is the value of \(\lambda\) at time \(t\). In particular, the latter does not need to be an integer, and the actual number of offspring at time~\(t\) is the closest integer \(\lfloor{\lambda^t }\rceil\). Whenever the time \(t\) is clear from the context, we will remove it from the superscript.

We show that for an appropriate function \(g\), the drift \(\EE [ g(x^t, \lambda^t) - g(x^{t+1}, \lambda^{t+1}) ]\) is positive. Our choice of \(g\) will guarantee that \(g(x, \cdot) = 0\) implies \(x = (1,\cdots, 1)\), and the Additive Drift Theorem~\ref{thm:AdditiveDrift} will allow us to bound the time until this happens. For this section, we use the following \(g = g_1\).
\begin{definition}[Potential function for positive result]\label{def:h_positive}
    Let 
    \[
        h_1(\lambda) := K_1 \cdot \max\{0, \log_F \frac{\lambda_{\max}}{\lambda}\},
    \] 
    where \(K_1\) is a constant to be chosen later, and \(\lambda_{\max} := F^{1/s} n\). Then for all~$x\in\{0,1\}^n$ and $\lambda \in [1,\infty)$ we define
    \begin{align*}
        g_1(x, \lambda) := \ZM(x) + h_1(\lambda),
    \end{align*}
    Recall that $Z^t = \ZM(x^t)$, and denote $H_1^t:= h_1(\lambda^t)$ and $G_1^t := g_1(x^t,\lambda^t)$.
\end{definition}

Our first lemma states that \(g_1(x, \lambda)\) does not deviate much from \(\ZM(x)\) for all \(x, \lambda\), and that it suffices to show that \(g_1\) reaches $0$, since then the optimum is found.

\begin{lemma}
[``Sandwich'' inequalities relating the potential function and the fitness]
    \label{lem:sandwich}
    For all \(x \in\{0,1\}^n\) and \(\lambda \in [1,\infty)\) we have 
    \begin{align*}
        g_1(x, \lambda) - K_1 \log_F \lambda_{\max} \le \ZM(x) \le g_1(x, \lambda).
    \end{align*}
    In particular \(g_1(x, \lambda) = 0\) implies that \(\ZM(x) = 0\).
\end{lemma}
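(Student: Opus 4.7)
The plan is to observe that the entire content of the lemma follows from a two-sided bound on the penalty term $h_1(\lambda)$, namely $0 \le h_1(\lambda) \le K_1 \log_F \lambda_{\max}$ for all $\lambda \in [1,\infty)$. Once this is established, both inequalities and the ``in particular'' part are immediate from the definition $g_1(x,\lambda) = \ZM(x) + h_1(\lambda)$ by simple addition/subtraction of $\ZM(x)$, and the fact that $g_1 = 0$ forces both non-negative summands $\ZM(x)$ and $h_1(\lambda)$ to vanish.

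The lower bound $h_1(\lambda) \ge 0$ is direct from the definition: $h_1$ is $-K_1$ times $\min\{0,\log_F(\lambda/\lambda_{\max})\}$, which is a non-positive quantity, and $K_1 > 0$. For the upper bound, I would note that in the relevant regime $\lambda \in [1,\infty)$ we have $\lambda/\lambda_{\max} \ge 1/\lambda_{\max}$ (since $\lambda_{\max} = F^{1/s} n > 0$), hence $\log_F(\lambda/\lambda_{\max}) \ge -\log_F \lambda_{\max}$. Multiplying by $-K_1$ and using that the minimum is at most $\log_F(\lambda/\lambda_{\max})$ gives $h_1(\lambda) \le K_1 \log_F \lambda_{\max}$.

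With the bound $0 \le h_1(\lambda) \le K_1 \log_F \lambda_{\max}$ in hand, adding $\ZM(x)$ yields
\[
\ZM(x) \le \ZM(x) + h_1(\lambda) = g_1(x,\lambda) \le \ZM(x) + K_1 \log_F \lambda_{\max},
\]
and rearranging the right inequality gives the claimed lower bound on $\ZM(x)$. Finally, $g_1(x,\lambda) = 0$ combined with $\ZM(x) \ge 0$ and $h_1(\lambda) \ge 0$ forces $\ZM(x) = 0$.

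There is no real obstacle here: the statement is a bookkeeping lemma whose purpose is to let later drift calculations be carried out on $g_1$ rather than $\ZM$, while ensuring that bounds on the hitting time of $\{g_1 = 0\}$ translate directly into bounds on the hitting time of the optimum. The only thing to be careful about is to verify that the domain $\lambda \in [1,\infty)$ (enforced by the algorithm's update rule $\lambda^{t+1} \leftarrow \max\{1,\lambda^t/F\}$) is exactly what is needed to get the uniform upper bound on $h_1$.
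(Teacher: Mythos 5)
Your proposal is correct and is essentially identical to the paper's proof, which likewise reduces the lemma to the observation that $h_1(\lambda) \in [0, K_1 \log_F \lambda_{\max}]$ for all $\lambda \ge 1$. (One clause is slightly misphrased: for the upper bound you need that the minimum is \emph{at least} $-\log_F\lambda_{\max}$ — which holds since both arguments $0$ and $\log_F(\lambda/\lambda_{\max})$ are — rather than that it is ``at most $\log_F(\lambda/\lambda_{\max})$''; the surrounding computation makes clear this is what you meant.)
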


\begin{proof}
    The lemma follows trivially from the fact that 
        \[h_1(\lambda) = K_1 \max \left\{ 0, \log_F \left( \lambda_{\max} / \lambda \right) \right\} \in \left[0 , K_1 \log \lambda_{\max} \right].\]
\end{proof}

We will now compute the drift of \(G_1^t\). 
The drift of \(Z^t\) was already computed in the previous section, so it suffices to to compute that of \(H_1^t\).

\begin{sclaim}
    \label{clm:small_sc_drift_h}
    At all times \(t \ge 0\) we have
    \begin{align*}
        \EE \left[ H_1^t-H_1^{t+1} \mid x^t,\lambda^t  \right] \ge - K_1 \cdot \Pr \left[B\right] + \frac{K_1}{s} \cdot \Pr \left[ \bar{B} \right] \cdot \indicator{\lambda_t < n}. 
    \end{align*}
\end{sclaim}

\begin{proof}[Proof of Claim~\ref{clm:small_sc_drift_h}]
    We first give a general bound that we will use for the case that the fitness increases. We have $\lambda^{t+1} \ge \lambda^t/F$ and thus $\log_F (\lambda_{\max}/\lambda^{t+1}) \le 1 + \log_F (\lambda_{\max}/\lambda^{t})$ and $H_1^{t+1} \le K_1 + H_1^t$. In particular,
    \begin{align*}
        \EE \left[ H_1^t-H_1^{t+1} \mid x^t, \lambda^t, f(x^{t+1}) > f(x^t) \right] \geq -K_1.
    \end{align*}
    For $f(x^{t+1}) \le f(x^t)$, we have $\lambda^{t+1}\le \lambda^t$, and thus $H_1^{t}-H_1^{t+1} \geq 0$. If additionally $\lambda^t < \lambda_{\max} F^{-1/s} = n$, then $\lambda^t < F^{1/s}\lambda^{t} = \lambda^{t+1} \le \lambda_{\max}$, and hence $H_1^{t}-H_1^{t+1} = K_1/s$. Summarizing,
    \begin{align*}
        \EE \left[ H_1^t-H_1^{t+1} \mid x^t, \lambda^t, f(x^{t+1}) \le f(x^t) \right]
            &\ge\begin{cases}
                    K_1/s & \text{if \(1 \le \lambda^t < n\);}\\
                    0 & \text{if \(\lambda^t \ge n\).}\\
                \end{cases}
    \end{align*}
    This gives 
    \begin{align*}
        \EE\left[ H_1^t-H_1^{t+1} \right] \ge  - K_1 \Pr[f(x^{t+1}) > f(x^t)] + \frac{K_1}{s}  \Pr[f(x^{t+1}) \le f(x^t )] \, \indicator{\lambda^t < n }.
    \end{align*}
    Observe that \(f(x^{t+1}) > f(x^t)\) only if \(B\) holds: indeed for the fitness to increase at time \(t\), at least one child needs to mutate a \(0\)-bit of \(x^t\) into a~\(1\)-bit. This gives \(\Pr[ f(x^{t+1}) > f(x^t) ] \le \Pr[B]\), and Claim~\ref{clm:small_sc_drift_h} follows.
\end{proof}

Claims~\ref{clm:conditional_drift_1}, \ref{clm:conditional_drift_2} and \ref{clm:small_sc_drift_h} may now be combined to obtain the following drift of \(G_1^t\). We again drop the index $t$ from $x^t$ and $\lambda^t$.

\begin{corollary}
    \label{cor:drift_G1}
    There exists a constant \(s_0 > 0\) such that for all \(0 < s \le s_0\) the following holds. There is a constant \(\delta\) and a choice of \(K_1\) such that for all $t$ with $Z^t >0$,
    \begin{align*}
        \EE\left[ G_1^t - G_1^{t+1} \mid x,\lambda \right] \ge \delta.
    \end{align*}
    This also holds if \(G^t - G^{t+1}\) is replaced by \(\min\{ 1 + K_1/s, G^t - G^{t+1} \}\).
\end{corollary}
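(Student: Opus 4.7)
The plan is to combine Lemma~\ref{lem:drift_Z} with Claim~\ref{clm:small_sc_drift_h} by linearity. Using $c<1$ we have $1-c(1-Z^t/n) \ge 1-c>0$, so summing the two drift bounds gives
\[
\EE[G_1^t - G_1^{t+1} \mid x,\lambda] \;\ge\; \Pr[B]\,\bigl(a_1(1-c) - K_1\bigr) \;-\; a_2 e^{-b\lambda} \;+\; \tfrac{K_1}{s}\,\Pr[\bar B]\,\mathbf{1}_{\lambda<n}.
\]
The natural choice is $K_1 := a_1(1-c)/2$ so the coefficient of $\Pr[B]$ stays equal to the positive constant $a_1(1-c)/2$; then the first term is non-negative regardless of $\Pr[B]$, and the task reduces to showing that the third term swallows $a_2 e^{-b\lambda}$ whenever needed, provided $s$ is small.

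Then I carry out a case analysis on $\lambda$, using the tight bounds on $\Pr[B]$ from Lemma~\ref{lem:properties_AB}. Fix a constant $\lambda_0$ so large that $a_2 e^{-b\lambda_0}$ is, say, less than $(1-e^{-b_3})\,a_1(1-c)/8$, and take $n$ large so $\lambda_0<n$. In the regime $\lambda \le \lambda_0$, Lemma~\ref{lem:properties_AB} yields $\Pr[\bar B] \ge e^{-b_2\lambda_0}$, a positive constant, so the $\mathbf{1}_{\lambda<n}$ term is at least $(K_1/s)e^{-b_2\lambda_0}$, of order $\Theta(1/s)$, and dominates the bounded penalty $a_2 e^{-b\lambda}\le a_2$ for $s$ small enough. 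In the regime $\lambda>\lambda_0$ the penalty is already negligible, and I split on $\Pr[B]$: either $\Pr[B]\ge 1-e^{-b_3}$, in which case the first term alone exceeds $a_2 e^{-b\lambda_0}$; or $\Pr[B]<1-e^{-b_3}$, which by Lemma~\ref{lem:properties_AB} forces $\lambda<n$ (otherwise $\lambda\ge n,Z^t\ge1$ would give $\Pr[\bar B]\le e^{-b_3}$), so $\Pr[\bar B]>e^{-b_3}$ and the third term is again of order $1/s$. Choosing $s_0$ small enough that each case contributes at least a common $\delta>0$ yields the desired drift.

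For the truncated variant $\min\{1 + K_1/s,\; G_1^t - G_1^{t+1}\}$, I will use that Claims~\ref{clm:conditional_drift_1} and~\ref{clm:conditional_drift_2}, and therefore Lemma~\ref{lem:drift_Z}, remain valid with $Z^t - Z^{t+1}$ replaced by $\min\{1, Z^t - Z^{t+1}\}$. Combined with the deterministic bound $H_1^t - H_1^{t+1} \le K_1/s$ (attained in an unsuccessful step where $\lambda^t<n$), this gives $\min\{1,Z^t-Z^{t+1}\} + (H_1^t - H_1^{t+1}) \le 1+K_1/s$, hence $\min\{1+K_1/s, G_1^t-G_1^{t+1}\} \ge \min\{1, Z^t-Z^{t+1}\} + (H_1^t - H_1^{t+1})$, and the bound above transfers verbatim.

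The main obstacle I expect is ordering the choice of the three constants without circularity: $K_1$ is forced by the need to preserve a positive coefficient in front of $\Pr[B]$; $\lambda_0$ must then be chosen large enough (relative to $K_1$, $a_2$ and $b$) so that $a_2 e^{-b\lambda_0}$ is dominated by $(1-e^{-b_3})K_1$; and only after these are fixed can $s_0$ be tuned so that the $(K_1/s)\Pr[\bar B]$ mass beats $a_2 e^{-b\lambda}$ in every remaining regime. The interaction with the indicator $\mathbf{1}_{\lambda<n}$ is what makes the case split nontrivial, but it resolves cleanly because the very regime where this indicator vanishes ($\lambda \ge n$) is also the one where Lemma~\ref{lem:properties_AB} automatically pushes $\Pr[B]$ to a constant close to one.
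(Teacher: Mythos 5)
Your proposal is correct and follows essentially the same route as the paper: combine Lemma~\ref{lem:drift_Z} with Claim~\ref{clm:small_sc_drift_h}, set $K_1$ to half the positive coefficient in front of $\Pr[B]$ (the paper's $\alpha_1$ already absorbs the factor $1-c(1-Z^t/n)\ge 1-c$ that you make explicit), and use the bounds on $\Pr[B]$, $\Pr[\bar B]$ from Lemma~\ref{lem:properties_AB} in a case analysis over $\lambda$; your split ($\lambda\le\lambda_0$ versus $\lambda>\lambda_0$ with a subcase on $\Pr[B]$) is only a cosmetic reorganization of the paper's split ($\lambda<n$ with threshold $\lambda_0$ versus $\lambda\ge n$). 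Your handling of the truncated variant via the pointwise inequality $\min\{1+K_1/s,\,G_1^t-G_1^{t+1}\}\ge\min\{1,\,Z^t-Z^{t+1}\}+(H_1^t-H_1^{t+1})$ is a correct and slightly more explicit version of the paper's one-line justification.
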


\begin{proof}
    Combining Lemma~\ref{lem:drift_Z} and Claim~\ref{clm:small_sc_drift_h}, one obtains that the drift of \(G_1\) is at least 
    \begin{align}
        \EE\left[ G_1^t-G_1^{t+1} \mid x, \lambda \right]
            &\ge \Pr[B] \left( \alpha_1 - K_1 \right) + \indicator{\lambda < n} \Pr[\bar{B}] K_1/s - \alpha_2 e^{-\beta \lambda}, \label{eq:cor:drift_G1_initial}
    \end{align}
    for some constants \(\alpha_1, \alpha_2, \beta > 0\).
    
    We choose \(K_1 = \alpha_1 / 2\) so that the drift for any $t$ with $Z^t >0$ is at least 
    \begin{align*}
        \EE\left[ G_1^t-G_1^{t+1} \mid x, \lambda \right]
            &\ge \Pr[B] K_1 + \indicator{\lambda < n} \Pr[\bar{B}] K_1/s - \alpha_2 e^{-\beta \lambda}.
    \end{align*}
    
    \case{If \(\lambda < n\)} 
    then as we want \(s\) small enough, we may assume that \(s < 1\). In this setting the drift is lower bounded by
    \begin{align*}
        \EE[ G_1^t-G_1^{t+1} \mid x, \lambda ]
            &\ge \Pr[B] K_1 + \Pr[\bar{B}] K_1/s - \alpha_2 e^{-\beta \lambda} \\
            &= K_1 + \Pr[\bar{B}] K_1 (1-s)/s - \alpha_2 e^{-\beta \lambda}.
    \end{align*}
    Note that there is $\lambda_0 = \lambda_0(\alpha_2,\beta,K_1)$ such that for $\lambda \ge \lambda_0$ the last term can be bounded as $\alpha_2e^{-\beta\lambda} \le K_1/2$, in which case the drift is at least $K_1/2$. For the remaining case, recall Lemma~\ref{lem:properties_AB} guarantees that \(\Pr[\bar{B}] \ge e^{-b_2 \lambda}\) for some constant \(b_2 > 0\) depending only on \(c\). Hence, for a choice of \(s\) small enough we can achieve $\Pr[\bar{B}]K_1(1-s)/s \ge \alpha_2e^{-\beta\lambda}$, so the drift stays above~\(K_1 > K_1/2\).
    
    \case{If \(\lambda \ge n\)}
    the drift is 
    \begin{align*}
        \EE\left[ G_1^t - G_1^{t+1} \mid x, \lambda \right]
            &\ge \Pr[B] K_1 - \alpha_2 e^{-\beta n}.
    \end{align*}
    The first term is at least~\(\Pr[B] K_1 \ge ( 1 - (1 - c/n)^{\lfloor{\lambda}\rceil Z^t} ) K_1 \ge (1 - e^{-c}) K_1\) while the second is \(e^{-\Omega(n)} = o(1)\); this implies that the drift is at least \(\frac{(1 - e^{-c})}{2} K_1\) for sufficiently large $n$. 
    
    To see why the statement also holds for \(\min\{1 + K_1/s \; , \; G^t - G^{t+1}\}\), we recall that \(G^t = Z^t + H_1^t\), that Lemma~\ref{lem:drift_Z} holds for \(\min\{1, Z^t - Z^{t+1}\}\) and that~\(H_1\) may increase by at most \(K_1 / s\) in each step. This implies that the first formula \eqref{eq:cor:drift_G1_initial} also holds if we replace \(G^t - G^{t+1}\) by \(\min\{1 + K_1/s \; , \; G^t - G^{t+1}\}\) and all following arguments are unchanged. 
\end{proof}

We are now ready to prove the main theorem of this section.

\begin{proof}[Proof of Theorem~\ref{thm:generations}]
    Corollary~\ref{cor:drift_G1} guarantees that for \(s\) sufficiently small there is $\delta >0$ such that the drift of \(G_1\) is at least \(\EE [ G_1^t - G_1^{t+1} \mid x, \lambda ] \ge \delta \) whenever $Z^t >0$. 
    Let $T$ be the first point in time when either $G_1^T=0$ or~$Z^T=0$. Then the drift bound for $G_1$ applies to all $t<T$, and by Theorem~\ref{thm:AdditiveDrift} we have \(\EE[T] \le G_1^0/\delta \le (n+K_1\log(\lambda_{\max})) / \delta = O(n)\). By Lemma~\ref{lem:sandwich}, $G^T_1=0$ implies $Z^T=0$, so in particular at time \(T\) we have \(x^T = (1, \ldots, 1)\) and Theorem~\ref{thm:generations} is proved. 
\end{proof}


\subsection{Expected Number of Generations When \(c = 1\)}
\label{sec:gen_c1}

We will now prove Theorem~\ref{thm:generations_c1}; that is, we will show that the self-adjusting EA is also efficient when the mutation rate is \(1/n\). The reason we need to treat this case differently from the previous one is because of the expected number of bits gained when increasing the fitness. If we set~\(c = 1\), the drift obtained in Claim~\ref{clm:conditional_drift_1} is no longer constant but proportional to~\(Z^t/n\).
In particular, in the last stages of the exploration, the drift is a lot smaller and this results in a looser bound for the number of generations. 
Still, the proof is similar to the one for $c<1$, but we need to choose a different potential function.
\begin{definition}[Potential function for $c=1$]\label{def:h_positive2}
    Let 
    \[
        h_2(\lambda) :=  K_2 \cdot \max\{0, \tfrac{1}{\lambda} - \tfrac{1}{\lambda_{\max}}\},
    \] where \(K_2\) is a constant to be chosen later, and \(\lambda_{\max} = F^{1/s} n \). 
    Then for $x\in\{0,1\}^n$ and $\lambda \in [1,\infty)$ we define
    \begin{align*}
        g_2(x, \lambda) := \ZM(x) + h_2(\lambda),
    \end{align*}
    and we set $H_2^t := h_2(x^t,\lambda^t)$ and $G_2^t := g_2(x^t,\lambda^t)$, and as before $Z^t := \ZM(x^t)$.
\end{definition}

As before, we have a lemma stating that the deviation between $\ZM(x)$ and $g_2(x,\lambda)$ is small.
\begin{lemma}
[``Sandwich'' inequalities]
    \label{lem:sandwich_c1}
    For all \(x, \lambda\) we have 
    \begin{align*}
        g_2(x, \lambda) - K_2 \left(1-\frac{1}{\lambda_{\max}} \right) \le \ZM(x) \le g_2(x, \lambda).
    \end{align*}
    In particular \(g_2(x, \lambda) = 0\) implies that \(\ZM(X) = 0\).
\end{lemma}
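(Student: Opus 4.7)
The statement is purely a bookkeeping lemma analogous to Lemma~\ref{lem:sandwich}, so my plan is to reduce everything to a simple range check on the penalty term $h_2$. Concretely, both inequalities in the lemma will follow immediately once I establish that $h_2(\lambda) \in [0, K_2(1 - 1/\lambda_{\max})]$ for every $\lambda \in [1,\infty)$, since $g_2(x,\lambda) = \ZM(x) + h_2(\lambda)$.

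The plan is as follows. First, the lower bound on $h_2$: by definition $h_2(\lambda) = K_2 \cdot \max\{0, 1/\lambda - 1/\lambda_{\max}\} \ge 0$, which directly yields the upper sandwich inequality $\ZM(x) \le g_2(x,\lambda)$. Second, for the upper bound on $h_2$, I would use the fact that the algorithm maintains $\lambda \ge 1$ at all times (enforced by the update rule $\lambda^{t+1} \leftarrow \max\{1, \lambda^t/F\}$), so $1/\lambda \le 1$. Hence $1/\lambda - 1/\lambda_{\max} \le 1 - 1/\lambda_{\max}$, and because the max with $0$ only tightens this bound,
\begin{equation*}
    h_2(\lambda) \le K_2 \left(1 - \tfrac{1}{\lambda_{\max}}\right),
\end{equation*}
which yields the lower sandwich inequality $g_2(x,\lambda) - K_2(1 - 1/\lambda_{\max}) \le \ZM(x)$.

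For the "in particular" claim, note that $g_2(x,\lambda) = 0$ combined with the non-negativity of both $\ZM(x)$ and $h_2(\lambda)$ forces $\ZM(x) = 0$.

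There is no substantive obstacle here; the lemma is essentially a direct calculation establishing the allowed range of $h_2$, and it plays the same role in the $c = 1$ analysis that Lemma~\ref{lem:sandwich} played for $c < 1$. The only point worth double-checking is that the algorithm indeed never lets $\lambda$ drop below $1$, which is explicit in the update step of Algorithm~\ref{alg:saocl}; this justifies restricting attention to $\lambda \in [1,\infty)$ and hence bounding $1/\lambda$ by $1$.
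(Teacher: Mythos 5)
Your proof is correct and follows exactly the same route as the paper's: both reduce the lemma to the observation that $h_2(\lambda) \in \left[0, K_2\left(1 - 1/\lambda_{\max}\right)\right]$ for $\lambda \in [1,\infty)$, with the lower bound from the max with $0$ and the upper bound from $1/\lambda \le 1$. The paper states this range check in one line; you have merely spelled out the same details.
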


\begin{proof}
    Similar to the proof of Lemma \ref{lem:sandwich}, the proof follows from the fact that 
        \[h_2(\lambda) = K_2 \max \left\{ 0,  \frac{1}{\lambda} - \frac{1}{\lambda_{\max}}\right\} \in \left[0 , K_2 \left(1-\frac{1}{\lambda_{\max}} \right) \right],\]
    when \(\lambda \in \left[ 1, \infty \right)\).
\end{proof}

The drift of \(Z\) is known from Lemma~\ref{lem:drift_Z}, so to compute the drift of \(G\) it suffices to compute that of \(H_2\). As before, we abbreviate $x=x^t$ and $\lambda=\lambda^t$ where the index is clear from the context.
\begin{sclaim}
    \label{clm:conditional_drift_3_c1}
    At all times \(t \ge 0\) with $Z^t >0$ we have
    \begin{align*}
        \EE \left[ H_2^t-H_2^{t+1} \mid x, \lambda \right] \ge - \frac{K_2}{\lambda} (F-1) \Pr \left[B\right] + \frac{K_2}{\lambda}\left(1-F^{-1/s}\right) \Pr \left[ \bar{B} \right] \, \indicator{\lambda < n}. 
    \end{align*}
\end{sclaim}

\begin{proof}[Proof of Claim~\ref{clm:conditional_drift_3_c1}]
    Similar to the proof of Claim~\ref{clm:small_sc_drift_h}, we analyse the drift of~\(H_2^t\). We first give a general bound that we will use for the case that the fitness increases. We have $\lambda^{t+1} \ge \lambda^t/F$ and thus $ (\lambda^{t})^{-1}- (\lambda^{t+1})^{-1} \ge (1-F)/\lambda^t$. Hence, $H_2^t-H_2^{t+1} \ge K_2(1-F)/\lambda^t$ for all $t$. In particular,
    \begin{align*}
        \EE [ H_2^t - H_2^{t+1} \mid x^t, \lambda^t, f(x^{t+1}) > f(x^t) ]
            &\ge K_2 \frac{(1 - F)}{\lambda^t}.
    \end{align*}
    For $f(x^{t+1}) \le f(x^t)$, we have $\lambda^{t+1} \ge \lambda^t$, and thus $H_2^{t}-H_2^{t+1} \geq 0$. If additionally $\lambda^t < \lambda_{\max} F^{-1/s} = n$, then $\lambda^t < F^{1/s}\lambda^{t} = \lambda^{t+1} \le \lambda_{\max}$, and hence $H_2^{t}-H_2^{t+1} = K_2(1-F^{-1/s})/\lambda^t$. Summarizing,
    \begin{align*}
        \EE [ H_2^t - H_2^{t+1} \mid x^t, \lambda^t, f(x^{t+1}) \le f(x^t) ]
            &\ge\begin{cases}
                    K_2 (1 - F^{-1/s})/\lambda^t& \text{if \(\lambda_t < n\);}\\
                    0 & \text{if \(n \le \lambda_t\).}\\
                \end{cases}
    \end{align*}
    This gives 
    \begin{align*}
        \EE[ H_2^t - H_2^{t+1} ] & \ge  K_2  (1 - F)/\lambda^t\cdot \Pr[f(x^{t+1}) > f(x^t)] \\
        & \quad + K_2 (1-F^{-1/s})/\lambda^t \cdot \Pr[f(x^{t+1}) \le f(x^t)] \cdot \indicator{\lambda^t < n},
    \end{align*}
    where the conditioning on \(x, \lambda\) is implicit.
    Recall that \(B\) is the event that at least one of the offspring flips a 0-bit of $X_t$, which is a necessary condition for \(f(x^{t+1}) > f(x^t)\). Also, we have $1-F<0$ and $1-F^{-1/s}>0$ due to $F>1$, so replacing $\Pr[ f(x^{t+1}) > f(x^t) ]$ by its upper bound $\Pr[B]$ and replacing $\Pr[ f(x^{t+1}) \le f(x^t) ]$ by its lower bound $\Pr[\bar{B}]$, we conclude the proof.
\end{proof}

We can bound the drift of $G_2^t$ from below as follows.
\begin{corollary}
    \label{cor:drift_G2}
    There exists constants \(0 < s_0\) and \(1 < F_0\) such that the following holds. For all \(0< s \le s_0\) and all \(1 < F \le F_0\) there exists a choice of \(K_2\) and a constant \(\delta > 0\) such that for all times $t$ with $Z^t >0$,
    \begin{align*}
        \EE [ G^t_2 - G^{t+1}_2 \mid x, \lambda ] \ge \delta G_2^t/n.
    \end{align*}
\end{corollary}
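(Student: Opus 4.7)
The plan is to combine the drift bound for $Z^t$ from Lemma~\ref{lem:drift_Z} (specialized to $c=1$) with the $H_2^t$-drift bound from Claim~\ref{clm:conditional_drift_3_c1}. Setting $p := \Pr[B]$, this yields
\begin{align*}
    \EE[G_2^t - G_2^{t+1}\mid x,\lambda] \ge a_1 p \tfrac{Z^t}{n} - a_2 e^{-b\lambda} - \tfrac{K_2(F-1)}{\lambda}p + \tfrac{K_2(1-F^{-1/s})}{\lambda}(1-p)\indicator{\lambda<n}.
\end{align*}
Since $H_2^t \le K_2/\lambda$, to prove drift $\ge \delta G_2^t/n$ it suffices to show drift $\ge \delta Z^t/n + \delta K_2/(\lambda n)$ when $\lambda < \lambda_{\max}$ (and drift $\ge \delta Z^t/n$ when $\lambda \ge \lambda_{\max}$). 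Lemma~\ref{lem:properties_AB} will be used throughout to sandwich $\Pr[\bar B] \in [e^{-b_2\lambda Z^t/n}, e^{-b_3\lambda Z^t/n}]$.

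I would then split into three cases depending on $\lambda$ and the product $\lambda Z^t/n$. \emph{Case 1:} $\lambda \ge n$. Here $H_2^t \le K_2/n$ is essentially negligible and, because $Z^t \ge 1$, $p \ge 1-e^{-b_3} = \Omega(1)$. The drift-of-$Z$ term is $\Omega(Z^t/n)$, the penalty $-K_2(F-1)/\lambda$ is at most $K_2(F-1)/n$ which is absorbed if $F_0-1$ is small compared to $a_1(1-e^{-b_3})$, and $a_2 e^{-bn}$ is negligible for large $n$. \emph{Case 2a:} $\lambda < n$ and $\lambda Z^t/n \le T$ for a large constant $T$ to be fixed. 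Then $\Pr[\bar B] \ge e^{-b_2 T} = \Omega(1)$, so the positive $H_2$-contribution is $\Omega(K_2(1-F^{-1/s})/\lambda)$. Choosing $s_0$ small enough (relative to $F_0$) makes $1-F^{-1/s}$ bounded away from $0$, and taking $K_2$ large enough lets this term dominate both the $-K_2(F-1)/\lambda$ penalty and the $a_2 e^{-b\lambda}$ error (using that $\lambda e^{-b\lambda}$ is bounded). The target $\delta Z^t/n$ is also covered since $Z^t \le Tn/\lambda$ gives $\delta Z^t/n \le \delta T/\lambda$. \emph{Case 2b:} $\lambda < n$ and $\lambda Z^t/n > T$. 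Then $p \ge 1-e^{-b_3 T}$, which is arbitrarily close to $1$ for large $T$, so the drift of $Z$ delivers $\Omega(Z^t/n)$. The bound $1/\lambda \le Z^t/(Tn)$ absorbs the $-K_2(F-1)/\lambda$ term into a fraction of the $Z$-drift, and $a_2 e^{-b\lambda}$ is absorbed by $c_0 Z^t/n$ using $u e^{-u} \le 1/e$ with $u = bT/(Z^t/n)$.

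The constants will be fixed in the order $T$ (large), then $s_0$ (small, to keep $1-F^{-1/s} \ge 1/2$ uniformly over $s\le s_0$ and $F\le F_0$), then $K_2$ (large, to overpower $a_2/b$-type constants appearing in Case 2a), then $F_0$ (close to $1$, so that $K_2(F-1)$ is small compared to the $Z$-drift slack in Cases 1 and 2b), and finally $\delta$ small enough to leave slack in every inequality. The main technical obstacle is precisely this ordering: the constraints on $K_2$, $F_0$, and $s_0$ are interdependent, because making $K_2$ large to handle Case 2a shrinks the allowable range of $F-1$ in Cases 1 and 2b, while making $F_0$ close to $1$ forces $s_0$ to be correspondingly smaller to keep $F^{1/s}\ge 2$ (say). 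One must verify that the resulting system of inequalities is simultaneously satisfiable, which it is because the only coupling between $s_0$ and $F_0$ is through $F^{1/s}$ which can always be made large by picking $s_0$ sufficiently small after $F_0$ is fixed.
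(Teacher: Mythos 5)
Your overall architecture matches the paper's: combine Lemma~\ref{lem:drift_Z} (with $c=1$) and Claim~\ref{clm:conditional_drift_3_c1}, split on whether $\lambda$ is small or whether $\lambda Z^t/n$ is large, use Lemma~\ref{lem:properties_AB} to control $\Pr[\bar B]$, and convert the $\delta Z^t/n$ bound into $\delta G_2^t/n$ at the end. However, there is a genuine gap in how you fix the constants. The corollary must hold for \emph{all} $1<F\le F_0$, including $F$ arbitrarily close to $1$. For any fixed $s_0$, taking $s=s_0$ and $F\to 1^{+}$ gives $F^{1/s}\ge F^{1/s_0}\to 1$, hence $1-F^{-1/s}\to 0$. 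So your claims that ``choosing $s_0$ small enough (relative to $F_0$) makes $1-F^{-1/s}$ bounded away from $0$'' and that ``$F^{1/s}$ \ldots can always be made large by picking $s_0$ sufficiently small after $F_0$ is fixed'' are false: they would only hold if $F$ were bounded away from $1$, which the quantification does not allow. This breaks your Case~2a concretely: with $\lambda$ and $Z^t$ both of constant order, the $Z$-drift contributes only $O(1/n)$ while the error term $a_2e^{-b\lambda}$ is a negative constant, so the rescue must come from $\Pr[\bar B]\cdot K_2(1-F^{-1/s})/\lambda$; if $1-F^{-1/s}$ can be arbitrarily small, no constant $K_2$ (chosen, as in your ordering, before $F_0$) can make this term dominate.

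The repair is exactly what the paper does: let $K_2$ depend on $F$, namely $K_2=\alpha_1/(2(F-1))$. Then the relevant coefficient is $K_2(1-F^{-1/s})=\tfrac{\alpha_1}{2}\cdot\frac{1-F^{-1/s}}{F-1}$, and the \emph{ratio} $(1-F^{-1/s})/(F-1)$ is monotone increasing as $s$ decreases and tends to $+\infty$ as $F\to1$; one checks it is bounded below by $\min\{1/(2(F_0-1)),\,\Omega(1/(s_0F_0))\}$ uniformly over the whole range, so it can be made as large as needed by shrinking $s_0$ and $F_0$. Simultaneously $K_2(F-1)=\alpha_1/2$ stays a fixed constant, so your absorption of the $-K_2(F-1)/\lambda$ penalty in Cases~1 and~2b (via $1/\lambda\le Z^t/n$ resp.\ $1/\lambda\le Z^t/(Tn)$) goes through unchanged. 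With that modification -- $K_2$ chosen as a function of $F$ rather than as a large absolute constant -- your case analysis is sound and essentially coincides with the paper's proof.
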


\begin{proof}
    We will first show 
    \begin{align}\label{eq:cor:drift_G2}
        \EE [ G^t_2 - G^{t+1}_2 \mid x, \lambda ] \ge \delta' Z^t/n
    \end{align}
    for some $\delta'>0$. Combining Lemma~\ref{lem:drift_Z} together with Claim~\ref{clm:conditional_drift_3_c1} we obtain that the drift of \(G_2\) is at least
    \begin{align*}
        \EE [ G_2^t - G_2^{t+1} \mid x, \lambda ] 
        &  \ge  \Pr[B] \Big( \alpha_1 \frac{Z^t}{n} - K_2(F-1) \frac{1}{\lambda} \Big) - \alpha_2 e^{-\beta \lambda} \\
            \numberthis        
            & \qquad \qquad \qquad + \Pr[\bar{B}] K_2 (1-F^{-1/s}) \cdot \frac{1}{\lambda} \indicator{\lambda \le n}, \label{eq:cor:drift_G2:full_drift}
    \end{align*}
    for some constants \(\alpha_1, \alpha_2, \beta>0\).
    
    We will argue that if \(F>1\) and \(s>0\) are both small enough and if \(K_2\) is chosen appropriately, then the drift of \(G_2^t\) is of order \(Z^t / n\). We will choose~\(F, s\) later but we may already choose \(K_2 = \alpha_1 / (2(F-1)) \) so that the drift is at least 
    \begin{align*}
        \EE [ G_2^t - G_2^{t+1} \mid x, \lambda ]
            &\ge \Pr[B] \alpha_1 \left( \frac{Z^t}{n} - \frac{1}{2\lambda} \right) - \alpha_2 e^{-\beta \lambda} \\
            & \qquad \qquad \qquad + \Pr[\bar{B}] \alpha_1 \frac{1-F^{-1/s}}{2(F-1)} \cdot \frac{1}{\lambda} \indicator{\lambda \le n}.
    \end{align*}
    
    Our proof is based on a case distinction. Let \(b_3\) be the constant of Lemma~\ref{lem:properties_AB}, \(\gamma := 1 - e^{-b_3}\) and let
    \(\tilde{\lambda}>0\) be such that \(\gamma \alpha_1/(4 {\lambda}) - \alpha_2 e^{- \beta {\lambda}} \geq 0\) holds for all $\lambda \ge \tilde \lambda$. 
    
    \case{If \(\lambda \le \max\{ \tilde{\lambda}, n/Z^t \}\)} then by ignoring the first positive contribution in \eqref{eq:cor:drift_G2:full_drift}, the drift is at least 
    \begin{align*}
        \EE [ G_2^t - G_2^{t+1} \mid x, \lambda ]
            &\ge - \Pr[B] \alpha_1 \frac{1}{2\lambda} - \alpha_2 e^{-\beta \lambda} + \Pr[\bar{B}] \alpha_1 \frac{1-F^{-1/s}}{2(F-1)} \cdot \frac{1}{\lambda}.
    \end{align*}
    Splitting the positive contribution into three equal parts gives 
    \begin{align}\label{eq:c_one_small_lambda}
    \begin{split}
        \EE [ G_2^t - G_2^{t+1} \mid x, \lambda ]
            &\ge \Pr[\bar{B}] \frac{1}{3} \alpha_1 \frac{1-F^{-1/s}}{2(F-1)} \cdot \frac{1}{\lambda} \\
                &\qquad\qquad + \left( \Pr[\bar{B}] \frac{1}{3}\cdot \frac{1-F^{-1/s}}{(F-1)} - \Pr[B]  \right)  \frac{\alpha_1}{2\lambda} \\
                &\qquad\qquad + \Pr[\bar{B}] \frac{1}{3} \alpha_1 \frac{1-F^{-1/s}}{2(F-1)} \cdot \frac{1}{\lambda} - \alpha_2 e^{-\beta \lambda}.
    \end{split}
    \end{align}
    Recall that Lemma~\ref{lem:properties_AB} guarantees that \(\Pr[\bar{B}] = e^{-\Theta(\lambda Z^t/ n)}\). 
    For the currently considered range of \(\lambda\), we have \(\Pr[\bar{B}] = \Omega(1)\), while \(\Pr[B] = O(1)\). Also, \((1 - F^{-1/s})/(F-1) \overset{s \to 0}{\rightarrow} 1/(F-1) \overset{F \to 1}{\rightarrow} + \infty\), so a choice of \(F,s\) small enough (but constant) guarantees that the second and third line in~\eqref{eq:c_one_small_lambda} are both non-negative. This means that in the range of \(\lambda\) considered, the drift is at least some multiple of \(1/\lambda\), which is at least \(\delta' Z^t / n\) for a small enough constant~\(\delta'\).
    
    \case{If \(\lambda > \max\{\tilde{\lambda}, n/Z^t\}\)} then by ignoring the last positive contribution in \eqref{eq:cor:drift_G2:full_drift} we see that the drift is at least 
    \begin{align*}
        \EE [ G_2^t - G_2^{t+1} \mid x, \lambda ]
            &\ge \Pr[B] \alpha_1 \left( \frac{Z^t}{n} - \frac{1}{2\lambda} \right) - \alpha_2 e^{-\beta \lambda} \\
            &\ge \Pr[B] \alpha_1 \frac{Z^t}{2n} - \alpha_2 e^{-\beta \lambda} \\
            &\ge \Pr[B] \alpha_1 \frac{Z^t}{4n} + \Pr[B] \alpha_1 \frac{1}{4\lambda} - \alpha_2 e^{-\beta \lambda} .
    \end{align*}
    Lemma~\ref{lem:properties_AB} states that \(\Pr[B] \ge 1 - e^{-b_3 \lambda Z^t / n} \ge 1 - e^{-b_3} = \gamma\) since \(\lambda > Z^t/n\) and by definition of \(\gamma\).
    Since \(\lambda > \tilde{\lambda}\), the last two contributions sum up to a non-negative constant so that the drift is at least \(\gamma \alpha_1 \frac{Z^t}{4n} \ge \delta Z^t/n\).
    
    At any time when $Z^t \ge 1$ and $\lambda^t \ge 1$ we have
    \begin{equation*}
        G_2^t = Z^t + K_2 \max\{0, 1/\lambda - 1/{\lambda}_{\max}\} \le Z^t (1 + K_2),
    \end{equation*}
    so $Z^t \ge G_2^t / (1+K_2)$. Letting \(\delta' = \delta / (K_2 + 1)\) we have for all $t \ge 0$ with~$Z^t >0$
    \begin{align*}
        \EE [G_2^t - G_2^{t+1} \mid x, \lambda] \ge \delta' G_2^t.
    \end{align*}
    This proves~\eqref{eq:cor:drift_G2}. To relate $Z^t$ to $G_2^t$, recall that by Lemma~\ref{lem:sandwich_c1}, we have $Z^t \ge G_2^t - K_2$. To obtain a multiplicative drift, we distinguish two cases. If $Z^t \ge K_2$, then we have $Z^t \ge Z^t/2 +K_2/2 \ge (G_2^t-K_2)/2 + K_2/2 = G_2^t/2$. On the other hand, if $0< Z^t < K_2$, then we have $Z^t \ge 1 = 2K_2/(2K_2) \ge G_2^t/(2K_2)$. Hence, for all times $t<T$ we have 
    \begin{align*}
        \EE \left[ G^t_2 - G^{t+1}_2 \mid x, \lambda \right] \ge \delta' Z^t/n \ge \frac{\delta' G_2^t}{n\max\{2,2K_2\}},
    \end{align*}
    which concludes the proof.
\end{proof}

Now we are ready to prove the main result of this subsection.
\begin{proof}[Proof of Theorem~\ref{thm:generations_c1}]
    Corollary~\ref{cor:drift_G2} guarantees that at all times before the time $T$ when the optimum is found,
    \begin{align*}
        \EE \left[ G^t_2 - G^{t+1}_2 \mid x, \lambda \right] \ge \delta G_2^t/n,
    \end{align*}
    for a constant \(\delta > 0\). Moreover, $G_2^t \le Z^t \le n+K_2$ holds for all $t\ge 0$ by Lemma~\ref{lem:sandwich_c1}. Let $g_{\text{min}} := \inf \{g_2(x, \lambda) \mid x\in \{0,1\}^n, \ZM(x) >0 , \lambda  \geq 1\} = K_2$. By the Multiplicative Drift Theorem~\ref{thm:MultiplicativeDrift},
    \begin{align*}
        \EE [T] &\le  \frac{n \left(1+\log \big(\frac{g_2(x^0, \lambda^0)}{g_{\min}}\big)\right)}{\delta} 
        \le \frac{n\left( 1+ \log\big(\frac{n+K_2}{K_2}\big)\right)}{\delta} = O(n\log n),
    \end{align*}
    which concludes the proof.
\end{proof}


\subsection{Bound on the Number of Evaluations}\label{sec:eval_small_s}

We have proved that the number of generations is respectively \(O(n)\) or~\(O(n \log n)\) if \(c < 1\) or \(c = 1\). We will now turn our attention to the total number of function evaluations. For $c=1$, we again need to assume that~$F$ is sufficiently close to one. More precisely, we will show the following theorems.

\begin{theorem}
    \label{thm:num_evaluations_csmall}
    Let \(0<c < 1 < F\) be constants. Then there exist constants \(C, s_0 > 0\) such that for all~\(s \le s_0\) and every dynamic monotone function, the expected number of function evaluations of the \saolea with success rate $s$, update strength $F$ and mutation probability \(c/n\) is at most $C n \log n$.
\end{theorem}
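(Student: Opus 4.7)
The strategy is to combine the $O(n)$ bound on the number of generations from Theorem~\ref{thm:generations} with a level-based decomposition of the total offspring count, indexed by the best-so-far zero-count $Z^t_{\min} := \min_{s \le t} Z^s$ (as hinted in the introduction). I would first upgrade Theorem~\ref{thm:generations} to an exponential concentration statement by applying Lemma~\ref{lem:improvements_subgaussian} to the potential $g_1$ of Definition~\ref{def:h_positive}: condition~(i) holds by Corollary~\ref{cor:drift_G1}, while condition~(ii) follows from the explicit form of $h_1$, which changes by at most $K_1/s$ per generation. Theorem~\ref{thm:subgaussian_hit_fast} then yields $\Pr[T > An] \le e^{-Bn}$ for some constants $A, B > 0$.

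Since $Z^t_{\min}$ is monotone non-increasing from at most $n$ down to $0$, I would partition time into epochs indexed by the current value of $Z^t_{\min}$. Writing $S_k$ for the total offspring count during the (possibly empty) epoch at level $k$, the goal is to show $\EE[S_k] = O(n/k)$ and conclude by the harmonic estimate
\begin{align*}
    \EE\bigg[\sum_{t=0}^{T-1}\lfloor\lambda^t\rceil\bigg] = \sum_{k=1}^{n}\EE[S_k] \le \sum_{k=1}^{n}O(n/k) = O(n\log n).
\end{align*}

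The heart of the argument is the per-epoch bound. The $(1:s{+}1)$-rule implicitly targets a population size $\lambda^{*}(Z)\asymp n/Z$ at which the per-generation success probability equals $1/(s{+}1)$: below the target, failures dominate and $\lambda^t$ grows by $F^{1/s}$ per step; above it, successes dominate and $\lambda^t$ shrinks by a factor $F$ per step. Within the epoch at level $k$, the target is $\Theta(n/k)$, so a two-sided drift argument on $\log_F \lambda^t$, combined with the sub-Gaussian concentration of $\Gamma^t$ from Lemma~\ref{lem:improvements_subgaussian} (which yields $O(1)$ expected epoch length since a single epoch corresponds to a drop of $G_1$ by a constant $\ge 1$), should give the desired $\EE[S_k]=O(n/k)$.

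The main obstacle lies in making this per-epoch bound rigorous. Since $\lambda^t$ does not reset between epochs and the fitness function may change adversarially each generation, $\lambda^t$ can enter an epoch far from the new target $n/k$; bounding both the adjustment phase and the fluctuations around the target requires careful two-sided drift arguments on $\log_F \lambda^t$. One must also handle the regime $Z^t > Z^t_{\min} = k$, which can occur in the dynamic setting when a fitness improvement happens to increase the zero-count. I expect the cleanest path is to charge offspring costs to the relevant levels via a ledger argument, rather than the ratchet approach of~\cite{hevia2021arxiv}, in accordance with Remark~\ref{remark:error}.
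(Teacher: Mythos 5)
Your high-level plan --- track the best-so-far value $Z^t_*$, decompose the run into levels, aim for $\EE[S_k]=O(n/k)$ per level and sum the harmonic series --- is the same skeleton the paper uses, and your instinct to avoid Wald's equation is exactly right. But the proposal leaves the central step, $\EE[S_k]=O(n/k)$, unproven, and the route you sketch for it does not go through. First, ``a single epoch corresponds to a drop of $G_1$ by a constant $\ge 1$'' is not correct: $G_1$ can decrease purely through the penalty term $h_1(\lambda)$ (which shrinks whenever $\lambda$ grows), so a unit drop of the potential does not force $Z^t_*$ to decrease, and the additive drift of $G_1$ only gives an \emph{amortized} $O(1)$ generations per level, not a per-level bound. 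Second, and more importantly, even granting $O(1)$ expected epoch length and a target $\lambda^*=\Theta(n/k)$, you cannot multiply these two expectations: epochs that last longer are precisely those with more failures, hence larger $\lambda$, so the epoch length and the offspring counts within it are positively correlated. This is the same dependence issue that Remark~\ref{remark:error} identifies in the earlier literature; naming a ``ledger argument'' does not resolve it.

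The paper's proof supplies the missing machinery in three pieces that your proposal does not reconstruct. (1) Lemma~\ref{lem:expectation_lambda} bounds $\EE[\lambda^t\cdot\indicator{Z^t_*\ge z}]\le \linit/F^t+Cn/z$ at every \emph{fixed} time $t$, by decomposing over the last generation at which $\lambda$ increased; this sidesteps the correlation problem entirely. (2) The run is cut into phases of \emph{deterministic} length $D\log n$ (padded with idle steps), each responsible for crossing an interval of $\log n$ levels; Lemma~\ref{lem:cross_fast_whp} (via Lemma~\ref{lem:improvements_subgaussian} and Theorem~\ref{thm:subgaussian_hit_fast}, as you anticipated) shows each phase succeeds with probability $1-n^{-4}$, so a union bound over the $n/\log n$ phases works --- note that a per-level $O(1)$ expectation alone would not give a usable union bound over $n$ levels. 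With fixed-length phases, linearity of expectation over the $D\log n$ steps combined with (1) yields $O\!\left(\frac{n\log n}{n-(i+1)\log n}\right)$ evaluations per phase and the harmonic sum you want. (3) A bloc-restart mechanism with Lemma~\ref{lem:fast_return_to_normal} controls the cost of re-initialising when $\lambda$ has drifted too high, which addresses the ``$\lambda$ enters an epoch far from the new target'' obstacle you correctly flag but do not solve. Without analogues of these three ingredients your argument has a genuine gap at its core.
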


\begin{theorem}
    \label{thm:num_evaluations_cone}
    There exist constants \(C, s_0 >0\) and $F_0>1$ such that for all \(s \le s_0\), all $1 < F <F_0$ and every dynamic monotone function, the expected number of function evaluations of the \saolea with success rate $s$, update strength $F$ and mutation probability \(1/n\) is at most $C n^2 \log \log n$.
\end{theorem}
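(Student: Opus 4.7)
The plan is to convert the $O(n\log n)$ generation bound of Theorem~\ref{thm:generations_c1} into an evaluation bound by controlling the typical size of $\round{\lambda^t}$. Since the $(1{:}s{+}1)$-rule drives $\lambda^t$ toward the value $\lambda^*$ at which the per-generation success probability equals $1/(s+1)$, and since for $c=1$ this probability is of order $\lambda Z^t/n$ whenever the product is $O(1)$, the target satisfies $\lambda^*(Z^t)=\Theta(n/Z^t)$, and one expects $\sum_t \round{\lambda^t}\approx \sum_t n/Z^t$. The task is to make this heuristic rigorous while controlling the tails.

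I would first partition the run into phases indexed by $k\in\{0,1,\ldots,\lceil\log_2 n\rceil\}$: phase $k$ consists of all consecutive generations with $Z^t\in[2^k,2^{k+1})$. Applying the multiplicative drift of Corollary~\ref{cor:drift_G2} together with Theorem~\ref{thm:multiplicative_drift_concentration} to $g_2^t$ restricted to the interval $[2^k,2^{k+1})$, the length of phase $k$ is $O(n)$ in expectation and concentrated with exponentially decaying upper tails, uniformly in $k$.

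Next I would bound the typical $\lambda^t$ inside phase $k$ by introducing a new potential
\[
    g_3(x,\lambda) := \ZM(x) + h_3(\lambda),
\]
with a penalty $h_3$ designed so that $h_3$ is essentially flat for $\lambda\le C n/Z^t$ and then grows (e.g.\ linearly in $\log_F\lambda$) above that threshold. Mimicking Claim~\ref{clm:conditional_drift_3_c1} and Corollary~\ref{cor:drift_G2}, and using the sub-Gaussian machinery of Lemma~\ref{lem:improvements_subgaussian} and Theorem~\ref{thm:subgaussian_hit_fast} on truncated increments of $g_3$, one obtains with high probability that throughout phase $k$ the population size is bounded as $\lambda^t\le C n/2^k$. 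Hence the expected evaluations contributed by phase $k$ are $O(n)\cdot O(n/2^k)=O(n^2/2^k)$, and summing over $k$ telescopes to $O(n^2)$ from the bulk of the run.

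The main obstacle, and the likely source of the extra $\log\log n$ factor, is handling the final $\Theta(\log\log n)$ phases where $Z^t$ has shrunk to $O(\log n)$. In that regime the drift $\Omega(Z^t/n)$ on $Z^t$ is very weak, large excursions of $\lambda^t$ above $\lambda^*=\Theta(n)$ become harder to rule out, and the naive bound $\lambda^t\le\lambda_{\max}=\Theta(n)$ costs a factor of $\log n$ per phase. A sharp union bound over these final phases, combined with the concentration results of Section~\ref{sec:concentration} applied along a geometrically decreasing sequence of deviation thresholds, should yield only an extra $\log\log n$ rather than a full $\log n$. Balancing the interplay between phase length and the in-phase concentration of $\lambda^t$, while keeping the total union-bound failure probability $o(1)$ so that the remaining tail contributes only $o(n^2\log\log n)$ via the crude bound $\lambda^t\le\lambda_{\max}$, is the delicate part of the argument.
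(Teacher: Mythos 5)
Your overall strategy---decompose the run into phases in which $Z^t$ decreases by a fixed factor, bound the number of generations per phase via the multiplicative drift of $G_2$, and bound $\lambda^t$ inside phase $k$ by roughly $n/2^k$---is the same as the paper's (which uses phases where $Z^t$ drops by a factor $\log n$ rather than $2$, each of deterministic length $Dn\log\log n$). However, there is a genuine gap in how you control $\lambda^t$. You claim that ``with high probability, throughout phase $k$ the population size is bounded as $\lambda^t\le Cn/2^k$.'' This is false for any constant $C$, in every phase, not just the final ones: the $(1{:}s{+}1)$-rule only drives $\log_F\lambda$ towards $\log_F\lambda^*$ with $\lambda^*=\Theta(n/Z^t)$, and the fluctuations around this target have tails decaying only geometrically \emph{at the scale of $\lambda^*$ itself}. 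Concretely, the computation in the proof of Lemma~\ref{lem:expectation_lambda} gives $\Pr[\lambda^t\ge \ell,\,Z^t_*\ge z]\le C'(1-cz/(2e^cn))^{\ell F^{-1/s}}$, which for $\ell=Cn/z$ is $\approx\exp(-cCF^{-1/s}/(2e^c))$, a constant independent of $n$; conversely, a run of $s\log_F C$ consecutive unsuccessful generations starting from $\lambda\approx n/Z^t$ has probability bounded below by a constant (Lemma~\ref{lem:properties_AB}) and pushes $\lambda$ above $Cn/Z^t$. Over a phase of length $\Theta(n)$ this happens $\Theta(n)$ times in expectation, so no union bound rescues a uniform whp bound at scale $n/2^k$; the best uniform whp bound is at scale $\Theta(n\log n/2^k)$ (this is exactly the paper's bloc-failure threshold), which would cost you an extra $\log n$ factor.

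The second, related, problem is that you multiply the \emph{expected} phase length $O(n)$ by a (purported) uniform bound on $\lambda^t$. Even if both bounds held, phase length and the $\lambda^t$-values inside the phase are strongly positively correlated---phases are long precisely when improvements fail, which is when $\lambda$ grows---so the product of the two bounds does not bound the expected number of evaluations. The paper circumvents both issues simultaneously: phases have a \emph{deterministic} length (with idle steps if the target is reached early, and a restart of the whole bloc upon failure), and $\lambda^t$ is controlled only \emph{in expectation} per generation via $\EE[\lambda^t\cdot\indicator{Z^t_*\ge z}]\le \linit/F^t+Cn/z$; summing these per-step expectations over the fixed number of generations of a phase is then legitimate by linearity. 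Your plan to absorb the failure event via the crude bound $\lambda^t\le\lambda_{\max}$ also needs such a restart argument, since conditioned on a phase failing, neither the remaining number of generations nor the conditional expectation of future evaluations is bounded a priori. Without the fixed-length/idle-step/bloc structure (or an equivalent decoupling device), the core estimate ``evaluations in phase $k$ $=O(n)\cdot O(n/2^k)$'' does not follow.
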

\begin{remark}
    Theorem~\ref{thm:num_evaluations_csmall} is tight since any unary unbiased algorithm needs at least $\Omega(n\log n)$ function evaluations to optimize \onemax~\cite{lehre2012black}. On the other hand, Theorem~\ref{thm:num_evaluations_cone} is not tight. Calculating a bit more precisely would allow to replace the $\log \log n$ factor by an even smaller factor. However, we suspect that even the main order $n^2$ is not tight, since the \ooea with~$c=1$ is known to need time~$O(n^{3/2})$ even in the pessimistic PO-EA model~\cite{colin2014monotonic}, which includes every dynamic monotone function. The order~$n^{3/2}$ is tight for the PO-EA, but a stronger bound of $O(n\log^2 n)$ is known for all static monotone functions~\cite{lengler2019does}, and an $O(n\log n)$ bound is known for all dynamic linear functions~\cite{lengler2018noisy}. 
    
    We conjecture that the number of function evaluations required to opti\-mise \emph{static} monotone functions is linear up to some logarithmic factors (in fact, we conjecture $O(n \log n)$) even for $c=1$. However, the methods used in~\cite{lengler2019does} are rather different from the ones in this paper, so it remains unclear whether they can be transferred.

    We also conjecture that \emph{dynamic} monotone functions are harder to opti\-mise, i.e., that \(O(n \log n)\) generations and \(O(n^{3/2})\) evaluations are tight. More precisely, we conjecture that the `adversarial' \dynBV des\-cribed in our conclusion is the hardest dynamic monotone function for the \saolea, and requires \(\Omega(n \log n)\) generations and \(\Omega(n^{3/2})\) evaluations. 
\end{remark}

\begin{remark}\label{remark:error}
Our approach uses the best-so-far \zeromax value $Z^t_*$ defined below as in~\cite{hevia2021arxiv,hevia2021self}. However, apart from that our proof is rather different. In fact, we believe that the proof in these papers is not fully correct. In the proof of Theorem~3.5 in~\cite{hevia2021arxiv}, the authors bound the number of evaluations per generation by identically distributed random variables, and use Wald's equation to bound the total number of evaluations. However, Wald's equation is only true for the sum of \emph{independent} random variables (or for similar conditions, e.g.~\cite{doerr2015optimizing}), a condition that is not satisfied in this situation. (The random variables are identically distributed, but not \emph{independently} iden\-ti\-cally distributed.) 
Thus we need to use a different approach.
\end{remark}

To avoid the issue mentioned above, we will decompose the interval \([n]\) into smaller `sub-intervals' and we will show that with very high probability, the time needed for \(\ZM(x)\) to `traverse' such an interval is of the expected order. We will compute the expected number of children at each of those steps, and will conclude using linearity of expectation.

To prove concentration of the time needed to traverse `sub-intervals' we will use Theorem~\ref{thm:subgaussian_hit_fast} in the case \(c < 1\) and Theorem~\ref{thm:multiplicative_drift_concentration} in the case \(c = 1\). The key ideas are summarised in the following lemmas. The first one is an adaptation of one proved by Hevia Fajardo and Sudholt. Below, we let \(Z^t_* := \min_{t' \le t} (Z^{t'})\) be the smallest value of \(Z^t\) observed until time $t$. Naturally, the process is unaware of this value but it will turn out useful for the analysis. We will apply the following lemmas to intermediate stages of a run, so we will consider an arbitrary starting population size $\linit$ in them. 

\begin{lemma}[Fajardo, Sudholt~\cite{hevia2021arxiv,hevia2021self}]
    \label{lem:expectation_lambda}
    Consider the \saolea as in Theorems~\ref{thm:num_evaluations_csmall} or~\ref{thm:num_evaluations_cone}, with an arbitrary initial search point and an initial value of \(\lambda = \linit\). There exists a constant \(C > 0\) such that at all times $t\ge 0$ and for all \(z > 0\) we have 
    \begin{align*}
        \EE \left[ \lambda^t \cdot \indicator{Z^t_* \ge z} \right] \le \linit / F^t + Cn/z.
    \end{align*}
\end{lemma}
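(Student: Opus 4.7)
}
The plan is to set up a one-step recurrence for the quantity $a_t := \EE[\lambda^t \cdot \indicator{Z^t_* \ge z}]$ of the form $a_{t+1} \le a_t/F + Cn/z$, and then iterate. The crucial observation is that the event $\{Z^t_* \ge z\}$ is monotone in $t$ (it can only stop holding once), and it implies $Z^t \ge z$. Hence on this event Lemma~\ref{lem:properties_AB} gives that the success probability $p_t = \Pr[f(x^{t+1}) > f(x^t) \mid x^t,\lambda^t]$ satisfies $1 - p_t \le e^{-\alpha \lfloor \lambda^t \rceil z / n}$ for the constant $\alpha = \tfrac12 c e^{-c}$.

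First I would rewrite one step of the update rule as
\begin{align*}
   \lambda^{t+1} \;\le\; \lambda^t / F + 1 + \indicator{\text{failure}} \cdot \lambda^t (F^{1/s} - 1/F),
\end{align*}
where the ``$+1$'' accounts for the $\max\{1, \lambda^t/F\}$ and is only relevant when $\lambda^t < F$. Taking conditional expectation and restricting to the event $\{Z^t_* \ge z\}$ yields
\begin{align*}
   \EE\bigl[\lambda^{t+1} \indicator{Z^t_* \ge z} \,\big|\, x^t,\lambda^t\bigr] \;\le\; \bigl(\lambda^t/F + 1 + (1-p_t)\lambda^t (F^{1/s}-1/F)\bigr) \indicator{Z^t_* \ge z}.
\end{align*}

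The key analytic step is then to observe that on $\{Z^t_* \ge z\}$ the function $\lambda \mapsto \lambda \cdot e^{-\alpha \lambda z / n}$ is uniformly bounded by its maximum $n/(\alpha e z)$, attained at $\lambda = n/(\alpha z)$. Combined with $1 - p_t \le e^{-\alpha \lfloor \lambda^t \rceil z/n}$ this gives $(1-p_t)\lambda^t \le D n/z$ for a constant $D$ depending only on $c$, and therefore
\begin{align*}
   \EE\bigl[\lambda^{t+1}\indicator{Z^t_* \ge z} \,\big|\, x^t,\lambda^t\bigr] \;\le\; \frac{\lambda^t}{F}\indicator{Z^t_* \ge z} + C\frac{n}{z}\indicator{Z^t_* \ge z}
\end{align*}
for some constant $C = C(c,F,s)$. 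Taking unconditional expectations and using $\indicator{Z^{t+1}_* \ge z} \le \indicator{Z^t_* \ge z}$ delivers the clean recurrence $a_{t+1} \le a_t/F + Cn/z$.

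Iterating this recurrence from $a_0 \le \linit$ produces a geometric series whose sum is at most $\linit/F^t + Cn/z \cdot F/(F-1)$, which is the claimed bound after absorbing $F/(F-1)$ into the constant. The main conceptual obstacle is isolating the ``target-value'' term $Cn/z$: it arises precisely from the fact that once $\lambda^t$ significantly exceeds $n/z$, the success probability is so close to $1$ that the expected one-step factor is essentially $1/F$, while for smaller $\lambda^t$ the additive term $(1-p_t)\lambda^t(F^{1/s}-1/F)$ is trivially bounded via the unimodal envelope of $\lambda e^{-\alpha\lambda z/n}$. All other steps are bookkeeping with the update rule.
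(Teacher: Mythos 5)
Your proof is correct, but it takes a genuinely different route from the one in the paper. The paper expands $\EE[\lambda^t\cdot\indicator{Z^t_*\ge z}]$ as $1+\sum_{\ell\ge1}\Pr[\lambda^t\ge\ell,\,Z^t_*\ge z]$ and, for each level $\ell$, decomposes according to the last generation $t-k$ at which $\lambda$ increased: at that generation the offspring count was at least roughly $\ell F^{k-1/s}$ and the generation was nevertheless unsuccessful despite $Z\ge z$, an event of probability at most $(1-cz/(2e^cn))^{\lfloor \ell F^{k-1/s}\rfloor}$; summing the resulting series over $k$ and then over $\ell$ yields the bound. You instead derive the one-step recurrence $a_{t+1}\le a_t/F+Cn/z$ for $a_t:=\EE[\lambda^t\indicator{Z^t_*\ge z}]$, with the additive term controlled by the envelope bound $\lambda e^{-\alpha\lambda z/n}\le n/(\alpha e z)$ applied to $(1-p_t)\lambda^t$, and then iterate. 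Your argument is more elementary and arguably cleaner --- it needs only the last display of Lemma~\ref{lem:properties_AB} and a geometric series --- whereas the paper's tail-sum decomposition gives level-by-level bounds on $\Pr[\lambda^t\ge\ell,\,Z^t_*\ge z]$ that would also support concentration statements, which are not needed here. Two small points to make explicit in a write-up: the conditioning should be on the full history $\mathcal F^t$ (not just $(x^t,\lambda^t)$) so that $\indicator{Z^t_*\ge z}$ is measurable and can be pulled out of the conditional expectation; and the exponent in Lemma~\ref{lem:properties_AB} involves $\lfloor\lambda^t\rceil$ rather than $\lambda^t$, which costs only a constant factor since $\lfloor\lambda\rceil\ge\lambda/1.5$ for $\lambda\ge1$. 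Neither affects the validity of the argument.
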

    
\begin{lemma}
    \label{lem:fast_return_to_normal}
    Consider the self-adjusting \((1, \lambda)\)-EA as in Theorems~\ref{thm:num_evaluations_csmall} or~\ref{thm:num_evaluations_cone}, with an arbitrary initial search point and an initial value of \(\lambda = \linit\). Let \(T\) denote the first time \(t\) at which \(\lambda^t \le 8e n \log n /Z^t\). There exists an absolute constant \(C > 0\) such that 
    \begin{align*}
        \EE \left[ \sum_{t = 1}^{T}{\lambda^t} \right] \le C \linit.
    \end{align*}
\end{lemma}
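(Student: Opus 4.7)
The plan is to exploit the fact that the threshold $8en\log n/Z^t$ is chosen precisely so that, while $\lambda^t$ exceeds it, the one-generation success probability is overwhelmingly close to one. Concretely, I would first invoke the third bound of Lemma~\ref{lem:properties_AB}: together with $\lfloor\lambda\rceil \ge \lambda/2$ for $\lambda\ge 1$, the condition $\lambda^t > 8en\log n/Z^t$ forces the failure probability at step $t$ to be at most $n^{-\gamma}$, where $\gamma:=2ce^{1-c}>0$ depends only on $c$. Moreover, for $n$ sufficiently large the threshold itself already implies $\lambda^t > F$, so a success cleanly produces $\lambda^{t+1}=\lambda^t/F$ without the $\max\{1,\cdot\}$ being active.

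Next, I would establish a one-step contraction: on the $\mathcal{F}^t$-measurable event $\{t+1\le T\}$ (which depends only on $\lambda^{t'}, Z^{t'}$ for $t'\le t$), the failure probability $p_t$ is at most $n^{-\gamma}$ and hence
\[
\EE[\lambda^{t+1}\mid\mathcal{F}^t] \le \Bigl(\tfrac{1}{F}(1-p_t)+F^{1/s}p_t\Bigr)\lambda^t \le q\,\lambda^t,
\]
with $q:=\tfrac12(1+F^{-1})<1$ for $n$ large enough, since $F^{1/s}/n^{\gamma}\to 0$. Multiplying by the indicator, taking expectations, and using $\indicator{t+1\le T}\le\indicator{t\le T}$ gives $\EE[\lambda^{t+1}\indicator{t+1\le T}]\le q\,\EE[\lambda^t\indicator{t\le T}]$. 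Iterating from $\lambda^0=\linit$ then yields $\EE[\lambda^t\indicator{t\le T}]\le q^t\linit$ for all $t\ge 0$.

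Summing finally gives
\[
\EE\!\left[\sum_{t=1}^{T}\lambda^t\right] = \sum_{t\ge 1}\EE\!\left[\lambda^t\indicator{t\le T}\right] \le \sum_{t\ge 1}q^t\linit = \frac{q\linit}{1-q} = O(\linit),
\]
which is the claimed bound for large $n$; small $n$ are absorbed into the constant $C$. The only delicate point is the measurability check — one must condition on $\{t+1\le T\}$ rather than on $\{t\le T\}$ in order for the per-step bound to apply, and this timing mismatch is handled by the monotonicity $\indicator{t+1\le T}\le\indicator{t\le T}$. Beyond this, the argument is a clean geometric-drift computation that bypasses the ratchet and Wald-type difficulty flagged in Remark~\ref{remark:error}, and in particular does not require any composite potential involving a penalty term $h(\lambda)$.
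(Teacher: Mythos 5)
Your proposal is correct and follows essentially the same route as the paper's proof: both establish a one-step multiplicative contraction $\EE[\lambda^{t+1}\indicator{t+1\le T}]\le q\,\EE[\lambda^t\indicator{t\le T}]$ from the fact that above the threshold the failure probability is $o(1)$ (you via the third bound of Lemma~\ref{lem:properties_AB}, the paper by re-deriving the single-offspring improvement probability), and then sum the resulting geometric series. Your handling of the final term through $\indicator{t+1\le T}\le\indicator{t\le T}$ is a slightly cleaner substitute for the paper's use of $\lambda^t\le F^{1/s}\lambda^{t-1}$, but the argument is the same.
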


\begin{lemma}
    \label{lem:cross_fast_whp}
    Let \((a,b)\) be an interval of length \(b-a = \log n\). 
    Consider the self-adjusting \((1, \lambda)\)-EA with $c<1$ as in Theorem~\ref{thm:num_evaluations_csmall}, with an initial search point \(x = \xinit\) such that \(\ZM(\xinit) \le b\), and an arbitrary initial value of \(\lambda\).
    
    Let \(T\) be first time \(t\) at which \(Z^t \le a\). Then there exists an absolute constant \(D > 0\) such that \(T \le D \log n\) with probability at least \(1 - n^{-4}\).
\end{lemma}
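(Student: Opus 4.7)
The plan is to apply Theorem~\ref{thm:subgaussian_hit_fast} to the potential $G_1^t = Z^t + h_1(\lambda^t)$ from Definition~\ref{def:h_positive} with the truncated increment $\Gamma^t := \min\{1 + K_1/s,\, G_1^t - G_1^{t+1}\}$. Corollary~\ref{cor:drift_G1} gives a choice of $K_1$ and a constant $\delta > 0$ such that, for every $s \le s_0$, we have $\EE[\Gamma^t \mid x^t, \lambda^t] \ge \delta$ whenever $Z^t > 0$. The penalty $h_1$ is non-increasing, and a direct inspection of Definition~\ref{def:h_positive} yields $h_1(\lambda) - h_1(\lambda F) \le K_1$ for all $\lambda \ge 1$. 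Hence Lemma~\ref{lem:improvements_subgaussian} produces constants $\gamma, \delta_0 > 0$ such that the sequence $(\delta - \Gamma^t)_{t \ge 0}$ is $(\gamma, \delta_0)$-sub-Gaussian.

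Next, set $N := (b-a) + K_1 \log_F \lambda_{\max} + 1$. Since $\lambda_{\max} = F^{1/s} n$ and $c, F, s$ are constants, $K_1 \log_F \lambda_{\max} = K_1/s + K_1 \log_F n = O(\log n)$, so $N = O(\log n)$. Theorem~\ref{thm:subgaussian_hit_fast} then supplies a constant $D_0 > 0$ such that the first time $T^*$ with $\sum_{t=1}^{T^*} \Gamma^t \ge N$ satisfies $\Pr[T^* > \tau] \le e^{-D_0 \tau}$ for every $\tau \ge 2N/\delta$. Choosing $D$ large enough (depending only on $c, F, s$) so that $\tau := D \log n$ satisfies both $\tau \ge 2N/\delta$ and $D_0 \tau \ge 4 \log n$, we obtain $\Pr[T^* > D \log n] \le n^{-4}$.

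It remains to deduce $T \le T^*$. The truncation guarantees $\Gamma^t \le G_1^t - G_1^{t+1}$, so $G_1^0 - G_1^{T^*} \ge \sum_{t=1}^{T^*} \Gamma^t \ge N$. Combining this with $G_1^0 = Z^0 + h_1(\lambda^0) \le b + K_1 \log_F \lambda_{\max}$ and $Z^{T^*} \le G_1^{T^*}$ yields
\[
Z^{T^*} \le b + K_1 \log_F \lambda_{\max} - N = a - 1 < a,
\]
so $T \le T^*$ and the desired bound follows. A minor technicality is that the drift bound of Corollary~\ref{cor:drift_G1} only applies while $Z^t > 0$; this is automatic for $t < T$ since $a \ge 0$, and for $t \ge T$ one can simply replace $\Gamma^t$ by any value $\ge \delta$ so that Theorem~\ref{thm:subgaussian_hit_fast} applies uniformly. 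The main conceptual obstacle is matching scales: the sub-Gaussian concentration yields a hitting time of order $\log n$ for $\sum \Gamma^t \ge N$ only because the penalty $h_1$ contributes $O(\log n)$ to $G_1^0$ and thus $N$ itself remains $O(\log n)$; a penalty growing faster than logarithmically in $n$ would break the argument.
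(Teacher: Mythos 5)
Your proposal is correct and follows essentially the same route as the paper's proof: both use the potential $G_1^t$ with the truncated increment $\Gamma^t=\min\{1+K_1/s,\,G_1^t-G_1^{t+1}\}$, invoke Corollary~\ref{cor:drift_G1} and Lemma~\ref{lem:improvements_subgaussian} to get sub-Gaussianity, apply Theorem~\ref{thm:subgaussian_hit_fast} with target $N=O(\log n)$, and deduce $T\le T^*$ via the sandwich bound $Z^{T^*}\le G_1^{T^*}$. Your explicit handling of the $Z^t>0$ technicality is a small (welcome) addition the paper leaves implicit.
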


\begin{proof}[Proof of Lemma~\ref{lem:expectation_lambda}]
    We will compute the expectation using the following formula
    \begin{align}\label{eq:lem:expectation_lambda2}
        \EE [ \lambda^t \cdot \indicator{Z_*^t \ge z} ] \le 1+\sum_{ \ell=1 }^{\infty }{ \Pr\left[ \lambda^t \ge \ell, Z^t_* \ge z \right] }.
    \end{align}
    
    Let \(\ell\) be an integer; if \(\ell \le \max\{ \linit / F^t, n/z\}\), then \(\Pr[ \lambda^t \ge \ell ] = 1\). Otherwise, observe that for \(\lambda^t \ge \ell\) there has to exist a time before \(t\) when \(\lambda\) increases, i.e.\ when the fitness does not improve. We may then write 
    \begin{align}
        \indicator{\lambda^t \ge \ell, \; Z^t_* \ge z}
            &= \sum_{k = 1}^{t}{ \indicator{\lambda^t \ge \ell, \; Z^t_* \ge z, \; \text{\(t-k\) is the last time when \(\lambda\) increases} } }.  \label{eq:lem:expectation_lambda}
    \end{align}
    Note that if \((t-k)\) is the last time when \(\lambda\) increases, then the number of children at this time must be \(\lfloor\lambda^t \cdot F^{k - 1/s}\rceil\ge -1 + \ell \cdot F^{k - 1/s} \). Naturally, if \(Z^t_* \ge z\), we must also have \(Z^{t-k}_* \ge z\). In particular, we find that if the following event holds
    \begin{align*}
        \left\{ \lambda_t \ge \ell, \quad Z^t_* \ge z, \quad \text{\(t-k\) is the last time when \(\lambda\) increases} \right\},
    \end{align*}
    then so does 
    \begin{align}\label{eq:event_evaluations2}
        \{\lambda_{t-k} \ge  \ell  F^{k - 1/s}, \ Z^{t-k}_* \ge z, \ \text{no improvement made at time \(t-k\)}\}.
    \end{align}
    If \(Z_*^{t-k} \ge z\), the probability of a single child improving the fitness is at least 
    \begin{align*}
        \left( 1-c/n \right)^{n-Z^{t-k}} \cdot \left( 1 - \left( 1 - c/n \right)^{Z^{t-k}} \right) \ge e^{-c} \cdot \left( 1 - e^{-cz/n} \right) \ge \frac{cz}{2e^{c}n},
    \end{align*}
    where the last step holds by Lemma~\ref{lem:basic} since $cz/n \le 1$. In particular, this implies that the probability of the event in~\eqref{eq:event_evaluations2} is at most
    \begin{align*}
        &\Pr \left[ \lambda_{t-k} \ge \ell F^{k - 1/s}, \quad Z_*^{t-k} \ge z, \quad \text{no improvement is made at time \(t-k\)} \right] \\
            &\qquad \qquad \qquad \le \left(1 - \frac{cz}{2e^{c}n} \right)^{\lfloor\ell F^{k - 1/s}\rfloor}. 
    \end{align*}
    Replacing in \eqref{eq:lem:expectation_lambda} and using $F^k = e^{k\log F} \ge 1+k\log F$ gives
    \begin{align*}
        \Pr \left[ \lambda_t = \ell, \ZM^*_t \ge z \right] 
            &\le \sum_{ k=1 }^{t}{ \left( 1 - \frac{cz}{2e^{c}n} \right)^{\lfloor\ell \cdot F^{k - 1/s}\rfloor}} \\
            &\le \sum_{k=1}^\infty{ \left( 1 - \frac{cz}{2e^{c}n} \right)^{-1+\ell \cdot (1 + k \log F) F^{-1/s}} } \\
            &= \left( 1 - \frac{cz}{2e^{c}n}\right)^{-1+\ell  F^{-1/s}} \cdot \frac{\left( 1 - \frac{cz}{2e^{c}n} \right)^{\ell  \log F F^{-1/s}}}{1 - \left( 1 - \frac{cz}{2e^{c}n} \right)^{\ell  \log F F^{-1/s}}} \\
            &\le C \left( 1 - \frac{cz}{2e^{c}n}\right)^{\ell F^{-1/s}},
    \end{align*}
    for a sufficiently large constant \(C\) since \(\ell \ge n/z\). Now using Equation~\eqref{eq:lem:expectation_lambda2} and taking the trivial upper bound of \(1\) for the probability of the first \(\max\{ n/z, \linit / F^t\}\) terms, we obtain
    \begin{align*}
        \EE \left[ \lambda^t \cdot \indicator{Z^t_* \ge z} \right] 
            &\le 1+ \frac{\linit}{F^t} + n/z + C \sum_{\ell=n/z}^{\infty}{ \left(1 - \frac{cz}{2e^{c}n} \right)^{\ell F^{-1/s}} }\\
            &\le 1+ \frac{\linit}{F^t} + n/z + C \sum_{\ell=0}^{\infty}{ \left(1 - \frac{cz}{2e^{c}n} \right)^{\ell F^{-1/s}} }\\
            &\le 1+ \frac{\linit}{F^t} + n/z + C \frac{1}{1 - \left( 1 - \frac{cz}{2e^{c}n} \right)^{F^{-1/s}}} \\
            &\le 1+ \frac{\linit}{F^t} + n/z + C\frac{2 e^{c} F^{1/s} n}{cz}\left(1+\frac{cz}{2e^cnF^{1/s}}\right) \\
            &\le \frac{\linit}{F^t} + C'\frac{n}{z},
    \end{align*}
    for a large constant \(C'>0\).
\end{proof}

\begin{proof}[Proof of Lemma~\ref{lem:fast_return_to_normal}]
    Consider a time \(t < T\) such that \(\lambda^t \ge 8e n \log n/Z^t\). The probability that a child improves the fitness is at least 
    \[
        \left(1 - c/n\right)^{n - Z^t} \left(1 - \left( 1 - c/n \right)^{Z^t} \right) \ge e^{-c} \left( 1 - e^{ -cZ^t / n}\right) \ge cZ^t / (2e^{c}n),
    \] 
    where the last step holds by Lemma~\ref{lem:basic} since $cZ^t/n \le 1$. Hence the probability that all the \(\round{\lambda^t}\) children fail to improve the fitness is at most \( \left( 1 - cZ^t/(2e^cn) \right)^{\lfloor{\lambda^t}\rceil} \le \exp \left( - c Z^t \lfloor{\lambda^t}\rceil / (2e^cn) \right) = o(1)\). From this, we easily compute 
    \begin{align*}
        \EE \left[ \lambda^{t+1} \cdot \indicator{{t+1} < T} \mid \lambda_t \right] 
            = (1 - o(1)) \lambda^t / F + o(1) \lambda^t F^{1/s} 
            \le \lambda^t / F^{1/2}.
    \end{align*}
    This recursively implies that \(\EE\left[ \lambda^t \cdot \indicator{t < T} \right] \le \linit \cdot F^{-t/2}\). Using $\lambda^{t} \le \lambda^{t-1}F^{1/s}$, we can now conclude,
    \begin{align*}
        \EE \left[ \sum_{t=1}^{T}{\lambda^t} \right]
            &= \EE\left[ \sum_{t=1}^{\infty}{ \lambda^t \cdot \indicator{t \le T} } \right] \le \EE\left[ F^{1/s}\sum_{t=1}^{\infty}{ \lambda^{t-1} \cdot \indicator{t \le T} } \right] \\
            & = F^{1/s}\sum_{t=0}^{\infty}{ \EE\left[\lambda^t \cdot \indicator{t < T}\right] }
            \le F^{1/s}\sum_{t=0}^{\infty}{ \linit F^{-t/2} } 
            \le C \linit,
    \end{align*}
    for some constant \(C\).
\end{proof}

\begin{proof}[Proof of Lemma~\ref{lem:cross_fast_whp}]
    Recall the definition 
    \[
        g_1(x, \lambda) = \ZM(x) + h_1(\lambda) = \ZM(x) + K_1 \max\{0,\log_F( \lambda_{\max} / \lambda )\},
    \] 
    and $G_1^t := g_1(x^t,\lambda^t)$. We define \(\Gamma^t := \min\{1+K_1/s,G_1^t - G_1^{t+1}\}\). 
    We observe that \(\sum_{t=0}^{ \tau}{ \Gamma^t } \le G_1^0 - G_1^{\tau} \le K_1 \log_F \lambda_{\text{max}} + b - Z^{\tau}\), so if we define \(T'\) as the first time \(\tau\) when \(\sum_{t=1}^{\tau}{ \Gamma^t } \ge K_1 \log_F \lambda_{\text{max}} + b - a\), we clearly have \(Z^{\tau} \le a\). In other words, \(T \le T'\) and we will show the desired tail bound for \(T'\).

    To obtain the tail bound for $T'$, we observe that \(h_1\) is decreasing and that \(h(\lambda) - h(\lambda F) \le K_1\) for all \(\lambda\). By Corollary~\ref{cor:drift_G1}, the drift of \(\Gamma_t\) is at least a constant \(\varepsilon > 0\) and Lemma~\ref{lem:improvements_subgaussian} applied with \(C_1 = K_1 / s\), \(C_2 = K_1\) guarantee that $\varepsilon - \Gamma^t$ is sub-Gaussian. 
    Thus Theorem~\ref{thm:subgaussian_hit_fast} is applicable. Let \(D\) be the constant from Theorem~\ref{thm:subgaussian_hit_fast}. Then we choose \(\tau = \max \{4/D, 2/\varepsilon\} \cdot \log n\) and Theorem~\ref{thm:subgaussian_hit_fast} immediately implies that \( \Pr[T > \tau] \le \Pr[ T' > \tau ] \le n^{-4} \).
\end{proof}

We are now ready to prove Theorem~\ref{thm:num_evaluations_csmall}.
We look at a slight alteration of the \saolea, working in exactly the same way as the `normal' process except that we introduce some \emph{idle steps} in which the algorithm does not do anything. Moreover, we divide a run of the algorithm into \emph{blocks} and \emph{phases} as follows. For simplicity, we will assume in the following that $n/\log n$ is an integer. 
A block starts with an initialisation phase which lasts until the condition \(\lambda^t \le F^{1/s}8e n \log n/Z^t\) is met. Once this phase is over, the block runs for \(n / \log n\) phases of length \(D \log n\), with \(D\) the constant of Lemma~\ref{lem:cross_fast_whp}. During the \(i\)-th such phase the process attempts to improve~\(Z^t\) from \(n - i \log n\) to \(n - (i+1) \log n\). If such an improvement is made before the \(D \log n\) steps are over, then the process remains idle during the remaining steps of that phase. We call the non-idle steps \emph{active}. 

If a phase fails to make the correct improvement in $D\log n$ generations, or if \(\lambda^{t} \ge F^{1/s}8 e n \log n / Z_*^t\) at any point after the initialisation phase is over, then the whole block is considered a failure, and the next block starts. Obviously, the entire process stops (and succeeds) if the optimum is found. With this partitioning of a run, we will prove Theorem~\ref{thm:num_evaluations_csmall}.

\begin{proof}[Proof of Theorem~\ref{thm:num_evaluations_csmall}]
    The proof relies on the following two facts:
    \begin{enumerate}[(i)]
        \item every block finds the optimum whp; \label{itm:num_evaluations_proof_1}
        \item consider a block starting with \(\lambda = \linit\), then the expected number of function evaluations during this block is at most \(K (\linit + n \log n)\) for a constant \(K = K(c,F,s)\). \label{itm:num_evaluations_proof_2}
    \end{enumerate}
    It is rather easy to see how those two items imply the theorem. The algorithm starts with an initial value of \(\lambda = 1\), so the expected number of function evaluations in the first block is at most \(O(n \log n)\). Recall that a block terminates as soon as \(\lambda\) goes above \(F^{1/s}8e n \log n / Z_*^t\) after the initialisation phase. This means that any block-run after the first one will start with \(\linit \le F^{2/s} 8 e n \log n / Z_*^t\) and by \ref{itm:num_evaluations_proof_2} its expected total number of evaluations is also \(O(n \log n)\). The success of each block is at least \(1 - o(1)\) for all possible \(\xinit, \linit\) it starts with, so by \ref{itm:num_evaluations_proof_1} we have an expected \((1 + o(1))\) blocks, each requiring an expected \(O(n \log n)\) evaluations, hence the result. 
    
    To conclude, we will now prove the two items above. Let us start with \ref{itm:num_evaluations_proof_1} which is simpler: the initialisation never fails since it runs until it succeeds, i.e., until $\lambda$ gets small enough or the optimum is found. By Lemma~\ref{lem:cross_fast_whp}, `crossing' any interval of size \(\log n\) fails with probability at most \(n^{-4}\). By union bound over all \(n/\log n\) phases of the block, the probability that one of them fails for this reason is at most \(n^{-3} \log^{-1} n = o(n^{-3})\). Finally, the block might also fail because we have \(\lambda^{t} \ge F^{1/s} 8 e n \log n / Z_*^t\) at some point, which means that the $(t-1)$-th step was not successful despite \(\lambda^{t-1} \ge 8e n \log n / Z^{t-1}\), since $Z_*^t\le Z^{t-1}$. This happens with probability at most \(n^{-2}\) by Lemma~\ref{lem:properties_AB}, so by union bound over the \(D n\) generations in the different phases, the probability that this happens at some time during a given block is at most \(o(1)\).
    
    We now prove \ref{itm:num_evaluations_proof_2}. Consider any \(1 \le i \le n / \log n\); we will show that the number of function evaluations in the \(i\)-th phase of the block is at most of order \(\frac{n}{n - i \log n}\). In a slight abuse of notation, for $t \in [D\log n]$ we will let \(\lambda_i^t\) denote the value of $\lambda$ in the \(t\)-th step \emph{of the $i$-th phase}, and we set \(\lambda_i^t\) to be \(0\) if step \(t\) is idle, e.g.\ if the improvement to \(Z^t \le n - (i+1) \log n\) has already been found, or if the $i$-th phase does not happen because the block has failed before that. 
    
    Since the previous phase is successful, the value of \(\lambda\) at the start of this phase must be \(\lambda_{i}^0 \le C \frac{n \log n}{n - i \log n}\).
    By definition, \(t \) is only active if $Z^t$ has never been at or below \(n - (i+1) \log n\), so by Lemma~\ref{lem:expectation_lambda} we have
    \begin{align*}
        \EE \left[ \lambda^t_i \right]
            &\le C'\left( \lambda_{i}^0/F^{t} + \frac{n}{n - (i+1) \log n} \right).
    \end{align*}
    Note that in each round, the number of function evaluations is $\lfloor \lambda_i^t\rceil \le 1+\lambda_i^t$. Summing over the \(D \log n\) steps of the \(i\)-th phase, we see that the total number of function evaluations during this phase satisfies 
    \begin{align*}
        \EE\left[ \sum\nolimits_{t=1}^{D\log n}{(1+\lambda^t_i)} \right] \le C'' \frac{n \log n}{n - (i+1) \log n},
    \end{align*}
    for a constant \(C''>0\).
    Hence, excluding the initialisation phase, the total number of function evaluations is in expectation at most 
    \begin{align*}
        C''\sum_{i = 1}^{n / \log n}{ \frac{n \log n}{n - i \log n} }
            &= C''\sum_{ j = 1 }^{ n / \log n }{\frac{n \log n}{j \log n}} 
            \le C''' n \log n
    \end{align*}
    for a new constant $C'''>0$. Lemma~\ref{lem:fast_return_to_normal} immediately gives that the expected number of function evaluations of the initialisation phase is at most \(C'\linit\), and \ref{itm:num_evaluations_proof_2} is proved. 

\end{proof}

The proof of Theorem~\ref{thm:num_evaluations_cone} is extremely close to that above, so we will only give a sketch of it. The main difference is the way the phases of a block are defined: during the \(i\)-th phase, the algorithm attempts to improve~$Z^t$ from~\(n / \log^{i-1} n\) to \(n / \log^{i} n \) in \(D n \log \log n\) steps for a large constant \(D\). One checks that $n/\log^{i} n = n/e^{i\log\log n}$ is less than $1$ for $i > \log n/\log \log n$, so we have $1+\log n/\log \log n$ phases in a block. 
Since \(G_2^t \) has multiplicative drift, Theorem~\ref{thm:multiplicative_drift_concentration} immediately gives that the probability that a phase fails to improve \(Z^t\) is at most \(\log^{-2} n\), so the probability that any phase in a fixed block fails is \(O( \log n / \log \log n \cdot \log^{-2} n ) = o(1)\). In the \(i\)-th phase, the expected value of \(\lambda\) at each step is at most \(O( \log^i n )\). The total number of function evaluations per block after the initialization phase is then
\begin{align*}
    O(1)\sum_{i = 0}^{ \log n / \log \log n }{ n \log \log n \cdot \log^i n } =O( n^2 \log \log n),
\end{align*}
which implies Theorem~\ref{thm:num_evaluations_cone}.

\section{Close to the Optimum Success Rates Become Asymptotically Irrelevant}
\label{sec:efficient_near_optimum}

In this section, we will show that one can still have efficient search even when \(s\) is large, provided one starts close enough to the optimum. More precisely, we will prove the following theorem. For simplicity, we only treat the case $c<1$.

\begin{theorem}
    \label{thm:large_s_efficient}
    Let \(0 < c < 1 < F\) be constants. For every \(s > 0\), there exists an \(\varepsilon > 0\) such that for any initial search point $x^0$ satisfying \(\ZM(x^0) / n \le \varepsilon\) and for any initial population size \(\linit \ge 1\) the following holds. For every dynamic monotone function, with high probability the number of generations of the \saolea with success rate $s$, update strength $F$, and mutation probability \(c/n\) and initial state $(x^0,\linit)$ is \(O(n)\). Additionally, the number of function evaluations is \(\omega(n \log n)\) with probability \(o(1)\).

\end{theorem}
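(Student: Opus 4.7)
The plan is to follow the structure of the proofs of Theorems~\ref{thm:generations} and~\ref{thm:num_evaluations_csmall}, but with a potential function $g_3(x,\lambda) = \ZM(x) + h_3(\lambda)$ tailored to the near-optimum regime. The crucial observation is that for $Z^t \le 2\varepsilon n$, the factor $a_1(1-c(1-Z^t/n))$ in Lemma~\ref{lem:drift_Z} is bounded below by $a_1(1-c)/2$, a positive quantity whose strength does not depend on $s$. The proof proceeds in three steps: (i) design $h_3$ so that $g_3$ has constant positive drift in this regime; (ii) conclude via Theorem~\ref{thm:subgaussian_hit_fast} that the number of generations is $O(n)$ with high probability; (iii) use a dyadic decomposition and Lemma~\ref{lem:expectation_lambda} to bound the number of evaluations by $O(\linit + n\log n)$ with high probability.

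For step (i), I would take $h_3$ to be a bounded non-increasing function of $\lambda$ similar in spirit to $h_1$ but with a constant $K_3$ calibrated to $s$. The drift computation for $h_3$ proceeds as in Claim~\ref{clm:small_sc_drift_h}, giving a gain $(K_3/s)\Pr[\bar{B}]$ on failure steps. The goal is to absorb the exponential penalty $a_2 e^{-b\lambda}$ from Lemma~\ref{lem:drift_Z} into this gain. Unlike in Section~\ref{subsec:generations_csmall}, restricting to $Z^t \le 2\varepsilon n$ with $\varepsilon$ small (depending on $s$) forces $\Pr[\bar B] = e^{-O(\varepsilon\lambda)}$, which remains bounded below by a positive constant for $\lambda$ up to $\Theta(1/\varepsilon)$; for larger $\lambda$, $\Pr[B]$ is already $\Omega(1)$ and the $Z$-drift alone yields constant positive drift of $g_3$. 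This removes the $s$-dependence that constrains Theorem~\ref{thm:generations}.

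For step (ii), I first show that $Z^t \le 2\varepsilon n$ throughout the run with probability $1 - e^{-\Omega(n)}$, using the positive drift of $g_3$ via a supermartingale bound or Theorem~\ref{thm:NegativeDriftWithScaling} applied to $-g_3$ (appropriately shifted), combined with a sandwich inequality as in Lemma~\ref{lem:sandwich} relating $g_3$ and $\ZM$. On this event, Lemma~\ref{lem:improvements_subgaussian} (with $C_1 = K_3/s$ and $C_2 = K_3$) establishes sub-Gaussian increments, and Theorem~\ref{thm:subgaussian_hit_fast} yields that the hitting time of $g_3 = 0$ is $O(n)$ with probability $1 - e^{-\Omega(n)}$, which transfers to the hitting time of the optimum via the sandwich inequality. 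For step (iii), let $\tau_i$ be the first time $Z^t \le 2\varepsilon n/2^i$, yielding $O(\log n)$ phases. The constant drift implies phase $i$ has length $O(n/2^i)$ with high probability. Within phase $i$ we have $Z^t_* \ge 2\varepsilon n/2^{i+1}$, so applying Lemma~\ref{lem:expectation_lambda} restarted at $\tau_i$ gives $\EE[\lambda^{\tau_i+t}] \le \lambda^{\tau_i}/F^t + O(2^i)$, contributing $O(\lambda^{\tau_i} + n)$ evaluations in expectation. Applying Lemma~\ref{lem:expectation_lambda} globally from time $0$ controls $\EE[\lambda^{\tau_i}] \le \linit/F^{\tau_i - 1} + O(2^i)$; the $\linit$ term contributes a geometric series summing to $O(\linit)$, and the $O(2^i) \cdot O(n/2^i) = O(n)$ contribution from the phase length summed over the $O(\log n)$ phases yields $O(n\log n)$, giving the desired bound.

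The main obstacle is step (i). The potential $h_1$ from Section~\ref{subsec:generations_csmall} gives constant positive drift only when $s$ is small enough for $(K_1/s)\Pr[\bar{B}]$ to dominate $a_2 e^{-b\lambda}$ uniformly in $\lambda \ge 1$; for large $s$ this breaks whenever $\Pr[\bar B]$ is not close to $1$, which is exactly when $\lambda Z^t / n$ is not small. Restricting to $Z^t \le 2\varepsilon n$ eliminates precisely this obstruction, because for $\varepsilon$ small (depending on $s$) the probability $\Pr[\bar B]$ remains close to $1$ for all $\lambda \le \Theta(1/\varepsilon)$, and outside that range $\Pr[B] = \Omega(1)$ makes the $Z$-drift alone sufficient. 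Calibrating $K_3$ and $\varepsilon$ as a function of $s$ so that the drift estimate is uniformly positive in all three regimes of $\lambda$ (small, moderate, and $\lambda \ge n$) is the technical heart of the proof.
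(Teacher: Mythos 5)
Your step (i) contains a genuine gap: a single bounded, non-increasing, $h_1$-style penalty term cannot be calibrated to give uniformly positive drift for all $\lambda\ge1$ once $s$ is large, no matter how you choose the constant $K_3$ and $\varepsilon$. Concretely, with $h(\lambda)=-K_3\min\{0,\log_F(\lambda/\lambda_{\max})\}$ the $h$-part gains $K_3/s$ on failure and loses up to $K_3$ on success, so the total drift is roughly $\Pr[B]\,(a_1(1-c)-K_3)+\Pr[\bar B]\,K_3/s-a_2e^{-b\lambda}$. If $K_3=O(1)$ is independent of $s$, then at $\lambda=O(1)$ and $Z^t$ small (say $Z^t=1$, which is inside your region $Z^t\le 2\varepsilon n$) we have $\Pr[B]=O(\lambda/n)$ and $\Pr[\bar B]\approx 1$, so the drift is about $K_3/s-a_2e^{-b\lambda}<0$ for $s$ large --- shrinking $\varepsilon$ only makes $\Pr[B]$, and hence the positive $Z$-contribution, \emph{smaller}, so it cannot rescue this case. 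If instead you scale $K_3=\Theta(s)$ to absorb $a_2e^{-b\lambda}$ at small $\lambda$, then in the moderate regime where $\lambda Z^t/n=\Theta(1)$ (e.g.\ $Z^t=2\varepsilon n$, $\lambda=\Theta(1/\varepsilon)$) both $\Pr[B]$ and $\Pr[\bar B]$ are constants in $(0,1)$ and the drift is about $K_3(-\Pr[B]+\Pr[\bar B]/s)+\Pr[B]a_1(1-c)<0$ for $s$ large. Your claim that ``for larger $\lambda$, $\Pr[B]$ is already $\Omega(1)$'' is also false near the optimum: $\Pr[B]=1-(1-c/n)^{Z^t\lfloor\lambda\rceil}$ is $o(1)$ whenever $\lambda Z^t=o(n)$, so the $Z$-drift alone is nowhere near constant there.

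The paper escapes this by using a \emph{two-term} penalty $h_3(\lambda)=-K_1\min\{0,\log_F(\lambda/\lambda_{\max})\}+K_2e^{-K_3\lambda}$ (Definition~\ref{def:h_positive3}): the log-term keeps its $s$-independent constant $K_1=\alpha_1/2$ and handles large $\lambda$ exactly as in Section~\ref{subsec:generations_csmall}, while the new exponential term supplies an extra gain of order $K_2(1-e^{-K_3(F^{1/s}-1)})e^{-K_3\lambda}$ on failure steps, which dominates $a_2e^{-b\lambda}$ at small $\lambda$ for $K_2$ large and $K_3\le b$, yet vanishes automatically for moderate and large $\lambda$ so it never turns the drift negative there. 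The small-$\varepsilon$ assumption is then used to control the \emph{loss} $\Pr[B]\cdot K_2e^{-K_3\lambda/F}$ of this new term on success steps (and to keep $\Pr[\bar B]\ge e^{-K_3\lambda}$), not to boost the $(K_3/s)\Pr[\bar B]$ gain as you suggest. Your steps (ii) and (iii) are broadly in line with the paper (sub-Gaussianity via Lemma~\ref{lem:improvements_subgaussian}, confinement to $Z^t\le2\varepsilon n$ via negative drift, and a phase decomposition with Lemma~\ref{lem:expectation_lambda} for the evaluations), but they rest on the drift bound of step (i), so the proof as proposed does not go through without replacing the potential function.
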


The approach will be essentially the same as in Section~\ref{subsec:generations_csmall}. The main difference lies in the potential function: we will need to introduce a second penalty term into the part \(h(\lambda)\) that depends on $\lambda$. Moreover, when \(s\) is large, no potential function can have strong positive drift towards the optimum for all values of \(Z^t\), as it would otherwise contradict the negative results from Section~\ref{sec:inefficient}.
Hence, we will only show that the potential has positive drift when \(Z^t/n \le 2\varepsilon\). Then by the Negative Drift Theorem, starting from $Z^t/n \le \eps$ it is unlikely that the exploration reaches a search point for which \(Z^t / n > 2\varepsilon\) in polynomial time. Hence, the algorithm stays in a range where the drift is positive, and by the Additive Drift Theorem the optimum is found efficiently.

\begin{definition}[Potential function for positive result near optimum]\label{def:h_positive3}
    For all $\lambda \ge 1$, we set 
    \begin{align*}
    h_3(\lambda) = K_1 \max \left\{0, \log_F \lambda_{\max} / \lambda \right\} + K_2 e^{- K_3 \lambda},
\end{align*}
where $\lambda_{\max} = F^{1/s}n$ and the constants $K_1, K_2, K_3 >0$ will be fixed later. Then for $x\in\{0,1\}^n$ and $\lambda \in [1,\infty)$ we define
    \begin{align*}
        g_3(x, \lambda) := \ZM(x) + h_3(\lambda),
    \end{align*}
    and we set $H_3^t := h_3(x^t,\lambda^t)$ and $G_3^t := g_3(x^t,\lambda^t)$, and as usual $Z^t := \ZM(x^t)$. 
\end{definition}

As before, we get a sandwich lemma.
\begin{lemma}
[``Sandwich'' inequalities]
    \label{lem:sandwich_near_opt}
    For all \(x, \lambda\) we have 
    \begin{align*}
        g_3(x, \lambda) - K_1 \log_F \lambda_{\max} -K_2 \le \ZM(x) \le g_3(x, \lambda).
    \end{align*}
\end{lemma}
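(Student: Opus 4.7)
The plan is a direct computation showing $0 \le h_3(\lambda) \le K_1 \log_F \lambda_{\max} + K_2$ for every $\lambda \ge 1$; both desired inequalities follow immediately from the definition $g_3(x,\lambda) = \ZM(x) + h_3(\lambda)$.

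I would handle the two summands of $h_3$ separately. For the first summand $-K_1 \min\{0, \log_F(\lambda/\lambda_{\max})\}$, I would split on whether $\lambda \le \lambda_{\max}$ or not. If $\lambda > \lambda_{\max}$ the min is $0$ and the term vanishes. Otherwise the min equals $\log_F(\lambda/\lambda_{\max}) \in [\log_F(1/\lambda_{\max}), 0] = [-\log_F \lambda_{\max}, 0]$, using $\lambda \ge 1$ and $\lambda_{\max} = F^{1/s}n \ge 1$; multiplying by $-K_1$ places the summand in $[0, K_1 \log_F \lambda_{\max}]$. For the second summand, $K_3 > 0$ and $\lambda \ge 1 > 0$ yield $e^{-K_3 \lambda} \in (0, 1]$, so $K_2 e^{-K_3 \lambda} \in (0, K_2]$.

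Adding the bounds gives $h_3(\lambda) \in [0, K_1 \log_F \lambda_{\max} + K_2]$. The right-hand inequality $\ZM(x) \le g_3(x,\lambda)$ then follows from $h_3(\lambda) \ge 0$, and the left-hand inequality $g_3(x,\lambda) - K_1 \log_F \lambda_{\max} - K_2 \le \ZM(x)$ is equivalent to the upper bound on $h_3(\lambda)$ just established.

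There is no real obstacle here: the lemma is a cosmetic boundedness check on the penalty term $h_3$, analogous to Lemmas~\ref{lem:sandwich} and~\ref{lem:sandwich_c1}. The only point deserving a sentence of comment is that, unlike the earlier sandwich lemmas, one does \emph{not} obtain ``$g_3(x,\lambda) = 0 \Rightarrow \ZM(x) = 0$'' because $h_3$ is strictly positive; accordingly, the lemma statement omits this consequence, and the algorithm's termination in Section~\ref{sec:efficient_near_optimum} will instead be derived from the additive drift argument sketched after Definition~\ref{def:h_positive3} rather than from the vanishing of the potential.
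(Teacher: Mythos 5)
Your proof is correct and matches the paper's argument, which likewise just observes that $h_3(\lambda)\in[0,K_1\log_F\lambda_{\max}+K_2]$ by bounding the two summands exactly as you do (the paper phrases it as ``runs as for Lemma~\ref{lem:sandwich}, except that we also use $K_2e^{-K_3\lambda}\in[0,K_2]$''). Your closing remark about the omitted implication ``$g_3(x,\lambda)=0\Rightarrow\ZM(x)=0$'' is a fair observation but not needed for the lemma itself.
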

\begin{proof}
The proof runs as for Lemma~\ref{lem:sandwich}, except that we also use $K_2 e^{- K_3 \lambda} \in [0,K_2]$.
\end{proof}

As in Section~\ref{sec:efficient}, \(A = A^t\) denotes the event that some child of $x^t$ flips no one-bit and \(B=B^t\) the event that some child of $x^t$ flips (at least) one zero-bit. Also recall that we abbreviate $x = x^t$ and $\lambda=\lambda^t$ when $t$ is clear from the context.
\begin{sclaim}
    \label{clm:large_s_drift_h}
    At all times \(t \ge 1\) with $Z^t >0$ we have 
    \begin{align*}
        \EE [ H_3^t  - H_3^{t+1} \mid x,\lambda ]
            & \ge \indicator{\lambda <n} \Pr[\bar{B}]  \big(K_1/s + K_2 (1 - e^{-K_3 (F^{1/s} - 1)}) e^{-K_3 \lambda} \big) \\
            &\qquad  - \Pr[B] \cdot \big( K_1 + K_2 e^{- K_3 \lambda/F} \big).
    \end{align*}
\end{sclaim}

\begin{proof}
    The proof is extremely similar to that of Claim~\ref{clm:small_sc_drift_h}. In particular, the contribution of the first term \(K_1 \max \left\{ 0, \log_F \lambda_{\max} / \lambda \right\} \) is exactly the same, so we will only compute the contribution of the second term. Since that term is non-negative at time $t$ and is at most $K_2e^{-K_3 \lambda/F}$ at time $t+1$ due to $\lambda^{t+1} \ge \lambda/F$, it contributes at least $-K_2e^{-K_3 \lambda/F}$ to the difference~$H_3^t-H_3^{t+1}$.
 
    In the case \(f(x^{t+1}) \le f(x^t)\) of non-success we have $\lambda^{t+1} = \lambda F^{1/s}$, so we may use the exact contribution to $H_3^t-H_3^{t+1}$, which is 
    \begin{align*}
        K_2 ( 1 - e^{-K_3\lambda  (F^{1/s} - 1)}) e^{-K_3 \lambda} \ge K_2 ( 1 - e^{-K_3  (F^{1/s} - 1)}) e^{-K_3 \lambda},
    \end{align*}
    since $\lambda \ge 1$.
    As in Section~\ref{subsec:efficient_gen}, observing that \(\Pr[ f(x^{t+1}) > f(x^t) ] \le \Pr[B]\) and using the law of total expectation gives the result.
\end{proof}

\begin{corollary}
    \label{cor:large_s_drift_g}
    There exist constants \(\varepsilon, \delta, K_1, K_2, K_3 >0 \) (which may only depend on \(c, F, s\)) such that for all times \(t\) when \(0<Z^t / n \le 2\varepsilon\) we have
    \begin{align*}
        \EE \left[ G_3^t-G_3^{t+1} \mid x, \lambda:  \ZM(x)/n \le 2\varepsilon \right] \ge \delta.
    \end{align*}
    This also holds if we replace \(G_3^t-G_3^{t+1}\) by \(\min \{1 + K_1/s + K_2 \; , \; G_3^t-G_3^{t+1} \}\).
\end{corollary}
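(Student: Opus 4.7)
The plan is to follow the template of Corollary~\ref{cor:drift_G1}: add the bounds from Lemma~\ref{lem:drift_Z} and Claim~\ref{clm:large_s_drift_h}, and then calibrate the constants. The addition yields
\begin{align*}
\EE[G_3^t-G_3^{t+1}\mid x,\lambda]
\;\ge\; \Pr[B]\,\Delta_B(\lambda)\;-\;a_2 e^{-b\lambda}\;+\;\indicator{\lambda<n}\,\Pr[\bar B]\,\Delta_{\bar B}(\lambda),
\end{align*}
where $\Delta_B(\lambda):=a_1(1-c+cZ^t/n)-K_1-K_2 e^{-K_3\lambda/F}$ and $\Delta_{\bar B}(\lambda):=K_1/s+K_2\eta\, e^{-K_3\lambda}$ with $\eta:=1-e^{-K_3(F^{1/s}-1)}>0$. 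The key observation enabling the argument is that $c<1$ forces $a_1(1-c+cZ^t/n)\ge a_1(1-c)>0$ uniformly in $Z^t$, so the $Z$-drift contributes non-trivially whenever $\Pr[B]$ is bounded away from $0$; in the complementary regime the exponential penalty $K_2 e^{-K_3\lambda}$ must do the work.

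I will fix the constants sequentially to prevent circular dependencies. First, set $K_3:=b$ and $K_1:=a_1(1-c)/2$. Second, pick $K_2$ with $K_2\eta\ge 4a_2$, so that $K_3=b$ forces $K_2\eta e^{-K_3\lambda}\ge 4a_2 e^{-b\lambda}$ pointwise in $\lambda\ge 0$, and the penalty gain will dominate the $-a_2 e^{-b\lambda}$ error whenever $\Pr[\bar B]\ge 1/2$. Third, pick a constant $\lambda_1=\lambda_1(s)$ large enough that $K_2 e^{-K_3\lambda_1/F}\le a_1(1-c)/16$ and $a_2 e^{-b\lambda_1}\le\min\{a_1(1-c)/32,\,K_1/(4s)\}$. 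Finally, pick $\varepsilon=\varepsilon(s)$ so small that, for every $\lambda\in[1,\lambda_1]$, Lemma~\ref{lem:properties_AB} yields both $\Pr[\bar B]\ge 1/2$ (which needs $e^{-2b_2\varepsilon\lambda_1}\ge 1/2$) and $\Pr[B](K_1+K_2)\le a_2 e^{-b\lambda_1}/2$ (which uses $\Pr[B]\le 2b_2\varepsilon\lambda_1$).

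With the constants in hand a three-way case split on $\lambda$ concludes the proof. If $\lambda\le\lambda_1$, then $\Pr[\bar B]\Delta_{\bar B}(\lambda)\ge 2a_2 e^{-b\lambda}$ beats $a_2 e^{-b\lambda}$ with margin, while the possibly-negative $\Pr[B]\Delta_B(\lambda)$ has absolute value at most $\Pr[B](K_1+K_2)$, which is negligible by the choice of $\varepsilon$. If $\lambda_1<\lambda<n$, the calibration yields $\Delta_B(\lambda)\ge a_1(1-c)/4>0$ and $\Delta_{\bar B}(\lambda)\ge K_1/s>0$, so $\Pr[B]\Delta_B+\Pr[\bar B]\Delta_{\bar B}\ge \min\{a_1(1-c)/4,\,K_1/s\}$, and the $-a_2 e^{-b\lambda_1}$ error is absorbed by the earlier choice of $\lambda_1$. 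If $\lambda\ge n$, Lemma~\ref{lem:properties_AB} (with $Z^t\ge 1$, $\lambda\ge n$) gives $\Pr[B]\ge 1-e^{-b_3}$ and $\Delta_B(\lambda)\ge a_1(1-c)/4$ since $K_2 e^{-K_3 n/F}=o(1)$, so the $Z$-drift alone suffices. Taking $\delta$ as the minimum of the three bounds proves the first statement. The truncated version follows for free: Lemma~\ref{lem:drift_Z} is already stated for the truncated $Z$-drift, and $H_3$ can change by at most $K_1/s+K_2$ in one step, so no estimate above is affected.

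The main obstacle is the simultaneous calibration of the five constants: $K_2$ must be large (to dominate the $a_2 e^{-b\lambda}$ error via the penalty), yet $K_2 e^{-K_3\lambda/F}$ must be small on $\lambda\ge\lambda_1$ (not to cancel the $Z$-drift gain $a_1(1-c)/2$), which forces $\lambda_1$ to grow with $K_2$; in turn $\varepsilon$ is pushed below $O(1/\lambda_1)$. The order $(K_3,K_1,K_2,\lambda_1,\varepsilon)$ I use precisely avoids circularity. Note that both $\lambda_1$ (through the $K_1/(4s)$ bound) and hence $\varepsilon$ must be allowed to depend on $s$, matching the theorem's quantifier structure: the admissible $\varepsilon$ shrinks as $s\to\infty$, and one should not expect a uniform threshold over all $s$.
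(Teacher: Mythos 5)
Your proposal is correct and follows essentially the same route as the paper's proof: you combine Lemma~\ref{lem:drift_Z} with Claim~\ref{clm:large_s_drift_h}, calibrate $K_1,K_2,K_3$ against the constants of the $Z$-drift, and split into the same three regimes (constant $\lambda$, intermediate $\lambda<n$, and $\lambda\ge n$), using a small $\varepsilon$ to make $\Pr[\bar B]$ dominant and $\Pr[B]$ negligible in the small-$\lambda$ case, and $\Pr[B]+\Pr[\bar B]=1$ with both coefficients positive in the middle case. The only cosmetic difference is that you absorb the $-a_2e^{-b\lambda}$ error in the middle regime by enlarging the case boundary $\lambda_1$ (making it and hence $\varepsilon$ depend on $s$), whereas the paper absorbs it via the $\Pr[\bar B]e^{-K_3\lambda}$ penalty term; both are valid under the corollary's quantifiers.
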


\begin{proof}
    Combining Claims~\ref{clm:conditional_drift_1},~\ref{clm:conditional_drift_2}~and~\ref{clm:large_s_drift_h}, there exist constants \(\alpha_1, \alpha_2, \alpha_3, \beta>0 \) (which may only depend on \(c,F,s\)) such that at all times \(t\) with $Z^t >0$ we have 
    \begin{align*}
        \EE [ G_3^t-G_3^{t+1} \mid x, \lambda ]
            & \ge \Pr[B] \left( \alpha_1 - K_1 - \alpha_2 K_2 e^{-K_3 \lambda/F} \right) - \alpha_3 e^{-\beta \lambda } \\
            & \quad + \Pr[\bar{B}] \indicator{\lambda <n} \left( \alpha_4 K_2 K_3 e^{- K_3 \lambda} + K_1/s \right). \numberthis \label{eq:large_s_drift_g_intermediate}
    \end{align*}
    We choose \(K_1 = \alpha_1/2, K_3 = \beta/2, K_2 \ge 2 \alpha_3/(\alpha_4K_3)\).
    We will prove a constant lower bound in two different cases, depending on whether \(\lambda\) is small or not. 

    \case{If \(\lambda \le \frac{F}{K_3} \log \frac{2 \alpha_2 K_2}{K_1}\)} 
    then, ignoring the (positive) contribution of \(\Pr[B](\alpha_1 - K_1)\) and \(\Pr[\bar{B}] K_1/s\), we see that the drift of \(g\) is at least
    \begin{align}\label{eq:cor:large_s_drift_g1}
        \EE [ G_3^t-G_3^{t+1} \mid x, \lambda ] 
            & \ge- \Pr[B] \alpha_2 K_2 e^{-K_3\lambda/F} - \alpha_3 e^{- \beta \lambda}  + \Pr[\bar{B}] \alpha_4 K_2 K_3 e^{-K_3 \lambda} \nonumber\\
            &  \ge - \Pr[B]\alpha_2 K_2
            + \alpha_3 e^{-K_3 \lambda}( 2 \Pr[\bar{B}] - 1), \nonumber \\
            & \ge - \Pr[B]\alpha_2 K_2 + \alpha_3 \left(\frac{K_1}{2\alpha_2 K_2}\right)^F (2\Pr[\bar B] - 1)
    \end{align}
    where in the second step we bounded $e^{-K_3\lambda/F} \le 1$ and used \(\beta \ge K_3\) and \(K_2 \ge 2 \alpha_3 / (\alpha_4 K_3) \), and in the third step we used $\lambda \le F/K_3\cdot\log(2\alpha_2K_2/K_1)$. 
    Recall that \(\Pr[\bar{B}] \ge e^{-b_2 \lambda Z^t / n}\) for some \(b_2 > 0\) by Lemma~\ref{lem:properties_AB}. 
    Since $\lambda$ is bounded by a constant in the current case, if we choose \(\varepsilon\) small enough (w.r.t.\ all previous constants) then we can achieve \(2\Pr[\bar B] - 1 \ge 1/2\) and \(\Pr[B] \alpha_2 K_2 \le \frac{\alpha_3}{4} \left( \frac{K_1}{2 \alpha_2 K_2} \right)^F\). Then the drift must be at least
    \begin{align*}
        \EE [ G_3^t-G_3^{t+1} \mid x, \lambda ] \ge \frac{\alpha_3K_1^F}{4(2\alpha_2K_2)^F}
            \ge \delta,
    \end{align*}
    for a \(\delta >0\) chosen small enough.

\case{If \(\frac{F}{K_3} \log \frac{2 \alpha_2 K_2}{K_1} \le \lambda <n\)}
then the first contribution in \eqref{eq:large_s_drift_g_intermediate} is at least \(\Pr[B] K_1 / 2\). Hence, the drift is at least 
\begin{align*}
    \EE [ G_3^t-G_3^{t+1}\mid x, \lambda ]
        &\ge \Pr[B] \frac{K_1}{2} - \alpha_3 e^{-\beta \lambda } + \Pr[\bar{B}] \left( \alpha_4 K_2 K_3 e^{- K_3 \lambda} + \frac{K_1}{s} \right) \\
        &= \Pr[B] \frac{K_1}{2} + \Pr[\bar{B}] \frac{K_1}{s} + \left( 2\Pr[\bar{B}]  - e^{-K_3 \lambda}  \right) \cdot \alpha_3 e^{-K_3 \lambda} \\
        &\ge \frac{K_1}{\max \{2, s\}} + \left( 2\Pr[\bar{B}]  - e^{-K_3 \lambda}  \right) \cdot \alpha_3 e^{-K_3 \lambda},
\end{align*}
where the second line holds since \(K_3 = \beta / 2\) and \(K_2 = 2\alpha_3 / (\alpha_4 K_3) \). Recall that \(\Pr[\bar{B}] = (1 - c/n)^{\round{\lambda} Z^t} \ge e^{ - 4 c\lambda \varepsilon }\) whenever \(Z^t / n \le 2\varepsilon\) and $n$ is sufficiently large. Choosing \(\varepsilon\) small guarantees that this is larger than \(e^{-K_3 \lambda}\) for all \(\lambda\), meaning that the drift of \(G_3^t\) is at least \(K_1 /\max\{2,s\} \ge \delta\) for a suitable \(\delta>0\).

\case{If \(\lambda \ge n\)} then in \eqref{eq:large_s_drift_g_intermediate} every negative contribution is of order \(e^{-\Omega(n)}\) while the term \(\Pr[B] \left( \alpha_1 - K_1 \right) = \Pr[B] K_1\) is $\Theta(1)$, so the drift is at least \(\delta\) for some \(\delta>0\).


For the last part of the statement, we use the same argument as for Corollary~\ref{cor:drift_G1}, using that \eqref{eq:large_s_drift_g_intermediate} also holds for \(\min\{1, G^t - G^{t+1}\}\) since Claims~\ref{clm:conditional_drift_1} and~\ref{clm:conditional_drift_2} do, and \(h\) may only increase by \(K_1/s + K_2\) at each step.

\end{proof}

We may now prove Theorem~\ref{thm:large_s_efficient}.

\begin{proof}[Proof of Theorem~\ref{thm:large_s_efficient}]
    Let \(\varepsilon, \delta, K_1, K_2, K_3>0\) be the constants from Cor\-ollary~\ref{cor:large_s_drift_g} and assume the initial search point \(x^0\) is such that \(\ZM(x^0)/n \le \varepsilon\). Analogously to the proof of Theorem~\ref{thm:num_evaluations_csmall}, we define \(\Gamma^{t} = \min\{1 + K_1/s + K_2, G_3^t - G_3^{t+1}\}\), let \(T\) be the first time \(t\) when \(Z^t = 0\) or $G_3^t =0$, and observe that this actually implies $Z^T =0$.
    Moreover, we define \(T'\) as the first time when \(Z^t / n > 2\varepsilon \).
    
    By Corollary~\ref{cor:large_s_drift_g} the drift at any time \(t < \min\{T, T'\}\) is at least 
    \[
        \EE\left[ \indicator{t < \min\{T,T'\}} \Gamma^t \mid x, \lambda \right] \ge \delta.
    \] 
    As in the proof of Theorem~\ref{thm:num_evaluations_csmall}, we use Lemma~\ref{lem:improvements_subgaussian} to see that $\delta - \Gamma^t$ is sub-Gaussian since \(h_3\) is decreasing may not increase too much at each step.
 
    In particular, Theorem~\ref{thm:subgaussian_hit_fast} gives that, for a suitable constant $D>0$, the event $ E := \{T > Dn \text{ and } \sum_{\tau=0}^{Dn} \Gamma^\tau <\eps n + K_1\log\lambda_{\max}+K_2\}$ has probability $\Pr[E] = e^{-\Omega(n)}$. If the second event does not happen, $\sum_{\tau=0}^{Dn} \Gamma^\tau \ge \eps n + K_1\log\lambda_{\max}+K_2$, then by the Sandwich Lemma~\ref{lem:sandwich_near_opt} this implies $Z^t \le 0$ for $t = Dn$ and thus $T\le Dn$. Hence, $\Pr[T \le Dn] \ge \Pr[\bar E] = 1- e^{-\Omega(n)}$, and the statement about the number of generations is proven. For the number of function evaluations, in the proof in Theorem~\ref{thm:num_evaluations_csmall} we use the potential function as a black box (except for the Sandwich Lemma), so the proof carries over.
    
\end{proof}



\section{Small Success Rates Yield Exponential Runtimes}
\label{sec:inefficient}

The aim of this section is to show that for large \(s\), that is, for a small enough success rate, the \saolea needs super-polynomial time to find the optimum of any dynamic monotone function. The reason is that the algorithm has negative drift in a region that is still far away from the optimum, in linear distance. In fact, as we have shown in Section~\ref{sec:efficient_near_optimum}, the drift is \emph{positive} close to the optimum. Thus the hardest region for the \saolea is not around the optimum. This surprising phenomenon was discovered for \onemax in~\cite{hevia2021self}. We show that it is not caused by any specific property of \onemax, but that it occurs for \emph{every} dynamic monotone function. Even in the \onemax case, our result is slightly stronger than~\cite{hevia2021arxiv}, since they show their result only for $1 < F < 1.5$, while ours holds for all $F>1$. On the other hand, they give an explicit constant $s_1 =18$ for \onemax.

\begin{theorem}
\label{thm:Inefficient}
Let \(0 < c \le 1 < F \). For every \(\varepsilon >0\), there exists \(s_1 > 0\) such that for all \(s \ge s_1\) the following holds. For every dynamic monotone function and every initial search point \(\xinit\) satisfying \(\ZM(\xinit) \ge \varepsilon n\) the number of generations of the \saolea with success rate $s$, update strength $F$, and mutation probability $c/n$ is \(e^{\Omega( n / \log^2 n )}\) with high probability.
\end{theorem}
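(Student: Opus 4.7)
The plan is to apply the Negative Drift Theorem with Scaling (Theorem~\ref{thm:NegativeDriftWithScaling}) to a potential
\begin{align*}
X^t := \ZM(x^t) + h_4(\lambda^t),
\end{align*}
in the spirit of the negative-result analysis in~\cite{hevia2021arxiv}, but designed to work uniformly over every dynamic monotone $f^t$. I would take $h_4(\lambda) := K\cdot\max\{0,\log_F\hat\lambda - \log_F\lambda\}$, where $K>0$ and $\hat\lambda=\hat\lambda(c,F,s,\varepsilon)>1$ are constants to be tuned. Since $h_4$ is non-negative and bounded by $K\log_F\hat\lambda = O(1)$, a Sandwich-type inequality (analogous to Lemma~\ref{lem:sandwich}) yields $|X^t - \ZM(x^t)| = O(1)$; in particular $X^0 \ge \varepsilon n$ under the assumption on $\ZM(x^0)$, and $X^t < \tfrac12\varepsilon n$ still forces $\ZM(x^t)>0$.

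The main technical step is to prove a uniform positive drift
\begin{align*}
\EE[X^{t+1}-X^t \mid x^t,\lambda^t] \ge \varepsilon_0
\end{align*}
for some constant $\varepsilon_0>0$, throughout the band $\tfrac12\varepsilon n \le X^t \le \varepsilon n$, provided $s \ge s_1$ is large enough. The key observation is that the \saolea uses comma-selection: the best of the $\lfloor \lambda^t\rceil$ offspring always replaces $x^t$, even when no offspring improves $f^t$, so $\ZM$ may \emph{increase} on unsuccessful steps. For large $s$, the $(1:s+1)$-rule targets the vanishing success probability $1/(s+1)$, driving $\lambda^t$ towards (or below) the floor $\lambda = 1$, where each generation produces $O(1)$ offspring. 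A dynamic monotone adversary can then choose $f^t$ so as to maximise regression: whenever an offspring flips only $1$-bits it is Pareto-dominated by $x^t$ and is automatically selected on failures, and on the incomparable offspring (those with both $0$- and $1$-bit flips) strict monotonicity leaves the adversary free to prefer the ones with the most $1$-flips, giving a constant positive expected $\ZM$-change. Meanwhile the $h_4$-drift, computed as in Claim~\ref{clm:small_sc_drift_h}, contributes at most $-K/s = o(1)$ per step while $\lambda^t \le \hat\lambda$, and is actually positive on the rare successful steps where $\lambda^t > 1$. Tuning $K$ and $\hat\lambda$ appropriately makes the combined drift a positive constant uniformly over the relevant range of $\lambda^t$.

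For the scaling condition of Theorem~\ref{thm:NegativeDriftWithScaling}, the one-step increment $|X^{t+1}-X^t|$ is controlled by the deterministic $O(1)$ change of $h_4$ together with the maximum number of bit flips among the $\lfloor\lambda^t\rceil$ offspring; by the same FKG and Poisson-tail arguments as in the proof of Lemma~\ref{lem:improvements_subgaussian}, this yields $\Pr[|X^{t+1}-X^t|\ge jr \mid \cF^t]\le e^{-j}$ with scaling $r = O(\log n)$. The required polynomial bound on $\lambda^t$ holds throughout any $e^{O(n/\log^2 n)}$-length prefix, since $\lambda^{t+1}/\lambda^t \le F^{1/s} \le F$ and the total number of failed steps in such a prefix is polynomial with overwhelming probability.

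Finally, applying Theorem~\ref{thm:NegativeDriftWithScaling} with $[a,b] = [\tfrac12\varepsilon n, \varepsilon n]$ (so $\ell = \tfrac12\varepsilon n$), $\varepsilon_0 = \Theta(1)$, and $r = \Theta(\log n)$, the side condition $r^2 \le \varepsilon_0 \ell / (132\log(r/\varepsilon_0))$ reduces to $\log^2 n \lesssim n/\log\log n$, which holds for all sufficiently large $n$. The theorem then gives $\Pr[T^* \le e^{\Omega(n/\log^2 n)}] = O(e^{-\Omega(n/\log^2 n)})$ for $T^* := \min\{t : X^t < \tfrac12\varepsilon n\}$, and by the Sandwich inequality the optimum is not found before $T^*$, proving the claim. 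I expect the principal obstacle to be the drift computation: one must verify that for \emph{every} $x^t$ and every realisation of the offspring the adversary can always implement the described worst-case selection via some strictly monotone $f^t$. This combinatorial ``extendability'' step, immediate for \onemax~\cite{hevia2021arxiv}, would be handled by an explicit construction of $f^t$ (e.g.\ as a small perturbation of a linear function with adversarially chosen weights) that is consistent with strict monotonicity on the whole of $\{0,1\}^n$.
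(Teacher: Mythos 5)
Your overall architecture --- a potential $\ZM(x^t)+h_4(\lambda^t)$ with constant drift away from the optimum in the band $[\eps n/2,\eps n]$, followed by the Negative Drift Theorem~\ref{thm:NegativeDriftWithScaling} with scaling $r=\Theta(\log n)$ and a separate argument that $\lambda^t$ stays polynomially bounded --- matches the paper. But there is a genuine gap in your choice of penalty. You take $h_4(\lambda)=K\max\{0,\log_F\hat\lambda-\log_F\lambda\}$, which is bounded and gains at most the constant $K$ per success (and exactly $0$ once $\lambda\ge \hat\lambda F$). The large-$\lambda$ regime cannot be wished away: a run of $O(s\log_F n)$ consecutive failures has only polynomially small probability per attempt and therefore occurs many times over $e^{\Omega(n/\log^2 n)}$ generations, pushing $\lambda$ to $\mathrm{poly}(n)$. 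In that regime the fittest of $\round{\lambda}$ offspring decreases $\ZM$ by up to $\Theta(\log\lambda)$ in expectation per step (this is exactly the content of Lemma~\ref{lem:drift_Z_inefficient_1}), while your $h_4$ compensates by at most a constant. So your potential has drift of order $-\log\lambda$ \emph{toward} the optimum for large $\lambda$, not the uniform $+\varepsilon_0$ you claim, and the Negative Drift Theorem is not applicable. This is precisely why the paper uses the \emph{quadratic} penalty $h_4(\lambda)=-K_4\log_F^2(\lambda F)$ (Definition~\ref{def:h_negative}): a success ($\lambda\mapsto\lambda/F$) then gains $K_4(2\log_F\lambda+1)=\Theta(\log\lambda)$, matching the $\ZM$-loss order for order (Lemma~\ref{lem:drift_H4} against Lemma~\ref{lem:drift_Z_inefficient_1}), at the price of an unbounded $|G_4^t-Z^t|$, which the paper then controls by showing $\lambda^t\le n^2/F$ for $e^{n}$ generations with overwhelming probability.

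A secondary but important point: the theorem quantifies over \emph{every} dynamic monotone function, so \OneMax itself is an instance; you do not get to choose an adversarial $f^t$ that ``prefers the offspring with the most $1$-flips''. The paper's Lemma~\ref{lem:drift_Z_inefficient_2} avoids any such construction via the pointwise bound $Z^{t+1}-Z^t\ge M-N$, where $M$ is the minimum number of one-bit flips over the offspring and $N$ the maximum number of zero-bit flips; this holds for whichever offspring is selected, hence for every monotone $f^t$, and with $O(1)$ offspring and $Z^t\le\eps n$ it yields $\EE[M-N]=\Omega(1)$. Your ``extendability'' step is therefore unnecessary, but as described your drift computation would only establish the result for a worst-case adversary rather than for all monotone functions. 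Finally, note that at constant $\lambda$ and $Z^t=\Theta(n)$ the per-generation success probability is a constant bounded away from $0$, not close to the target $1/(s+1)$; successes are frequent (which is what drives $\lambda$ down), and the paper's Lemma~\ref{lem:drift_H4} exploits $\Pr[f(x^{t+1})>f(x^t)]\ge\Pr[B]/3$ rather than treating successes as rare.
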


\begin{definition}[Potential function for negative result]
    \label{def:h_negative}
    Given $F$, we define 
    \[
        h_4(\lambda) := - K_4 \log_F^2 (\lambda F)  =-K_4 \cdot (\log_F(\lambda) + 1)^2
    \] 
    with $K_4$ a positive constant to be chosen later. 
    As before, we define the potential function to be the sum of \(\ZM(x)\) and \(h_4(\lambda)\): 
    \begin{align*}
        g_4(x, \lambda) = \ZM(x) + h_4(\lambda).
    \end{align*}
\end{definition}
As usual, we set $G_4^t:= g_4(x^t, \lambda^t)$, $H_4^t := h_4(\lambda^t)$ and $Z^t := \ZM(x^t)$. Contrary to the previous sections, we now are now aiming to show that the difference \(G_4^{t+1} - G_4^t\) is positive in expectation. (Note the switched order of~$t+1$ and $t$.) This will require approaches slightly different from the ones we used so far.

The theorem will be proved using the following lemmas. Recall from Section~\ref{sec:basic} the event $B$ that at least one child flips a zero-bit.

\begin{lemma}
    \label{lem:drift_Z_inefficient_1}
    There exists a constant \(\alpha_1 >0\) depending only on \(c\) such that at all times $t$ we have 
    \begin{align*}
        \EE [ Z^{t+1} - Z^{t} \mid x, \lambda ] \ge - \Pr[B] \alpha_1(1 + \log \lambda).
    \end{align*}
\end{lemma}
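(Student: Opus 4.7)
The plan is to reduce to a maximum-of-binomials calculation. For each offspring $j \in [\lfloor\lambda\rceil]$ let $N^j$ denote the number of zero-bits of $x^t$ that the offspring $y^{t,j}$ flips to one, and set $M := \max_j N^j$. If $K$ denotes the index of the fittest child, then $x^{t+1} = y^{t,K}$ agrees with $x^t$ on every one-bit that is not flipped, so $Z^t - Z^{t+1} \le N^K \le M$. Consequently it suffices to show $\EE[M \mid x,\lambda] \le \alpha_1\,\Pr[B](1+\log\lambda)$ for a constant $\alpha_1 = \alpha_1(c) > 0$, which then yields $\EE[Z^{t+1} - Z^t \mid x,\lambda] \ge -\EE[M\mid x,\lambda] \ge -\alpha_1\Pr[B](1+\log\lambda)$.

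The tail-sum identity together with $\{M \ge 1\} = B$ gives
\[
\EE[M \mid x,\lambda] \;=\; \Pr[B] + \sum_{k\ge 2}\Pr[M \ge k].
\]
Each summand admits two natural upper bounds: the trivial $\Pr[M\ge k] \le \Pr[B]$ (since $\{M\ge k\} \subseteq B$), and the union bound $\Pr[M\ge k] \le \lfloor\lambda\rceil\Pr[N^1 \ge k] \le \lfloor\lambda\rceil p^k/k!$, where $p := cZ^t/n \le c \le 1$. The plan is to combine them through a case distinction on the quantity $\lfloor\lambda\rceil p$, which also governs $\Pr[B]$.

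When $\lfloor\lambda\rceil p \le 1$, Lemma~\ref{lem:basic} gives $\Pr[B] \ge \tfrac12 \lfloor\lambda\rceil p$, while a short geometric estimate yields $\sum_{k\ge 2}\lfloor\lambda\rceil p^k/k! \le \tfrac{e}{2}\lfloor\lambda\rceil p^2 \le e\Pr[B]$, so $\EE[M] = O(\Pr[B])$ already without needing any $\log\lambda$ factor. When $\lfloor\lambda\rceil p > 1$ instead $\Pr[B] \ge 1 - 1/e$ is a positive constant; in that regime I would use the trivial bound $\Pr[M\ge k] \le \Pr[B]$ for the range $2 \le k \le k^\star := \lceil 2e\log\lambda\rceil$, contributing $O(\Pr[B]\log\lambda)$, and Stirling's $k! \ge (k/e)^k$ together with $p\le 1$ for the tail $\sum_{k>k^\star}\lfloor\lambda\rceil(e/k)^k = O(1) = O(\Pr[B])$.

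The only delicate point is the balancing of the two upper bounds on $\Pr[M \ge k]$: neither used alone is tight enough, the first losing a factor of $\lambda$ when $\lfloor\lambda\rceil p$ is small, the second losing a factor of $1/\Pr[B]$ when $\lfloor\lambda\rceil p$ is large. The case split above applies each estimate only in its favourable regime, and together they deliver $\EE[M\mid x,\lambda] \le \alpha_1\Pr[B](1+\log\lambda)$ for a suitable $c$-dependent constant $\alpha_1$, completing the proof.
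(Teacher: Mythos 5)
Your proof is correct, but it takes a genuinely different route from the paper's. The paper first writes $\EE[Z^{t+1}-Z^t]\ge \Pr[B]\cdot\EE[Z^{t+1}-Z^t\mid B]$ and then bounds the \emph{conditional} expectation of $N:=\max_j N^j$ given $B$; handling that conditioning requires a positive-correlation (FKG-type) step to pass from conditioning on $B$ to conditioning on all $B^j$, followed by the same leftmost-flipped-bit coupling as in Claim~\ref{clm:conditional_drift_2} to dominate $N^j\mid B^j$ by $1+\mathrm{Bin}(n,c/n)$ and a Chernoff bound. You avoid conditioning altogether: you bound $\EE[M]$ unconditionally by a tail sum, extract the factor $\Pr[B]$ through the identity $\{M\ge k\}\subseteq\{M\ge 1\}=B$ on one side and through the lower bound $\Pr[B]\ge\tfrac12\lfloor\lambda\rceil p$ on the other, and balance the two via the case split on $\lfloor\lambda\rceil p$. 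This is more elementary (only a union bound and the binomial tail $p^k/k!$ are needed) and, since $\EE[M]=\Pr[B]\,\EE[M\mid B]$, it proves the same inequality; the $1+\log\lambda$ factor arises in both arguments from taking a maximum of $\lfloor\lambda\rceil$ variables with exponentially decaying tails. The only place where your sketch is slightly terse is the tail estimate $\sum_{k>k^\star}\lfloor\lambda\rceil (e/k)^k=O(1)$ in the regime $\lfloor\lambda\rceil p>1$ with $\lambda$ bounded (where $k^\star$ may be as small as $1$ or $2$); there the geometric-decay argument via $e/k\le 1/(2\log\lambda)$ does not apply, but since $\lfloor\lambda\rceil=O(1)$ and $\sum_{k\ge2}(e/k)^k$ converges, the claim still holds, so this is an expository gap rather than a mathematical one.
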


\begin{lemma}
    \label{lem:drift_Z_inefficient_2}
    There exist constants \(\varepsilon, \alpha_2 > 0\) depending only on \(c, F\) such that if \(Z^t \le \varepsilon n\) and \(\lambda \le F\), then 
    \begin{align*}
        \EE [ Z^{t+1} - Z^{t} \mid x, \lambda ] \ge \alpha_2.
    \end{align*}
\end{lemma}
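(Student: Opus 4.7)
The plan is to exploit two facts: under $\lambda \le F$, the number of offspring $M := \lfloor\lambda\rceil$ is bounded by a constant depending only on $F$; and under $Z^t \le \varepsilon n$, the parent has mostly one-bits, so each offspring is very likely to flip at least one of them (losing ground) while unlikely to flip any zero-bit (gaining ground). Writing $N^j_0$ and $N^j_1$ for the number of zero-bits and one-bits of $x^t$ flipped by the $j$-th offspring, and $K$ for the (random) index of the selected one, we have $Z^{t+1}-Z^t = N^K_1 - N^K_0$. I will bound these two terms separately, in a way insensitive to how $K$ is chosen and hence robust to the adversary's choice of monotone $f$.

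For the upper bound on the ``good'' term, $N^K_0 \le \sum_{j=1}^M N^j_0$ deterministically and each zero-bit is flipped independently with probability $c/n$, so $\EE[N^K_0 \mid x,\lambda] \le Mc Z^t/n \le Mc\varepsilon$. For the lower bound on the ``bad'' term, recall $\bar A = \bigcap_{j=1}^M \bar A^j$ from Section~\ref{sec:basic}, with $\bar A^j$ the event that the $j$-th offspring flips at least one one-bit. Then $\bar A$ implies $N^j_1 \ge 1$ for every $j$, hence $N^K_1 \ge 1$ irrespective of $K$, yielding $\EE[N^K_1 \mid x,\lambda] \ge \Pr[\bar A \mid x,\lambda]$. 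The events $\bar A^j$ are independent across offspring and each satisfies $\Pr[\bar A^j] = 1-(1-c/n)^{n-Z^t} \ge 1-e^{-c(1-\varepsilon)}$ by Lemma~\ref{lem:basic} (using $Z^t \le \varepsilon n$), so $\Pr[\bar A] \ge (1-e^{-c(1-\varepsilon)})^M$, which is a positive constant provided, say, $\varepsilon \le 1/2$.

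Combining the two estimates gives
\[
    \EE[Z^{t+1}-Z^t \mid x,\lambda] \ge (1-e^{-c(1-\varepsilon)})^M - Mc\varepsilon,
\]
and choosing $\varepsilon = \varepsilon(c,F)>0$ small enough that $Mc\varepsilon \le \tfrac12(1-e^{-c/2})^M$ yields the lemma with $\alpha_2 := \tfrac12(1-e^{-c/2})^M > 0$. The only subtlety is that $K$ is chosen by the adversary through the monotone fitness $f$, so any bound that depends on $K$ being ``typical'' could fail; the deterministic inequality $N^K_0 \le \sum_j N^j_0$ and the implication $\bar A \Rightarrow N^K_1 \ge 1$ circumvent this entirely, as both hold for every $K \in [M]$.
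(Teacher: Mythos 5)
Your proof is correct and follows essentially the same route as the paper's: the paper bounds $Z^{t+1}-Z^t \ge \min_j M^j - \max_j N^j$ where $M^j, N^j$ count one-bit and zero-bit flips in the $j$-th offspring, lower-bounds $\EE[\min_j M^j]$ by $\Pr[\min_j M^j \ge 1] = (1-(1-c/n)^{n-Z^t})^{\round{\lambda}}$ (which is exactly your $\Pr[\bar A]$), and upper-bounds $\max_j N^j$ by $\sum_j N^j$ exactly as you do. The only cosmetic difference is that you phrase the selection-independence explicitly via the index $K$, while the paper absorbs it into the min/max bounds; both yield the same constant $\alpha_2 = (1-e^{-(1-\varepsilon)c})^{\round{F}} - \varepsilon c \round{F}$ up to the harmless factor of $\tfrac12$ in your choice.
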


\begin{lemma}
    \label{lem:drift_H4}
    Assume that \(s \ge 1 \ge c\). 
    At all times \(t\) with $Z^t >0$ we have
    \begin{align*}
        \EE [ H^{t+1} - H^{t} \mid x, \lambda ] \ge \frac{1}{3} \Pr[B] K_4 (1 + \log_F \lambda) \indicator{\lambda \ge F} - \frac{3}{s} K_4 (1 + \log_F \lambda)
    \end{align*}
\end{lemma}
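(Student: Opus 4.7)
The plan is to condition on whether the generation is successful ($f(x^{t+1})>f(x^t)$) or not, observe that the resulting change $H^{t+1}-H^t$ is then deterministic in $\lambda$, and relate the success probability to $\Pr[B]$ via a concavity bound. Writing $L := \log_F \lambda \ge 0$, we have $H^t = -K_4(L+1)^2$.

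First I would dispatch three elementary cases. On failure, $\lambda \mapsto \lambda F^{1/s}$ so $L\mapsto L+1/s$; a direct expansion gives $H^{t+1}-H^t = -\tfrac{K_4}{s}\bigl(2(L+1)+1/s\bigr)$, which for $s\ge 1$ and $L\ge 0$ is at least $-3K_4(L+1)/s$. On success with $\lambda\ge F$ (equivalently $L\ge 1$), $L\mapsto L-1$ so $H^{t+1}-H^t = K_4(2L+1) \ge K_4(L+1)$. On success with $\lambda<F$, the cap at $\lambda=1$ forces $L\mapsto 0$, so $H^{t+1}-H^t = K_4 L(L+2)\ge 0$, a non-negative contribution that I will simply discard.

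The main step, and where I expect the only real argument, is the bound $\Pr[\mathrm{success}]\ge \tfrac{1}{3}\Pr[B]$. Since $f$ is monotone, any offspring satisfying $A^{t,j}\cap B^{t,j}$ is strictly better than $x^t$, so $\bigcup_j (A^{t,j}\cap B^{t,j})$ implies success. Setting $p:=\Pr[A^{t,1}]$, $q:=\Pr[B^{t,1}]$, and $k:=\round{\lambda}$, the mutual independence of $\{A^{t,j}, B^{t,j}\}_j$ yields
\[
\Pr[\mathrm{success}] \ge 1 - (1-pq)^{k} \ge p\bigl(1-(1-q)^{k}\bigr) = p\cdot\Pr[B],
\]
where the second inequality uses the concavity of $\phi(x)=1-(1-x)^k$ on $[0,1]$ together with $\phi(0)=0$, yielding $\phi(pq)\ge p\,\phi(q)$. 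Since $c\le 1$, Lemma~\ref{lem:basic} gives $p = (1-c/n)^{n-Z^t}\ge (1-c/n)^n \ge e^{-c}(1-o(1)) > 1/3$ for $n$ large enough.

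Finally, combining the three case-computations through the law of total expectation,
\[
\EE[H^{t+1}-H^t\mid x,\lambda] = \Pr[\mathrm{success}]\cdot\EE[H^{t+1}-H^t\mid \mathrm{success}] + \Pr[\mathrm{fail}]\cdot\EE[H^{t+1}-H^t\mid \mathrm{fail}],
\]
I would discard the non-negative $\lambda<F$ success contribution, apply $\Pr[\mathrm{success}]\ge \tfrac{1}{3}\Pr[B]$ to the positive term, and use $\Pr[\mathrm{fail}]\le 1$ on the negative one to obtain the claimed inequality. I expect no further obstacle beyond the concavity-based lower bound on $\Pr[\mathrm{success}]$.
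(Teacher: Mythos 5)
Your proposal is correct and follows essentially the same route as the paper: condition on success versus failure, compute the deterministic change of $H_4$ in each case (discarding the non-negative $\lambda<F$ success contribution), and reduce the claim to $\Pr[\mathrm{success}]\ge e^{-c}\Pr[B]\ge\Pr[B]/3$. The only difference is in that last sub-step, where the paper conditions on $B$ and uses independence of the one-bit flips of a witnessing child, while you use the concavity bound $1-(1-pq)^k\ge p\,(1-(1-q)^k)$; both are valid and yield the same constant.
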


\begin{proof}[Proof of Lemma~\ref{lem:drift_Z_inefficient_1}]
    The event $\bar B$ implies $\supp(x^{t+1}) \subseteq \supp(x^{t})$ and thus $Z^{t+1} -Z^t \ge 0$. Hence, $\EE [ Z^{t+1} - Z^{t} \mid \bar B ] \ge 0$. By the law of total probability, we may thus bound
    \begin{align}\label{eq:lem:drift_Z_inefficient_11}
         \EE [ Z^{t+1} - Z^{t}] & = \Pr[B] \cdot \EE [ Z^{t+1} - Z^{t}\mid B] + \Pr[\bar B] \cdot \EE [ Z^{t+1} - Z^{t}\mid \bar B] \nonumber\\
         & \ge \Pr[B]\cdot \EE[Z^{t+1}-Z^t \mid B].
    \end{align}
    To bound the conditional expectation, let $N^j$ be the number of zero-bits flipped by the $j$-th individual, and let $N := \max_j\{N^j\}$. We have \(Z^{t+1}-Z^t \ge -N\), so we would like to bound $\EE[-N \mid B]$. The events $B^j$ are positively correlated with the event $N\ge z$, for every $z \ge 1$. Therefore,
    \begin{align*}
        \Pr[N \ge z\mid B] & \le \Pr[N \ge z \mid B,B^1,\ldots,B^{\lfloor \lambda \rceil}] = \Pr[N \ge z \mid B^1,\ldots,B^{\lfloor \lambda \rceil}] \\
        & = 1-\prod_{j=1}^{\lfloor \lambda \rceil}(1-\Pr[N^j \ge z \mid B^j]).
    \end{align*}
    As in the proof of Claim~\ref{clm:conditional_drift_2}, we can couple the one-bit flips in $y^j$ given $B^j$ by first sampling the position $l$ of the left-most one-bit flip, and then flipping all bits to the right of $l$ independently with probability $c/n$. Since there are less than $n$ positions to the right of $l$, this shows that $N^j$ is dominated by~$1+N'$,  where $N'$ follows a $\text{Bin}(n,c/n)$ distribution. In particular, by the Chernoff bound, Theorem~\ref{thm:Chernoff}, $\Pr[N^j \ge z \mid B^j] \le \Pr[N' \ge z-1] \le e^{- \alpha_0 (z-1)}$ for a constant $\alpha_0$ that only depends on $c$. Hence,
    \begin{align*}
        \Pr[N \ge z \mid B] 
        & \le 1- \prod_{j=1}^{\lfloor \lambda \rceil}(1-\Pr[N^j \ge z \mid B^j]) \le 1-(1-e^{-\alpha_0(z-1)})^{\lfloor \lambda \rceil}\\
        &\le \min\{1,\lfloor \lambda \rceil e^{-\alpha_0(z-1)}\},
    \end{align*}
    and 
    \begin{align*}
        \EE[N \mid B] 
        & = \sum_{z=1}^{\infty} \Pr[N\ge z \mid B]
        \le \sum_{z=1}^{\infty} \min\{1,\lfloor \lambda \rceil e^{-\alpha_0(z-1)}\} \\
        & \le 1+\log\lfloor\lambda \rceil + \sum_{z=\log\lfloor\lambda \rceil + 1}^{\infty} \lfloor \lambda \rceil e^{-\alpha_0(z-1)} \le \alpha_1(1+\log \lambda)
    \end{align*}
    for a suitable constant $\alpha_1 >0$. Combining this with~\eqref{eq:lem:drift_Z_inefficient_11}, we obtain 
    \begin{align*}
         \EE [ Z^{t+1} - Z^{t}] & \ge \Pr[B]\cdot \EE[Z^{t+1}-Z^t \mid B] \\
         & \ge \Pr[B]\cdot \EE[-N \mid B] \ge - \Pr[B]\alpha_1(1+\log \lambda),
    \end{align*}
    as desired.
\end{proof}

\begin{proof}[Proof of Lemma~\ref{lem:drift_Z_inefficient_2}]
    For all \(j \in [\round{\lambda}]\), let us denote by \(M^j\) the number of one-bits flipped by the \(j\)-th offspring and \(M = \min_j M^j\). We also define \(N^j\) as the number of zero-bits flipped by the \(j\)-th child and let \(N = \max_j N^j\). 
    
    Clearly, \(Z^{t+1} - Z^{t} \ge M - N\), so it suffices to prove that \(\EE \left[ M - N \right] \ge \alpha_2 \) for some constant \(\alpha_2\). 
    
    Observe that \(M\) is the minimum of \(\round{\lambda} \le \round F\) i.i.d.\ random variables following a binomial distribution \(\mathrm{Bin}( n - Z^t, c/n )\). In particular, 
    \begin{align*}
        \Pr[M \ge 1] 
            &= \left(1 - (1 - c/n)^{(n - Z^t)}\right)^{\round{\lambda}} \\
            &\ge \left( 1 - e^{ -(1 - \varepsilon)c } \right)^{\round{F}},
    \end{align*}
    since \(Z^t \le \varepsilon n\). From this we deduce that \(\EE\left[ M \right] \ge ( 1 - e^{ -(1 - \varepsilon)c } )^{\lfloor{F}\rceil} = \Omega(1)\).
    
    Observe now that \(N \le \sum_{j} N^j\). Since each \(N^j\) follows a binomial distribution \(\mathrm{Bin}(Z^t, c/n)\) the expected value of \(N\) is at most 
    \begin{align*}
        \EE \left[ N \right] \le \round{\lambda} Z^t c/n \le \varepsilon c \round{F}.
    \end{align*}
    Choosing \(\varepsilon\) small enough and \(\alpha_2 = \left( 1 - e^{ -(1 - \varepsilon)c } \right)^{\round{F}} - \varepsilon c \round{F}\) gives the result.
\end{proof}

\begin{proof}[Proof of Lemma~\ref{lem:drift_H4}]
    Conditioned on \(f(x^{t+1}) \le f(x^{t})\) we have
    \begin{align*}
        H_4^{t+1} - H_4^{t}
            &= - K_4 \log_F^2 (\lambda F^{1 + 1/s}) + K_4 \log_F^2 (\lambda F)\\
            &= - K_4 \left( \log_F^2 (\lambda F) + 2 \log_F (\lambda F) / s + 1/s^2 - \log_F^2 (\lambda F) \right) \\
            &= - \frac{1}{s} K_4 \left( 2 \log_F \lambda + 2 + 1/s \right) \\
            &\ge - \frac{3}{s} K_4 \left( 1 + \log_F \lambda\right),
    \end{align*}
    since \(s \ge 1\).
    
    If we now condition on \(f(x^{t+1}) > f(x^t)\) and assume \(\lambda \ge F\), we have 
    \begin{align*}
        H_4^{t+1} - H_4^{t}
            &= - K_4 \log_F^2 (\lambda) + K_4 \log_F^2 (\lambda F) \\
            &= K_4 \left( 2\log_F \lambda + 1 \right) \\
            &\ge K_4 ( 1 + \log_F \lambda).
    \end{align*}
    We observe that \(h_4\) is decreasing with \(\lambda\), so when \(\lambda < F\) we may simply lowerbound the drift by \(0\).
    
    The law of total probability then gives 
    \begin{align*}
        \EE \left[ H^{t+1}_4 - H^t_4 \right]
            &\ge \Pr[ f(x^{t+1}) > f(x^t) ] K_4 ( 1 + \log_F \lambda) \indicator{\lambda \ge F} \\
            & \qquad \qquad \qquad - \Pr[f(x^{t+1}) \le f(x^t)] \frac{3}{s} K_4 (1+\log_F \lambda).
    \end{align*}
    Clearly \(\Pr[ f(x^{t+1}) \le f(x^t) ] \le 1\), so to obtain the result is suffices to prove that \(\Pr[ f(x^{t+1}) > f(x^t) ] \ge \Pr[B] / 3 \). 
    
    Recall that \(B\) is the event that some offspring flips a zero-bit of the parent into a one-bit at time \(t\). Assume that \(B\) holds, and let \(j \in [\round{\lambda}]\) be the index of a child which flips a zero-bit of the parent. Clearly, if \(y^j\) flips no one-bit of \(x^t\) into a zero-bit, then \(f(y^j) > f(x)\) and the fitness increases at step \(t\). The event that \(y^j\) flips no 
    one-bit is independent of \(B\) and has probability \((1 - c/n)^{n - Z^t} \ge e^{-c}\). In particular, this implies that \(\Pr[ f(x^{t+1}) > f(x^t) ] \ge e^{-c} \Pr[B] \ge \Pr[B] / 3\) since \(c \le 1\).
\end{proof}

\begin{corollary}\label{cor:non-efficient}
    For all $0 < c\le 1 < F$ and every sufficiently small $\eps >0$ there exists \(s_1 >0\) such that for all \(s \ge s_1\) the following holds. There exists a constant \(\delta > 0\) such that if \(\varepsilon n/2 \le Z^t \le \varepsilon n\) then 
    \begin{align*}
        \EE \left[ G^{t+1}_4 - G^{t}_4 \mid x, \lambda \right] \ge \delta.
    \end{align*}
\end{corollary}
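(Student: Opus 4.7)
My plan is to combine the three preceding lemmas via a case distinction on the value of \(\lambda\), choose the constant \(K_4\) to balance the negative \(Z\)-drift against the positive \(H_4\)-drift, and then take \(s_1\) large enough to win in both cases. Since \(G_4 = Z + H_4\), I will simply add the \(Z\)- and \(H_4\)-drift bounds in each case. Note that the algorithm maintains \(\lambda \ge 1\), so the two cases \(\lambda < F\) and \(\lambda \ge F\) cover all possibilities.

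\textbf{Case 1: \(1 \le \lambda < F\).} Here Lemma~\ref{lem:drift_Z_inefficient_2} applies (since \(Z^t \le \varepsilon n\) by hypothesis and \(\varepsilon\) is picked below the threshold of that lemma), giving \(\EE[Z^{t+1}-Z^t\mid x,\lambda] \ge \alpha_2\). In Lemma~\ref{lem:drift_H4}, the positive term vanishes because \(\indicator{\lambda \ge F}=0\), and the negative term is bounded by \(-6K_4/s\) since \(1+\log_F\lambda \le 2\). Thus the total drift is at least \(\alpha_2 - 6K_4/s\), which is positive once \(s \ge 12K_4/\alpha_2\).

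\textbf{Case 2: \(\lambda \ge F\).} Here I rely on the positive \(H_4\)-term of Lemma~\ref{lem:drift_H4} to absorb the negative \(Z\)-drift from Lemma~\ref{lem:drift_Z_inefficient_1}. A short change-of-base check gives the elementary inequality \(1+\log\lambda \le C_F(1+\log_F\lambda)\) for all \(\lambda\ge 1\), with \(C_F:=\max\{1,\log F\}\) (split on whether \(\log F\ge 1\)). Adding the two lemmas then yields
\[
\EE\bigl[G_4^{t+1}-G_4^{t}\mid x,\lambda\bigr] \ge (1+\log_F\lambda)\Bigl[\Pr[B]\bigl(\tfrac{K_4}{3} - C_F\alpha_1\bigr) - \tfrac{3K_4}{s}\Bigr].
\]
Setting \(K_4:=6C_F\alpha_1\) makes the parenthesis equal to \(C_F\alpha_1\), and the bound simplifies to \(C_F\alpha_1(1+\log_F\lambda)\bigl(\Pr[B]-18/s\bigr)\). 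Since \(\lambda \ge F\) and \(Z^t \ge \varepsilon n/2\), Lemma~\ref{lem:properties_AB} yields \(\Pr[B] \ge 1 - e^{-b_3F\varepsilon/2}=:p_0 > 0\). Taking \(s \ge 36/p_0\) therefore makes the drift at least \(C_F\alpha_1\, p_0/2=:\delta\).

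Defining \(s_1\) as the maximum of \(1\), \(12K_4/\alpha_2\), and \(36/p_0\) (the first ensures the hypothesis \(s\ge 1\) of Lemma~\ref{lem:drift_H4}) finishes the argument with \(\delta := \min\{\alpha_2/2,\ C_F\alpha_1 p_0/2\}\), a constant depending only on \(c,F,\varepsilon\). The proof is essentially a direct combination of prepared ingredients, so I do not expect a conceptual obstacle; the main bookkeeping hurdles are (i) the uniform base-change from \(\log\) to \(\log_F\) in Case~2, which must behave well for both \(F\) close to \(1\) and \(F\) large, and (ii) tracking the order in which the constants \(\varepsilon\), \(K_4\), and \(s_1\) are fixed so that none of them depends on a later choice.
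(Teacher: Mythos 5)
Your proposal is correct and follows essentially the same route as the paper's proof: the same case split on $\lambda < F$ versus $\lambda \ge F$, the same combination of Lemmas~\ref{lem:drift_Z_inefficient_1}, \ref{lem:drift_Z_inefficient_2} and \ref{lem:drift_H4}, and the same choice of $K_4$ as a constant multiple of $\alpha_1$ large enough that the positive $H_4$-term dominates the negative $Z$-drift (your explicit base-change constant $C_F$ is just a spelled-out version of the paper's implicit ``choose $K_4$ large enough that $\alpha_1(1+\log\lambda)\le K_1(1+\log_F\lambda)/12$''). The ordering of the constants $\varepsilon$, $K_4$, $s_1$, $\delta$ is handled correctly, so there is nothing to add.
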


\begin{proof}
    We take $\eps>0$ so small that Lemma~\ref{lem:drift_Z_inefficient_2} is applicable, and let \(\alpha_1, \alpha_2\) be the other constants from Lemmas~\ref{lem:drift_Z_inefficient_1} and \ref{lem:drift_Z_inefficient_2}.
    
    We will show that for a sufficiently large (but constant) \(s\), the drift of \(G^t_4\) is at least a constant \(\delta\) when \(Z^t \in [ \varepsilon n/2, \varepsilon n ]\). We distinguish on whether \(\lambda\) is small or not.
    
    \textit{If \(\lambda \ge F\),} then Lemmas~\ref{lem:drift_Z_inefficient_1} and \ref{lem:drift_H4} combine into 
    \begin{align*}
        \EE [ G^{t+1}_4 - G^t_4 \mid x, \lambda ]
            &\ge \Pr[B] \left( \tfrac{1}{3}K_4(1 + \log_F \lambda) - \alpha_1(1 + \log \lambda) \right) \\
                &\qquad \qquad \qquad- \tfrac{3}{s}K_4(1 + \log_F \lambda).
    \end{align*}
    We choose \(K_4\) large enough so that \(\alpha_1 (1 + \log \lambda) \le K_4(1 + \log_F \lambda) / 12 \) for all \(\lambda\). The drift is then 
    \begin{align}\label{eq:cor:non-efficient}
        \EE [ G^{t+1}_4 - G^t_4 \mid x^t, \lambda^t ]
            &\ge \tfrac{1}{4} \Pr[B] K_4(1 + \log_F \lambda) - \tfrac{3}{s}K_4(1 + \log_F \lambda) \nonumber\\
            &=K_4(1 + \log_F \lambda) \left( \tfrac{1}{4} \Pr[B] - \tfrac{3}{s} \right).
    \end{align}
    Recall that Lemma~\ref{lem:properties_AB} guarantees that \(\Pr[B] \ge 1 - e^{-b_3 Z^t \lambda / n}\) for some positive constant \(b_3\). In particular, since \(Z^t \ge \varepsilon n / 2\), 
    \[
        \Pr[B] \ge 1 - e^{-b_3 \varepsilon \lambda / 2} \ge 1 - e^{-b_3 \varepsilon / 2}
    \]
    is at least a constant. For a choice of \(s\) larger than $24(1-e^{-b_3\eps/2})$, the bound~\eqref{eq:cor:non-efficient} is at least \(\EE [ G^{t+1}_4 - G^t_4 \mid x, \lambda ] \ge \delta\) for some constant \(\delta > 0\).

    \textit{If \(\lambda < F\),} then Lemmas~\ref{lem:drift_Z_inefficient_2} and \ref{lem:drift_H4} guarantee that 
    \begin{align*}
        \EE [ G^{t+1}_4 - G^t_4 \mid x, \lambda ]
            &\ge \alpha_2 - \frac{3}{s} K_4(1 + \log_F \lambda) \\
            &\ge \alpha_2 - \frac{6}{s} K_4.
    \end{align*}
    For a choice of \(s\) large enough this is at least \(\delta\), for some constant \(\delta > 0\). 
    \end{proof}

We are now ready to prove the main theorem of this section. Essentially, it follows from Corollary~\ref{cor:non-efficient} and the Negative Drift Theorem~\ref{thm:NegativeDriftWithScaling}. However, compared to the other sections, there is a slight complication since the difference $|G_4^t-Z^t|= K_4 \log^2_F(\lambda F)$ is not bounded. 
However, we will prove that with overwhelming probability the difference does not grow larger than~$K_4\sqrt{n}$.

\begin{proof}[Proof of Theorem~\ref{thm:Inefficient}]
Let $\Lambda := n^2/F$ and let $T$ be the first point in time when $Z^t \le \eps n/2$. We first show that with overwhelming probability, we have $\lambda^t \le \Lambda$ for all~$1\le t\le \min\{T,e^{n}\}$. Indeed, to obtain some $\lambda > \Lambda$, it would be necessary to have a step with $\lambda > \Lambda F^{-1/s}$ that does not improve the fitness. If this were to happen before time $T$, it must happen in a step with $Z^t \ge \eps n/2$. By Lemma~\ref{lem:properties_AB}, the probability to have a non-improving step is $e^{-\Omega(\lambda)}$. By a union bound, the probability that such a step happens before time $e^n$ is at most $e^{n-\Omega(\lambda)} = o(1)$. Hence, w.h.p.\! $\lambda^t \le \Lambda$ for all $1\le t\le \min\{T,e^{n}\}$. Note that in this case we have $|G_4^t-Z^t| \le 4K_4 \log_F^2n$, so in particular, $G_4^t > 4 K_4 \log_F^2 n$ implies that $Z^t >0$ for $\lambda \le \Lambda$. 

In the following, we will apply the Negative Drift Theorem~\ref{thm:NegativeDriftWithScaling} to $G_4^t$. The drift condition is satisfied by Corollary~\ref{cor:non-efficient} whenever $Z^t\in [\eps n/2, \eps n]$, which is implied whenever $G_4^t \in [\eps n/2 + 4K_4\log_F^2 n, \eps n]$ and $\lambda\le \Lambda$. 

For the step size condition, let $L^j$ denote the total number of bits flipped in $y^j$, and $L := \max\{L^j\}_j$. Since $L^j$ follows a $\text{Bin}(n,c/n)$ distribution, by the Chernoff bound, Theorem~\ref{thm:Chernoff}, there is a constant $\beta >0$ such that $\Pr[L^j \ge z] \le e^{-\beta z}$ for all $z\ge 0$. Let $r:= 4 K_4 \log_F n/\beta$, and note that we can achieve $|H_4^{t+1}-H_4^t| \le r/2$ when $\lambda \le \Lambda$, by making $\beta >0$ smaller if necessary. Then for all $j\ge 1$,
\begin{align*}
    \Pr[|G_4^{t+1} - G_4^t| \ge j r] 
        & \le \Pr[ |Z^{t+1} - Z^t| \ge (j-1/2)r ]
        \le \Pr[L \ge (j-1/2)r] \\
        &= 1-\big(1-\Pr[L^1 \ge (j-1/2)r]\big)^{\lfloor \lambda \rceil} \\
        & \le 1-(1-e^{-\beta (j-1/2)r})^{\lfloor \lambda \rceil} 
        \le 1-(1-n^{-4 (j-1/2)})^{\Lambda+1} \\
        &\le 1- e^{-\tfrac12(\Lambda+1)n^{-4 (j-1/2)}} 
        = 1-e^{-n^{-\Omega(j)}} \\
        &= n^{-\Omega(j)}\le e^{-j},
\end{align*}
where the last inequality holds for $n$ sufficiently large. Thus the step size condition of Theorem~\ref{thm:NegativeDriftWithScaling} is satisfied, and we obtain that w.h.p.\! $G_4^t \geq \eps n/2 + 4K_4\log_F^2 n$ for $e^{\Omega(n/\log^2 n)}$ steps if $\lambda^t \le \Lambda$ during this time. Since the latter also holds w.h.p., this implies $T= e^{\Omega(n/\log^2 n)}$ w.h.p., which concludes the proof.

\end{proof}


\section{Simulations}\label{sec:simulations}
In this section, we provide simulations that complement our theoretical analysis. The functions optimized in our simulations include \onemax, \binary, \hottopic \cite{lengler2019general}, \BinaryValue, and \dynBV \cite{lengler2020large}, where \binary is defined as
$f(x) = \sum_{i=1}^{\lfloor n/2 \rfloor} x_i n + \sum_{i=\lfloor n/2 \rfloor + 1}^n x_i$, and $\BinaryValue$ is defined as $f(x) = \sum_{i=1}^{n} 2^{i-1} x_i $. The definition of \hottopic can be found in~\cite{lengler2019general}, and we set the parameters to $L=100$, $\alpha=0.25$, $\beta=0.05$, and~$\eps=0.05$. \dynBV is the dynamic environment which applies the \BinaryValue function to a random permutation of the $n$ bit positions, see~\cite{lengler2020large} for its formal definition. In all experiments, we start the \saoclea with a randomly sampled search point and an initial offspring size of $\lambda^{\text{init}} = 1$. The algorithm terminates when the optimum is found or after $500 n$ generations. The code for the simulations can be found at \url{https://github.com/zuxu/OneLambdaEA}.

\subsection{Threshold of $s$}

In Figure~\ref{fig:threshold}, we follow the same setup as in \cite{hevia2021arxiv}, but for a larger set of functions. We observe exactly the same threshold $s=3.4$ for \onemax. For the other monotone functions of our choice, the threshold effect happens before $s=3.4$, which suggests that some hard monotone functions might have a lower allowance for the value of $s$ than \onemax, other than conjec\-tured by Hevia Fajardo and Sudholt in~\cite{hevia2021arxiv}.

\begin{figure}[ht]
    \centering
    \includegraphics[width=0.8\textwidth]{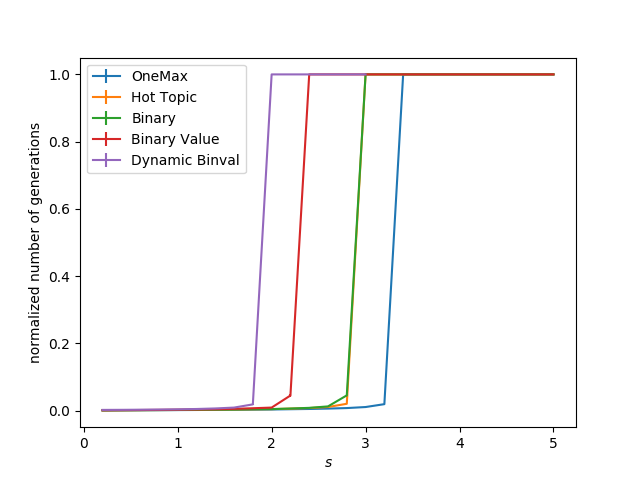}
    \caption{Average number of generations of the self-adjusting \oclea with $F = 1.5$ and $c=1$ in 10 runs when optimizing monotone functions with $n=10000$, normalised and capped at $500 n$ generations. \hottopic and \binary behave similarly, so the corresponding curves have a large overlap. The evaluated values of $s$ range from 0.2 to 5 with a step size of 0.2 for all functions except that \dynBV was not evaluated for $3.2 \le s\le 5$ due to performance issues.}
    \label{fig:threshold}
\end{figure}

\subsection{Effect of $F$}
We have shown that the \saoclea with $c< 1$ optimizes every dyna\-mic monotone function efficiently when $s$ is sufficiently small and is inefficient when $s$ is too large. Both results hold for arbitrary $F$. It is natural to assume that there is a threshold $s_0$ between the efficient and inefficient regime. However, Figure~\ref{fig:fDynamic} below shows that the situation might be more complicated. For this plot, we have first empirically determined an efficiency threshold for $s$ on \dynBV (see Figure~\ref{fig:threshold}), then fixed $s$ slightly below this threshold and systematically varied the value of $F$. For this intermediate value of $s$, we see that there is a phase transition in terms of~$F$. 

Hence, we conjecture that there is no threshold $s_0$ such that the \saoclea is efficient for all $s < s_0$ and all $F>1$, and inefficient for all $s > s_0$ and all $F>1$. Rather, we conjecture that there is `middle range' of values of $s$ for which it depends on the value of $F$ whether the \saolea is efficient. Note that we know from this paper that this phenomenon can \emph{only} occur for a `middle range': both for sufficiently small $s$ (Theorems~\ref{thm:generations}, \ref{thm:num_evaluations_csmall}), and for sufficiently large~$s$ (Theorem~\ref{thm:Inefficient}), the value of $F$ does not play a role.

In general, smaller values of $F$ seem to be beneficial. However, the correlation is not perfect, see for example the dip for $c = 0.98$ and $F=5.5$ in the left subplot of Figure~\ref{fig:fDynamic}. These dips also happen for some other combinations of $s, F$ and $c$ (not shown), and they seem to be consistent, i.e., they do not disappear with a larger number of runs or larger values of $n$ up to $n=5000$. To test whether this is due to the rounding scheme, we checked whether the effect disappears if we round $\lambda$ in each generation stochastically to the next integer; e.g., $\lambda^t = 2.6$ means that in generation~$t$ we create two offspring with probability $40\%$ and three offspring with probability $60\%$. The effect remains, and the runtime still seems to depend on $F$ in a non-monotone fashion, see the right subplot of Figure~\ref{fig:fDynamic}.

The impact of $F$ is visible for all ranges $c < 1$, $c=1$ and $c>1$. For $c=1$ we have only proven efficiency for sufficiently small $F$. However, we conjecture that there is no real phase transition at $c=1$, and the `only' difference is that our proof methods break down at this point. 
For the fixed~$s$, with increasing $c$ the range of $F$ becomes narrower and restricts to smaller values while larger values of $c$ admit a larger range of values for $F$.

\begin{figure}[ht]
    \centering
    \includegraphics[width=\textwidth]{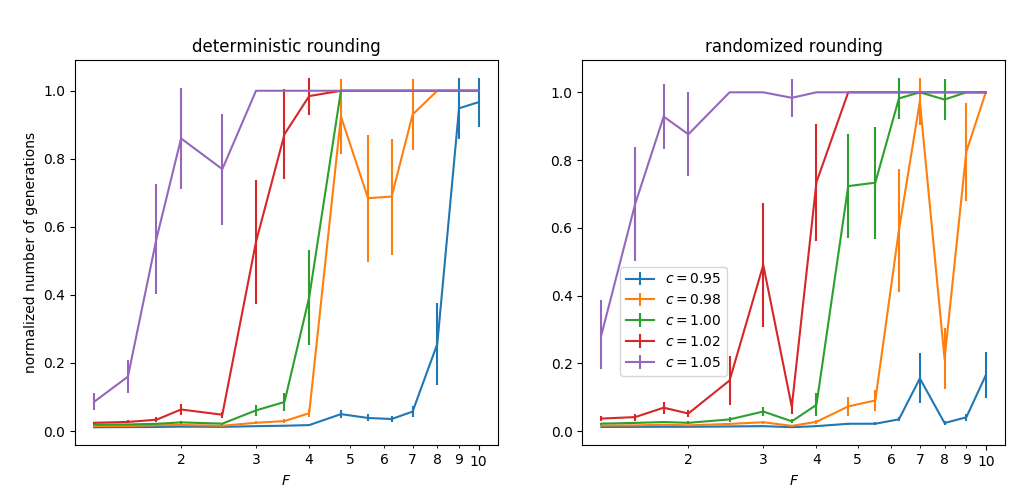}
    \caption{Average number of generations of the self-adjusting \oclea with $s=1.8$ and in 50 runs when optimizing \dynBV with $n=1000$, normalised and capped at $500 n$ generations. The left and right subplots correspond to the deterministic and randomized rounding schemes respectively. The length of each vertical bar is one standard deviation at its corresponding data point.}
    \label{fig:fDynamic}
\end{figure}

\section{Conclusion}

In this paper, we have studied the \saolea on dynamic monotone functions. Hevia Fajardo and Sudholt had shown an extremely strong dependency of the performance on the success rate $s$ for the \onemax benchmark. We have shown that there is nothing specific to \onemax about the situation. The same effect happens for any (static or dynamic) monotone fitness function: for small  values of $s$, the \saolea is efficient on all dynamic monotone functions, while for large values of $s$, the \saolea is inefficient on every dynamic monotone function. In the latter case, the bottleneck is not around the optimum, but rather in some area of linear distance from the optimum. Thus the \saolea is one of the surprising examples showing that some algorithms may fail in easy fitness landscapes, but succeed in hard fitness landscapes.

Hevia Fajardo and Sudholt have conjectured that the problem becomes worse the easier the fitness landscape is. Concretely, they conjectured that any parameter choice that works for \onemax should also give good result for any other landscape~\cite{hevia2021self}. In a companion paper~\cite{kaufmann2022onemax}, we disprove this conjecture, but for an unexpected reason: there are different ways to measure `easiness' of a fitness landscape. While it is theoretically proven that \onemax is the easiest fitness function with respect to decreasing the distance from the optimum~\cite{doerr2012multiplicative}, this is not the aspect that matters for the \saolea. Here, the important aspect is how easy it is to find a fitness improvement, since this may induce too small target population sizes in the \saolea. For finding fitness improvements, there are easier functions than \onemax, for example the dynamic BinVal function~\cite{lengler2020large} or HotTopic functions~\cite{lengler2019general}, see~\cite{kaufmann2022onemax} for details. It remains open to determine the easiest dynamic monotone function $f_{\text{easiest}}$ with respect to fitness improvements. A candidate for $f_{\text{easiest}}$ might be the `adversarial' \textsc{Dynamic BinVal}, which we define as \textsc{Dynamic BinVal} (see Section~\ref{sec:simulations}) with the exception that the permutation is not random but chosen so that any \(0\)-bit is heavier than any~\(1\)-bit. With this fitness function, any \(0\)-bit flip gives a fitter child, regardless of the number of \(1\)-bit flips, so it is intuitively convincing that it should be the easiest function with respect to fitness improvement.

Moreover, the conjecture of Hevia Fajardo and Sudholt might still hold if we replace \onemax by $f_{\text{easiest}}$. I.e., is it true that any parameter choice that works for $f_{\text{easiest}}$ also works for any other dynamic monotone function, and perhaps even in yet more general settings?

Apart from that, the most puzzling part of the picture is the experimental finding that in a `middle regime' of success rates, the update strength $F$ seems to play a role in a non-monotone way (for fixed success rate $s$). It is open to prove theoretically that there is indeed such a `middle regime' where $F$ plays a role at all. For why this effect is non-monotone in $F$, we do not even have a good hypothesis. As outlined in Section~\ref{sec:simulations}, it does not seem to be a rounding effect. This shows that we are still missing important parts of the overall picture.

\subsubsection*{Acknowledgements.} We thank Dirk Sudholt for helpful discussions during the Dagstuhl semi\-nar 22081 ``Theory of Randomized Optimization Heuristics'', and we thank Mario Hevia Fajardo for sharing his code for comparison.

We also thank the two anonymous reviewers for their valuable input and suggestions. Their feedback contributed to improving the quality of this paper, and we are grateful for their time and effort.

 \bibliographystyle{elsarticle-num} 
 \bibliography{cas-refs}





\end{document}